\DeclareMathOperator*{\argmin}{arg\,min}
\newtheorem{theorem}{Theorem}
\newtheorem{lemma}[theorem]{Lemma}
\newtheorem{corollary}[theorem]{Corollary}
\numberwithin{assumption}{section}
\numberwithin{definition}{section}
\numberwithin{theorem}{section}
\numberwithin{remark}{section}
\begin{document}

\title{Learning Mixtures of Experts with EM: A Mirror Descent Perspective}

\author{Quentin Fruytier\thanks{Department of Electrical and Computer Engineering, The University of Texas at Austin, Austin, TX, USA \qquad\qquad\{qdf76@my.utexas.edu, mokhtari@austin.utexas.edu, sanghavi@mail.utexas.edu\}}         \and
        Aryan Mokhtari$^*$         \and
        Sujay Sanghavi$^*$
}

\date{}

\maketitle

\begin{abstract}  
Classical Mixtures of Experts (MoE) are Machine Learning models that involve partitioning the input space, with a separate ``expert" model trained on each partition. Recently, MoE-based model architectures have become popular as a means to reduce training and inference costs. There, the partitioning function and the experts are both learnt jointly via gradient descent-type methods on the log-likelihood. In this paper we study theoretical guarantees of the Expectation Maximization~(EM) algorithm for the training of MoE models. We first rigorously analyze EM for MoE where the conditional distribution of the target and latent variable conditioned on the feature variable belongs to an exponential family of distributions and show its equivalence to projected Mirror Descent with unit step size and a Kullback-Leibler Divergence regularizer. This perspective allows us to derive new convergence results and identify conditions for local linear convergence; In the special case of mixture of $2$ linear or logistic experts, we additionally provide guarantees for linear convergence based on the signal-to-noise ratio. Experiments on synthetic and (small-scale) real-world data supports that EM outperforms the gradient descent algorithm both in terms of convergence rate and the achieved accuracy.
\end{abstract}

\newpage
\newpage


\section{Introduction}
Classical Mixtures of Experts (MoE) \cite{jacobs1991adaptive, JordannJacobs} are a crucial class of parametric latent variable models that have gained significant popularity in deep learning for their ability to reduce both training and inference costs \cite{chen2022understandingmixtureexpertsdeep}. MoE are particularly effective when the feature space can be divided and processed by specialized models, known as experts.
Instead of relying on a single, large model to handle all input-output mappings, MoE utilize an ensemble of specialized experts. Each expert is responsible for a specific subset of the input space, allowing the system to efficiently route inputs to the most appropriate expert. This partitioning enables each expert to focus on mapping its designated inputs to outputs using a separate, optimized model. By leveraging multiple specialized experts rather than a monolithic model, MoE can achieve greater scalability and flexibility. This modular approach not only enhances computational efficiency but also allows for improved performance, as each expert can be finely tuned to handle its particular segment of the input space effectively.
Real-world applications of MoE such as Sparse MoE span across various domains, including language translation, speech recognition, recommendation systems, and more \cite{WilliamFedus2022, Ma2018, 6296526,liu2024deepseek}.

In its most generic form, training  an MoE model involves training both {\em (a)} the parameters in the individual experts, and {\em (b)} the gating function that routes inputs to the appropriate expert. Typically, these are both learnt jointly by first formulating the final loss function as applied to the ensembled output, and then minimizing this joint loss function (via SGD or its variants) over the parameters of the gate and the experts.

In this paper we investigate, primarily from a theoretical perspective, the training of classical MoE as defined by \citet{jacobs1991adaptive} using a classic algorithm: Expectation Maximization~(EM). As opposed to SGD-based methods which are agnostic to whether a parameter is in the gate or in an expert, EM first formulates two separate problems -- one for the router, and another for the experts -- in a specific way. It then solves each problem in isolation and in parallel, and then collates the outputs to arrive at the updated set of gate and expert parameters.

For our theoretical results, we consider the setting of general MoE where the joint distribution of the target and latent variable conditioned on the feature variable belongs to an exponential family of distributions. We then narrow in on two simpler instances of MoE models: Mixture of Linear Experts (where each expert is a linear regressor) and Mixture of Logistic Experts (where each expert is a logistic regressor). The router in each case is a linear softmax. 

{\bf Main Contributions:}
The primary finding of this paper is to unveil the correspondence between EM for general MoE to projected Mirror Descent. A similar correspondence was first discovered by \citet{kunstner2022homeomorphicinvarianceemnonasymptoticconvergence}, but was limited to generative models for which the joint complete data distribution belongs to an exponential family of distributions; this did not include MoE. As such, our contributions can be seen as a generalization of \citep{kunstner2022homeomorphicinvarianceemnonasymptoticconvergence} to all generative models for which the conditional distribution of the target and latent variables conditioned on the feature variable belongs to an exponential family of distributions. We next state the details of our contributions.
\begin{itemize}
    \vspace{-2mm}
    \item[1)] In Theorem \ref{thm:EM is proj MD}, we show that when EM is applied to general MoE, the iterates are equivalent to the ones generated by projected Mirror Descent on the conditional likelihood function with unit step-size and KL divergence. Here, the projection is an expectation moment projection over the parameter space. By leveraging this correspondence, in Theorem \ref{thm: proj MD conv result}, we obtain sufficient conditions for which EM applied to general MoE converges to a stationary point or the true parameters. We further characterize the explicit convergence rate for each of the considered settings.
        \vspace{-2mm}
    \item[2)] Next, in Theorem \ref{thm:EM is MD}, we narrow in on the special cases of mixtures of $2$ linear or logistic experts and show EM is equivalent to Mirror Descent with unit step-size and KL divergence, without requiring any extra projection. This perspective allows us to recover classic MD convergence results which we contextualize for our setting in Corollary \ref{cor:Convergence Prop}. Finally, we characterize the sufficient conditions for convergence in terms of the Missing Information Matrix~(MIM) in Theorem \ref{thm:MIM} and, subsequently, the signal-to-noise ratio~(SNR) of the generative model in Theorem~\ref{thm: Characterization of MIM}.
        \vspace{-2mm}
    \item[3)] Finally, on synthetic and small scale proof of concept real world datasets, we observe that EM outperforms gradient descent both in terms of convergence rate and the achieved performance. As well as supporting our theoretical results, this re-iterates the power of the EM algorithm for fitting MoE previously suggested by  \citep{JordannJacobs, JORDAN19951409}.
\end{itemize}

\subsection{Related Work}

The EM algorithm \citep{EM} is a powerful tool for fitting latent variable models. Previous research on EM has demonstrated that, under mild smoothness assumptions, its parameter iterates converge to a stationary point of the log-likelihood objective \cite{EM, Jeff, Tseng}.
Subsequent research on EM introduced new analytical frameworks to provide specialized guarantees, particularly regarding the convergence of EM iterates to the true model parameters and the rate of this convergence. An adopted framework in the past decade, introduced by \citet{plots}, interprets EM as a variant of the gradient descent algorithm. For latent variable models with a strongly convex EM objective that satisfies a condition known as ``first-order stability," it was shown that EM iterates converge linearly to the true parameters. Subsequent works utilized this framework to show a local linear rate for Mixtures of Gaussians and Mixtures of Linear Regressions \cite{plots, Daskalakis2017EMGaussMix, NEURIPS2018acc21473, Caramanis2019, Caramanis2020GaussMix,Caramanis20212MLR,Caramanis2019kMLR}.

 In a recent work, \citet{kunstner2022homeomorphicinvarianceemnonasymptoticconvergence} proved that EM -- where the complete data distribution belongs in an exponential family -- is equivalent to the mirror descent algorithm with unit step-size and Kullback–Leibler~(KL) divergence regularizer:
\begin{equation}
    \text{KL}\left[p(\bm x;\bm \theta)||p(\bm x,\bm \phi)\right] = \mathbb{E}_{X|\bm \theta}\left[\log\left(\frac{p(\bm x;\bm \theta)}{p(\bm x; \bm \phi)}\right)\right].
\end{equation}
This led to the first non-asymptotic convergence rates for EM, independent of parameterization. While this characterization of EM included Mixtures of Gaussians, it failed to extend to Mixtures of Regression or MoE. Still, the authors analyzed the setting where the distribution of the latent variable conditioned on all other variables is an exponential family distribution for which they showed EM converged sub-linearly to a stationary point. Our work extends these findings to MoE, obtaining sufficient conditions under which EM converges sub-linearly and linearly to the true parameters.

There have also been works attempting to explore and understand how to fit MoE. The foundational paper \citep{JordannJacobs} was the first to empirically use EM and EM-variants to fit MoE, yielding encouraging results. Then, the follow-up paper by \citet{JORDAN19951409} showed that for MoE and Hierarchical MoE with strongly convex negative log-likelihood objective, EM and EM-like iterations converge linearly to the true parameters where the rate constant depends on the eigenvalues of the Hessian matrix. However, the objective is generally non convex, raising doubts about whether the necessary assumptions for their result hold, even locally. Other works have remarked that the nature of the gating function creates a form of competition between the experts during training that can lead to local minima. Bayesian methods for MoE include variational learning and maximum aposteriori (MAP) estimation. But, \citet{MoEReview} noted that these solutions are not trivial due to the softmax gate not admitting a conjugate prior and are prone to getting stuck at local minima. \citet{makkuva2019breakinggridlockmixtureofexpertsconsistent} analyzed a variant of the EM algorithm for MoE which consists in 1) first recovering the expert parameters using a tensor decomposition method, then 2) whilst freezing the experts, utilizing EM to fit the gating function's parameters only. They proved that their approach recovers the true parameters at a \textit{nearly} linear rate. Then, \citet{becker2020expected} experimented with an EM variant algorithm for Gaussian Mixture of Experts called Expectation Information Maximization (EIM) which featured an extra information projection step. They obtained promising empirical results on both synthetic and real world datasets. Our work extends upon previous works by making the direct connection between EM for MoE and projected mirror descent. We also unveil the sufficient conditions for EM to converge sub-linearly to a stationary point or true parameter, and for special cases, characterize these sufficient conditions with respect to the SNR of the generative model. 
\section{Mixture of Experts}
\label{sec: MOE}
Next, we formally describe the setting under consideration for the Mixture of $k$ Experts. The notation used throughout is summarized in the \hyperref[app:Notations]{Notation Section} of the appendix.

\textbf{Data Generation Model:} First, the input or feature vector variable $\bm x \in \mathbb{R}^d$ is sampled based on a  probability density function $p(\bm x)$. Second, given the feature vector $\bm x$, a latent variable $z \in [k]$, responsible for routing $\bm x$ to the appropriate expert is sampled with probability mass function $P(z|\bm x)$.
Finally, given the pair $(\bm x, z)$, the target $y$ is generated according to the probability distribution  $p(y|\bm x,z)$. Hence, the complete data distribution is $ p(\bm x, y,z) =  p(y|\bm x,z) p(z | \bm x)p(\bm x) $ and the joint input-output probability  distribution can be written as 
\begin{equation} \label{eqn:(x,y)dist}
 p(\bm x, y) =p(\bm x) \sum_{z \in [k]} p(y|\bm x,z) P(z | \bm x) .
\end{equation}
This is the most general form of data generation under mixture of $k$-experts, but here we focus on the case where $\bm x$ is sampled from a unit spherical Gaussian distribution, i.e. $\bm x \sim \mathcal{N}(\bm 0, \bm I_d)$, and $P(z|\bm x) = P(z|\bm x;\bm w^*)$ is parameterized by a set of vectors $(\bm w_1^*,\dots, \bm w_k^*) $ and can be cast as 
\begin{equation}
P(z=i | \bm x; \bm w^*) = \frac{e^{\bm x^\top\bm  w_i^*}}{\sum_{j \in [k]} e^{\bm x^\top \bm w_j^*}}, \qquad i\in[k]. 
\end{equation}
In other words, the probability mass function of the latent variable is the softmax function between the inner product of $\bm x$ with the parameter vectors concatenated as $\bm w^* = (\bm w^*_1, ..., \bm w^*_k) \in \mathbb{R}^{d \times k}$.

Regarding $p(y|\bm x,z)$, the probability distribution of the target variable $y$ conditioned on the input variable $x$ and latent routing variable $z\!=\!i$ (i.e., expert $i$), we also assume that it is parameterized by a vector $\bm \beta_i^*$ and we have $p(y|\bm x,z\!=\!i)=p(y|\bm x,z\!=\!i; \bm \beta_i^*)$. Thus, given the concatenated vector $\bm\beta^* = (\bm\beta^*_1, ..., \bm\beta^*_k) \in \mathbb{R}^{s \times k}$, we can write $p(y|\bm x,z)= p(y|\bm x,z; \bm\beta^*)$. 

In this paper, we will consider three different settings. The first, \textit{General MoE}, is the most general setting we consider. It comprises all MoE where the distribution of $y,z$ conditioned on $\bm x$ belongs to an exponential family of distribution and enjoys a natural re-parameterization $p(y,z|\bm x, \bm \theta^*) = p(y,z; \bm \theta_{\bm x}^*)$ with $\bm \theta_{\bm x}^* = \eta(\bm x, \bm \theta^*) \in \tilde{\Omega}$. That is to say that the conditional distribution can be written as
\begin{equation}
    p(y,z|\bm x, \bm \theta^*) \propto \exp\left\{\langle s(y,z), \bm \theta^*_{\bm x}\rangle + A(\bm \theta^*_{\bm x}) \right\},
\end{equation}
where $s(y,z), \bm \theta_{\bm x}$, and $A(\cdot)$ are, respectively, referred to as the sufficient statistic, natural parameterization, and log partition of the exponential family of distributions. This setting includes the popular cases where $p(y|\bm x, z)$ is Gaussian or multivariate Bernoulli. In the former case, the exponential family of distribution corresponds to a Gaussian Mixture.

The second setting, \textit{Mixture of Linear Experts}, is when
\begin{equation}\label{eqn:MoLinE}
    p(y|\bm x,z=i; \bm \beta_i^*) \propto \exp\left\{\frac{(y-\bm x^\top \bm \beta_i^*)^2}{2}\right\}
\end{equation}
as the density function of the normal distribution. This is equivalent to assume that the target $y=\bm x^\top  \bm \beta_i^*+\epsilon$ where $\epsilon$ is additive zero mean Gaussian noise with  unit variance.

Finally, \textit{Mixture of Logistic Experts}, is when the density function $p(y|\bm x,z=i;\bm\beta_i^*)$ can be written as 
\begin{equation}\label{eqn:MoLogE}
P(y=1|\bm x,z=i; \bm \beta_i^*)= \frac{\exp(\bm x^\top \bm \beta_i^*)}{1+ \exp(\bm x^\top \bm \beta_i^*)}
\end{equation}
and $P(y=-1|\bm x,z\!=\!i;\bm\beta_i^*)=1-P(y=1|\bm x,z\!=\!i;\bm\beta_i^*)$.

\textbf{Maximum Likelihood Loss:} Given the assumed data distribution of $(\bm x, y)$, our goal is to find the set of feasible parameters  $\bm \beta \in \mathbb{R}^{d\times k}$ and $\bm w\in \mathbb{R}^{d\times k}$ that maximize the log likelihood function. For ease of notation we define $\bm \theta\!:=\!(\bm \beta,\bm w)\!\in\! \mathbb{R}^{2d\times k}$ as the concatenation of all parameters. Specifically, from \eqref{eqn:(x,y)dist}, the expected log likelihood is
\begin{align*} 
    \mathbb{E}_{\bm X} \left[ \log p(\bm x)\right]+ \mathbb{E}_{\bm X,Y} \left[ \log  \left( \sum_{z \in [k]} p(y|\bm x,z) P(z | \bm x) \right)\right].
\end{align*} 
Given that only the last term of the sum depends on parameters $\bm \theta=(\bm w, \bm\beta)^\top$ the negative log-likelihood objective function, ${\cal{L}}(\bm \theta)$, that we aim to minimize can be written as
\begin{equation}
    \label{eqn: log-lik} 
    -\mathbb{E}_{\bm X,Y} \left[ \log  \left( \sum_{z \in [k]} p(y|\bm x,z; \bm\beta) P(z | \bm x; \bm w) \right)\right]
\end{equation}
Note that as we will discuss in detail, for both mixtures of linear or logistic experts, the above objective function is known to be non convex with respect to $\bm \theta$. In fact, this is generally true for Mixtures of Gaussian, Mixtures of Regressions, and Mixtures of Experts. In the next section we will discuss the use of the EM algorithm for solving this optimization problem. 
\section{EM for Mixtures of Experts}
\label{sec: EM for MOE}

Next, we present the EM algorithm for MoE. EM takes a structured approach to minimizing the objective $\mathcal{L}(\bm \theta)$ in~\eqref{eqn: log-lik}. Each iteration of EM is decomposed into two steps as follows. The first step is called ``expectation": For current parameter estimate $\bm \theta^t$, we compute the expectation of the complete-data log-likelihood with respect to the latent variables, using the current parameter estimates $\bm \theta^t$ and denote it by $Q(\bm\theta|\bm\theta^t)$, i.e., 
\begin{equation}\label{eqn:EM objective}
    Q(\bm\theta|\bm\theta^t) = - \mathbb{E}_{X,Y} \left[\mathbb{E}_{Z|\bm x,y;\bm \theta^t}[\log p(\bm x, y,z;\bm\theta)]\right].
\end{equation}
Then, in the second step called ``maximization", we simply minimize the objective $Q(\bm\theta|\bm\theta^t)$ (or maximize $-Q(\bm\theta|\bm\theta^t)$) with respect to $\bm \theta \in \Omega$ and obtain our new parameter as
\begin{equation}\label{eqn:EM Iteration}
    \bm\theta^{t+1} := \argmin_{\bm\theta \in \Omega} Q(\bm\theta|\bm\theta^t).
\end{equation}
Since $\log p(y,z|\bm x;\bm\theta) = \log p(y|z,\bm x;\bm \beta) + \log p(z|\bm x;\bm w)$, it follows that the EM objective \eqref{eqn:EM objective} is linearly separable in the parameters $\bm\beta$ and $\bm w$. Thus, we can rewrite $ Q(\bm\theta|\bm\phi)$ as the sum of two functions that depend only on $\bm \beta$ and $\bm w$, respectively. Subsequently, the EM update \eqref{eqn:EM Iteration} is obtained as the concatenation $\bm\theta^{t+1} = (\bm w^{t+1}, \bm \beta^{t+1})^\top$, where
\begin{align*}
    &\bm w^{t+1}=\argmin_{\bm w \in \mathbb{R}^d} -\mathbb{E}_{\bm X, Y}\left[\mathbb{E}_{Z|\bm x,y;\bm\theta^t}\left[\log p(z|\bm x;\bm w)\right]\right],\\
    &\bm \beta^{t+1}=\argmin_{\bm\beta \in \mathbb{R}^d} - \mathbb{E}_{\bm X, Y}\left[\mathbb{E}_{Z|\bm x,y;\bm\theta^t}\left[\log p(y|z,\bm x;\bm\beta)\right]\right].
\end{align*}

EM has two well understood characteristics: {\em (a)} its update always minimize an objective that is an upper bound on the likelihood, and {\em (b)} fixed points of the EM update are also stationary points of the likelihood. We now show this below.
 The original objective, $\mathcal{L}(\bm \theta)$, can be decomposed into the difference between the EM objective and another function that is bounded below by $0$. Specifically, for any $\bm\theta, \bm\phi \in \Omega$, 
\begin{align*}
   \mathcal{L}(\bm \theta) &=   -\mathbb{E}_{\bm X, Y} \left[\log(p(y|\bm x; \bm \theta))\right]\\
   &= -\mathbb{E}_{\bm X, Y}\mathbb{E}_{Z|\bm x, y, \bm\phi} \left[\log(p(y|\bm x, z; \bm \theta))\right]\\
   &= -\mathbb{E}_{\bm X, Y}\mathbb{E}_{Z|\bm x, y, \bm\phi} \left[\log\left(\frac{p(y,z|\bm x; \bm\theta)}{p(z|y,\bm x;\bm\theta)}\right)\right].
\end{align*}
Denoting $H(\bm\theta|\bm\phi):=-\mathbb{E}_{\bm X,Y} \mathbb{E}_{Z|\bm x,y,\bm\phi}\left[\log p(z|\bm x,y;\bm\theta)\right]$, it follows that
\begin{equation}\label{EM L Q and H}
    \mathcal{L}(\bm \theta) = Q(\bm\theta|\bm\phi) - H(\bm\theta|\bm\phi).
\end{equation}
where $H(\bm\theta|\bm\phi)$ is bounded below by $0$. Thus, $Q(\bm\theta|\bm\phi)$ acts as an upper bound on the negative log-likelihood.

Next, applying Jensen's inequality shows that $H(\bm \theta|\bm\phi)$ is minimized at $\bm\theta = \bm\phi$ where  $H(\bm \theta|\bm\theta) = 0$. Consequently, it follows that the negative log-likelihood gradient matches that of the surrogate EM objective at $\bm \phi = \bm \theta$, i.e., $
\nabla \mathcal{L}(\bm\theta) = \nabla Q(\bm\theta|\bm\theta)$. 
This  suggests that any stationary point of $\mathcal{L}(\bm\theta)$ is also a stationary point of the EM algorithm and vice versa.

\subsection{EM for Symmetric Mixture of 2-Experts}
\label{sec: SMOE}
So far, we discussed EM for the most general form of MoE. Next, we derive EM for the special case of \textit{Symmetric Mixture of Experts} (SymMoE), the focus of our analysis in Section~\ref{section: result for Syms}. SymMoE is a simplified version of MoE where (i) the number of experts is restricted to $2$, represented as \( z \in \{-1, 1\} \), and (ii) the experts are symmetric around the linear separator, i.e., \( \bm {\tilde{\beta}^*} :=\bm \beta_1^* = -\bm \beta_{-1}^* \). This symmetric structure simplifies the probability density functions introduced earlier, making the subsequent analysis easier to follow. We explore these simplifications in detail below. 

As we are restricted to two experts, the expression for  $P(z=1 | \bm x; \bm w^*) = 1-P(z=-1 | \bm x; \bm w^*) $ is
\begin{equation*}
    P(z=1 | \bm x; \bm w^*) = \frac{e^{\bm x^\top\bm  w_1^*}}{e^{\bm x^\top{\bm  w_1^* }} +e^{\bm x^\top{\bm  w_2^*}}}.
\end{equation*}
For ease of notation, we define  ${\bm {\tilde{w}^*}}:= \bm w_1^*-\bm w_2^*$ and reparameterize the probability mass function of $z$ given $\bm x$ as 
\begin{align}
    \label{eqn: SMOE gating}
    P(z | \bm x; {\bm {\tilde{w}^*}}) &= \frac{\exp\{\frac{z+1}{2}\bm x^\top {\bm {\tilde{w}^*}}\}}{1+e^{\bm x^\top {\bm {\tilde{w}^*}}}}.    
\end{align}
Thus, under this simplification, the EM update of the gating parameter $\bm w$ is now given as 
\begin{equation*} \label{eqn: w sub-problem: MOLinE} 
\bm w^{t+1} = \argmin_{\bm w \in \mathbb{R}^d} \mathbb{E}_{\bm X,Y} \mathbb{E}_{Z|\bm x,y;\bm \theta^t}\left[\log \left(\frac{1+e^{\bm x^\top \bm w}}{e^{\frac{z+1}{2}\bm x^\top \bm w}}\right)\right].
\end{equation*} 
While the above minimization problem is strongly convex, it does not have a closed form solution. In our experiments, we use gradient descent to obtain $\bm w^{t+1}$.

For the special case of a \textit{symmetric mixture of linear experts} (SymMoLinE), the expression for $p(y|\bm x,z ; \bm \beta^*)$ given in \eqref{eqn:MoLinE} can be simplified as
\begin{equation}\label{eqn:Symmetric Linear Experts}
    p(y|\bm x,z;\bm \beta^*) \propto \exp\left\{\frac{(y-z\bm x^\top \bm {\tilde{\beta}^*} )^2}{2}\right\}.
\end{equation}
Under this simplification, the EM update of the expert parameter $\bm \beta$ is now obtained more compactly as 
\begin{equation*}
    \bm \beta^{t+1} =  \mathbb{E}_{X,Y}\left[ \left(2p(z=1|\bm x,y;\bm \theta^t) -1\right)\bm xy\right].
\end{equation*} 
Similarly, for \textit{symmetric mixtures of logistic experts} (SymMoLogE) with $y \in\{-1,1\}$,  the expression of $p(y|\bm x,z ; \bm \beta^*)$ given in \eqref{eqn:MoLogE} can also be simplified as
\begin{equation}
\label{eqn:eqn:Symmetric Logistic Experts}
     P(y|\bm x, z;\bm \beta^*) = \frac{\exp\{\bm \left(\frac{yz+1}{2}\right)x^\top \bm \beta^*\}}{1+e^{\bm x^\top \bm \beta^*}}.
\end{equation}
Under this simplification, the EM update of the expert parameter $\bm \beta$ is now given as 
\begin{equation*} 
 \label{eqn: beta sub-problem: MOLogE}
\bm \beta^{t+1} = \argmin_{\bm \beta \in \mathbb{R}^d}\mathbb{E}_{\bm X,Y}\left[\mathbb{E}_{Z|\bm x, y, \bm\theta^t} \left[ \log \left(\frac{1+e^{\bm x^\top \bm \beta}}{e^{\frac{yz+1}{2}\bm x^\top \bm \beta}}\right)\right]\right].
\end{equation*} 

\subsection{EM for Deep and Sparse MoE} 
So far, we have derived EM for the foundational formulations of MoE, as initially proposed in \cite{jacobs1991adaptive}. While EM is straightforward to derive in these cases, the same does not hold for deep MoE—and especially not for Sparse MoE. A deep MoE consists of $l \geq 2$ MoE blocks, as defined in Section~\ref{sec: MOE}, stacked sequentially. In this setup, the input $x$ passes through the first MoE block and then sequentially through all subsequent blocks to produce the output $y$.

For completeness, and to encourage future work on large-scale applications of EM for MoE, we propose a formulation of EM for deep and sparse MoE in Appendix~\ref{app:large-scale and sparse MoE}. Instead of solving a separate latent-variable problem at each layer—as is done in classical MoE—we posit that the latent variable $z \in [k]^l$ should represent the entire sequence of experts selected across the network. This allows us to construct the EM objective in Equation~\eqref{eqn:EM objective}.

An EM-like solution can then be derived for Sparse MoE, where the loss is computed solely from the sequences of experts observed through greedy expert selection. This formulation provides a principled approach to training deep and sparse MoE models using EM.
\section{Main Result}
\label{section: EM is Mirror Descent for SMOLinE}

In the previous section, we derived the EM update for both MoE and SymMoE as the solution to minimizing the EM objective in \eqref{eqn:EM objective}. We further demonstrated that this solution can be decomposed into the concatenation of the respective solutions to two minimization sub-problems. In this section, we will show that this update is exactly equivalent to performing a single step of the projected Mirror Descent (MD) update on \( \mathcal{L}(\bm \theta) \) with a unit step size and the KL divergence as a regularizer.
To illustrate more clearly that minimizing \( Q(\bm\theta|\bm\theta^t) \) in \eqref{eqn:EM objective} corresponds to a projected MD step on the loss \( \mathcal{L}(\bm\theta) \) at the point \( \bm\theta^t \), we first provide a brief overview of the core concept behind the MD update.

In most gradient-based methods, the next iteration is obtained by minimizing an upper bound of the objective function. For example, in Gradient Descent (GD), the next iterate is found by minimizing the first-order Taylor expansion of the objective at \( \bm \theta^t \), with a squared norm regularizer. Specifically, for minimizing  \( \mathcal{L}(.) \), the GD update with step size \( \eta \) at \( \bm \theta^t \) is equivalent to minimizing the following function:
\[
\mathcal{L}(\bm \theta^t) + \langle \nabla \mathcal{L}(\bm\theta^t), \bm\theta - \bm\theta^t \rangle + \frac{1}{2\eta}\Vert \bm\theta - \bm\theta^t \Vert_2^2.
\]
This function indeed serves as an upper bound for \( \mathcal{L}(.) \) if \( \eta \leq 1/L \), where \( L \) is the Lipschitz constant of \( \nabla \mathcal{L}(\bm\theta) \).

\textit{Mirror Descent} solves a similar sub-problem where instead of a squared norm regularizer, we employ the \textit{Bregman Divergence} regularizer: The Bregman divergence induced by a differentiable, convex function $h: \mathbb{R}^d \to \mathbb{R}$, measures the difference between $h(\bm\theta)$ and its first-order approximation at $\bm\theta^t$,i.e.,
\begin{equation}
    D_h(\bm\theta,\bm\theta^t) = h(\bm\theta) - h(\bm\theta^t) - \langle \nabla h(\bm\theta^t), \bm\theta - \bm\theta^t\rangle.
\end{equation}
Thus, the iterations of MD are derived by minimizing the following expression:
\begin{equation}\label{eqn:MD objective}
    \mathcal{L}(\bm \theta^t) + \langle \nabla \mathcal{L}(\bm\theta^t), \bm\theta - \bm\theta^t \rangle + \frac{1}{\eta} D_h(\bm\theta^t,\bm\theta).
\end{equation}
Finally, in projected mirror descent, the update is completed by projecting the solution obtained from minimizing \eqref{eqn:MD objective} onto a subspace.
As mentioned, this scheme is reasonable if the function approximation using the Bregman divergence serves as an upper bound for the function \( \mathcal{L}(\bm \theta) \). This can be ensured when the step size \( \eta \) is sufficiently small, and the condition of relative smoothness is satisfied \citep{lu2017relativelysmoothconvexoptimizationfirstorder}.

In the upcoming theorem, we formally establish that for general MoE, minimizing the EM objective function defined in equation \eqref{eqn:EM Iteration} is exactly equivalent to minimizing the subproblem associated with a single step of MD, as defined in equation \eqref{eqn:MD objective}, followed by a projection step. This result demonstrates that the EM update for General MoE is essentially performing an MD step in a specific natural re-parameterization space, then projecting the resulting solution onto $\Omega$. The proof of this result is involved and is deferred to Appendix \ref{app: thm 0 proof}.

\begin{theorem}\label{thm:EM is proj MD}
    For General MoE, there exists a natural reparameterization $\bm \theta_{\bm x} \in \{\eta(\cdot, \bm \theta): \theta \in \Omega\}$ with 
    \begin{equation}
         \mathcal{L}(\bm \theta) = \mathbb{E}_X \left[L(\bm \theta_{\bm x})\right]
    \end{equation}and a mirror map $A(\bm \theta_{\bm x})$ such that the EM update in \eqref{eqn:EM Iteration} simplifies and is equivalent to the expectation moment projection,
    \begin{equation}\label{eqn: proj moment proj}
        \argmin_{\bm \theta \in \Omega} \mathbb{E}_X \left[\text{KL}\left[ p\left(y,z \middle|\bm \tilde{\theta}^{t+1}_{\bm x}\right)\middle\Vert p\left(y, z \middle|\eta(\bm x ,\bm \theta)\right) \right]\right],
    \end{equation}
    where for each $\bm x$, $\bm \tilde{\bm \theta}^{t+1}_{\bm x}$ is obtained from the following MD step,
    \begin{equation}
    \label{eqn:Mirror Descent SMOLINE}
        \argmin_{\bm \psi\in \tilde{\Omega}} \langle \nabla L(\bm \theta_{\bm x}^t), \bm \psi - \bm \theta_{\bm x}^t \rangle + D_A(\bm \psi, \bm \theta_{\bm x}^t),
    \end{equation}
    with $L(\bm \theta_{\bm x})$ being $1$-smooth relative to $A(\bm \theta_{\bm x})$. Further,  $\forall \bm\psi_1, \bm\psi_2 \in \tilde{\Omega}$, the divergence function $ D_A(\bm \psi_1,\bm \psi_2)$ is equal to the $KL$ divergence on $p(y,z,|\psi)$:
    \begin{equation}
    \label{eqn:Bregman Divergence is KL}
        D_A(\bm \psi_1,\bm \psi_2) = \text{KL} [ p(y,z|\bm \psi_2) \Vert p(y,z|\bm \psi_1)].
    \end{equation}
\end{theorem}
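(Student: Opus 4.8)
The plan is to reduce the whole statement to a per-feature exponential-family computation and then invoke the classical fact that, within an exponential family, the Bregman divergence of the log-partition function coincides with the KL divergence. First I would fix $\bm x$ and treat $p(y,z\mid\bm x;\bm\theta)=p(y,z;\bm\theta_{\bm x})$ as an exponential family in the natural parameter $\bm\theta_{\bm x}=\eta(\bm x,\bm\theta)\in\tilde\Omega$, with sufficient statistic $s(y,z)$ and log-partition $A$ (the mirror map, so that $\nabla A(\bm\psi)=\mathbb{E}_{p(\cdot\mid\bm\psi)}[s]$). I would set $L(\bm\theta_{\bm x}):=-\mathbb{E}_{Y\mid\bm x}[\log p(y\mid\bm x;\bm\theta)]$, so that, since the $p(\bm x)$ term is $\bm\theta$-independent, $\mathcal{L}(\bm\theta)=\mathbb{E}_X[L(\bm\theta_{\bm x})]$ holds by construction. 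The identity \eqref{eqn:Bregman Divergence is KL} then follows by expanding both sides: for members $\bm\psi_1,\bm\psi_2$ one gets $\text{KL}[p(\cdot\mid\bm\psi_2)\Vert p(\cdot\mid\bm\psi_1)]=A(\bm\psi_1)-A(\bm\psi_2)-\langle\nabla A(\bm\psi_2),\bm\psi_1-\bm\psi_2\rangle=D_A(\bm\psi_1,\bm\psi_2)$.

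Next I would rewrite the EM objective in natural coordinates. Substituting the exponential-family form into $Q$ and discarding $\bm\theta$-independent terms gives $Q(\bm\theta\mid\bm\theta^t)=\mathbb{E}_X[A(\bm\theta_{\bm x})-\langle\bar{s}_{\bm x}^t,\bm\theta_{\bm x}\rangle]+\text{const}$, where $\bar{s}_{\bm x}^t:=\mathbb{E}_{Y\mid\bm x}\mathbb{E}_{Z\mid\bm x,y;\bm\theta^t}[s(y,z)]$ is the posterior-averaged sufficient statistic. I would also record the marginal-likelihood gradient identity $\nabla\log p(y\mid\bm x;\bm\theta)=\mathbb{E}_{Z\mid\bm x,y}[s(y,z)]-\nabla A(\bm\theta_{\bm x})$, which yields $\nabla L(\bm\theta_{\bm x}^t)=\nabla A(\bm\theta_{\bm x}^t)-\bar{s}_{\bm x}^t$.

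The core of the argument is then to evaluate the MD-then-project scheme and match it to this $Q$. Writing first-order optimality for the per-$\bm x$ MD subproblem \eqref{eqn:Mirror Descent SMOLINE} (unit step, mirror map $A$) gives $\nabla A(\tilde{\bm\theta}_{\bm x}^{t+1})=\nabla A(\bm\theta_{\bm x}^t)-\nabla L(\bm\theta_{\bm x}^t)=\bar{s}_{\bm x}^t$; that is, the MD step performs exact moment matching, and $\tilde{\bm\theta}_{\bm x}^{t+1}$ is the (unconstrained) member of $\tilde\Omega$ whose mean statistic equals $\bar{s}_{\bm x}^t$. Plugging this into the moment-projection objective \eqref{eqn: proj moment proj} and applying \eqref{eqn:Bregman Divergence is KL} with $\bm\psi_2=\tilde{\bm\theta}_{\bm x}^{t+1}$, $\bm\psi_1=\bm\theta_{\bm x}$, the integrand becomes $A(\bm\theta_{\bm x})-\langle\bar{s}_{\bm x}^t,\bm\theta_{\bm x}\rangle$ up to terms independent of $\bm\theta$; taking $\mathbb{E}_X$ and minimizing over $\bm\theta\in\Omega$ reproduces exactly the minimizer of $Q$, i.e. the EM update \eqref{eqn:EM Iteration}. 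This is the step I expect to be the main obstacle: unlike the unconditional setting of Kunstner et al., each $\bm x$ induces its own exponential-family problem, so the MD step runs independently over all of $\tilde\Omega$ and generically leaves the realizable set $\{\eta(\bm x,\bm\theta):\bm\theta\in\Omega\}$; the delicate point is showing that the ensuing expectation-moment projection, in the correct KL orientation, collapses back onto the constrained M-step rather than merely approximating it.

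Finally I would establish $1$-relative-smoothness. Writing $\log p(y\mid\bm x;\bm\theta)$ as a log-sum-exp minus $A$ shows $\nabla^2(-\log p(y\mid\bm x))=\nabla^2 A(\bm\theta_{\bm x})-\Cov_{Z\mid\bm x,y}[s(y,z)]$; taking $\mathbb{E}_{Y\mid\bm x}$ gives the missing-information decomposition $\nabla^2 A-\nabla^2 L=\mathbb{E}_{Y\mid\bm x}[\Cov_{Z\mid\bm x,y}[s]]\succeq 0$, so $A-L$ is convex and $L$ is $1$-smooth relative to $A$. I would close by checking well-definedness of each stage: the moment-matching solution lies in $\tilde\Omega$ because $\bar{s}_{\bm x}^t$ is an expectation of realizable statistics, and strict convexity of $A$ makes $\nabla A$ invertible so the MD step and the projection are both uniquely determined.
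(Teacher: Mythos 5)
Your proposal is correct and follows essentially the same route as the paper's proof: the same exponential-family decomposition of $Q(\bm\theta|\bm\theta^t)$ into $\mathbb{E}_X[A(\bm\theta_{\bm x})-\langle \bar{s}^t_{\bm x},\bm\theta_{\bm x}\rangle]$ plus constants, the same moment-matching identity $\nabla A(\tilde{\bm\theta}^{t+1}_{\bm x})=\bar{s}^t_{\bm x}$ from first-order optimality of the per-$\bm x$ MD step, and the same Bregman--KL identity, with your argument simply running the algebra in the reverse direction (expanding the projection objective back into $Q$ rather than rewriting $Q$ as the projection objective). The only substantive difference is that you prove $1$-relative-smoothness directly via the Hessian decomposition $\nabla^2 A-\nabla^2 L=\mathbb{E}_{Y|\bm x}[\mathrm{Cov}_{Z|\bm x,y}[s]]\succeq 0$, whereas the paper defers this point to Kunstner et al.; your self-contained version is a valid (and slightly more complete) treatment of that step.
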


\begin{proof}[Proof sketch]
We utilize the assumed property that the conditional distribution $p(y,z|\bm x, \bm \theta)$ belongs to an exponential family of distributions to decompose the EM surrogate as 
    \begin{multline}
        Q(\bm\theta|\bm\theta^t) - Q(\bm\theta^t|\bm\theta^t) = \\\mathbb{E}_X\left[\langle \nabla L(\bm \theta^t_{\bm x}), \bm\theta_{\bm x} - \bm \theta^t_{\bm x}\rangle + D_A(\bm\theta_{\bm x},\bm\theta^t_{\bm x})\right].
    \end{multline} The above derivation follows similar steps to Kunstner et al. Theorem 1 and is provided in Appendix A for completeness. We note that because $\theta_x = \eta(x,\theta)$ is not necessarily linear in $x$, we cannot conclude that minimizing (1) with respect to $\theta$ results in a direct MD step. Instead, recall the point-wise (in $x$) MD iterate, $\bm \tilde{\theta}^{t+1}_{\bm x} := \argmin_{\bm \theta_{\bm x}} \langle \nabla L(\bm \theta^t_{\bm x}), \bm \theta_{\bm x} - \bm \theta^t_{\bm x}\rangle + D_A(\bm \theta_{\bm x},\bm \theta^t_{\bm x})$. Differentiating and setting equal to $0$, it holds that
\begin{equation}
    \nabla A(\bm \tilde{\bm \theta}^{t+1}_{\bm x}) = s(\bm x;\bm \theta^t).
\end{equation}
Using (2) and the decomposing $\nabla L(\bm \theta_{\bm x}^t) =  \nabla A(\bm \theta_{\bm x}^t) - s(\bm x;\bm \theta^t)$, we see that (1) can be further simplified to
\begin{multline*}
     Q(\bm\theta|\bm\theta^t) - Q(\bm\theta^t|\bm\theta^t)= \\
     \mathbb{E}_X\left[-\langle \nabla A(\bm\tilde{\theta}^{t+1}_{\bm x}), \bm\theta_{\bm x} - \bm\theta^t_{\bm x}\rangle + A(\bm \theta_{\bm x}) - A(\bm \theta^t_{\bm x})\right].
\end{multline*}
Because $A(\bm\tilde{\theta}^{t+1}_{\bm x})$ and $A(\bm \theta^t_{\bm x})$ only depend on $\theta^t$, minimizing the above with respect to $\bm \theta$ is equivalent to minimizing the following with respect to $\theta$
\begin{equation}
\mathbb{E}_X\left[D_A(\bm\theta_{\bm x}, \bm\tilde{\bm\theta}^{t+1}_{\bm x})\right].
\end{equation}
Finally, substituting the Bregman Divergence induced by $A$ by the KL divergence yields the claim (this derivation is also included in Appendix \ref{app: thm 0 proof} for completeness).
\end{proof}

We remark that our proof for this result is not merely limited to the setting of MoE, but is satisfied for any mixture for which the distribution $p(y,z|\bm x, \bm \theta)$ of the target and latent variable conditioned on the feature variable belongs to an exponential family of distribution.

\subsection{Convergence Guarantees for General MoE}

Before moving into the convergence results, we specify the necessary assumptions previously outlined in \citep{kunstner2022homeomorphicinvarianceemnonasymptoticconvergence} for the iterations of the EM Algorithm to be well-defined for our class of MoE models. See Appendix \ref{app: Convergence Results General MoE} for a more in-depth discussion of the implications of these assumptions.
\begin{itemize}
    \item[$A_1$.] The conditional distribution $p(y,z|\bm \theta_{\bm x})$ is a steep, minimal exponential family of distribution and $\eta(\bm x, \cdot)$ is a continuously differentiable function.
    \item[$A_2$.] The optimal objective function value is  bounded below, i.e., $\mathcal{L}(\bm \theta^*) > -\infty$, on the constraint set $\Omega$.
    \item[$A_3$.] The following sub-level sets $\Omega_{\theta} := \{\bm \phi \in \Omega: Q(\bm\phi |\bm\theta) \leq Q(\bm\theta | \bm\theta)\}$ are compact. 
\end{itemize}
Next, we briefly introduce key definitions that will be used later.
We say $\bm \theta^1$ is initialized in a locally average-convex region of $\mathcal{L}(\bm \theta)$ with respect to the random variable $X$, if there exists a convex set $ \Theta \subseteq \Omega$ containing $\bm \theta^1, \bm \theta^*$ such that for all $\bm \phi, \bm \theta \in \Theta$,
\begin{equation}\label{eqn: average convexity}
    \mathbb{E}_X \left[L(\bm \phi_{\bm x})\right] \geq \mathbb{E}_X \left[L(\bm \theta_{\bm x}) + \langle \nabla L(\bm \theta_{\bm x}),\bm \phi_{\bm x} - \bm \theta_{\bm x}\rangle\right]
\end{equation}
where $\bm \theta_{\bm x} := \eta(\bm x, \bm \theta)$. Furthermore, $\Theta$ is called $\alpha$-average-convex relative to $h$ if
\begin{multline}
    \mathbb{E}_{X} \left[L(\phi_{\bm x})\right] \geq \mathbb{E}_{X} \left[L(\bm \theta_{\bm x})\right]\\+\mathbb{E}_{X} \left[\langle \nabla L(\bm \theta_{\bm x}),\bm \phi_{\bm x} - \bm \theta_{\bm x}\rangle + \alpha D_h(\bm \phi_{\bm x}, \bm \theta_{\bm x})\right]
\end{multline}

Now, thanks to the previously shown correspondence between EM for general MoE and projected MD, we are able to present novel convergence properties of the EM Algorithm when applied to General MoE. The theorem that follows provides sufficient conditions and explicit rates for the convergence of EM iterations to a stationary point or true parameters. The proof, adapted from \citep{lu2017relativelysmoothconvexoptimizationfirstorder}, is a bit more involved due to the nature of the extra projection step. It is included in Appendix \ref{app: Convergence Results General MoE}.

\begin{theorem}[Convergence of EM]\label{thm: proj MD conv result}
    Assuming $A_1$ - $A_3$. For general MoE with re-parameterization given by $\bm \theta_{\bm x} := \eta(\bm x, \bm \theta)$, strictly convex mirror map $A(\bm \theta_{\bm x})$,
     and if for all $\tilde{\bm \theta}^{t+1}_{\bm x}, \bm \theta^{t+1}_{\bm x},\bm \phi_{\bm x} \in \{\bm \theta_{\bm x}^{t}, \bm \theta^*_{\bm x}\}$,
     \begin{multline}\label{eqn: gen pythagorean identity}
            \mathbb{E}_X\left[D_A\left(\bm \phi_{\bm x}, \tilde{\bm \theta}^{t+1}_{\bm x}\right)\right] \geq\\
            \mathbb{E}_X\left[D_A\left(\bm \theta^{t+1}_{\bm x}, \tilde{\bm \theta}^{t+1}_{\bm x}\right) + D_A\left(\bm \phi_{\bm x}, \bm \theta^{t+1}_{\bm x}\right)\right],
     \end{multline}
    then, the EM iterates $\{\bm \theta^t\}_{t\in[T]}$ satisfy:
    \begin{itemize}
    \vspace{-3mm}
        \item [1)]\textbf{Stationnarity.} For no additional conditions, \begin{equation}
        \min_{t \in [T]} \mathbb{E}_X\left[D_A(\bm \theta^t_{\bm x}, \bm \theta^{t+1}_{\bm x})\right] \leq \frac{\mathcal{L}(\bm \theta^1) - \mathcal{L}(\bm \theta^*)}{T};
    \end{equation}
    \vspace{-6mm}
    \item[2)]\textbf{Sub-linear Rate to $\bm \theta^*$.} If $\bm \theta^1$ is initialized in $\Theta$, a locally average-convex region of $\mathcal{L}(\bm \theta)$ containing $\bm \theta^*$, then
   \begin{equation}
        \mathcal{L}(\bm \theta^T) - \mathcal{L}(\bm \theta^*) \leq \frac{\mathbb{E}_X\left[D_A(\bm \theta^*_{\bm x}, \bm \theta^{1}_{\bm x})\right]}{T}
    \end{equation}
    \vspace{-6mm}
    \item[3)]\textbf{Linear Rate to $\bm \theta^*$.}
    If $\bm \theta^1$ is initialized in $\Theta \subseteq \Omega$, a locally average-convex region of $\mathcal{L}(\bm \theta)$ relative to $A(\bm \theta)$ that contains $\bm \theta^*$, then
    \begin{equation}
        \mathcal{L}(\bm \theta^T) - \mathcal{L}(\bm \theta^*) \leq (1-\alpha)^T \mathbb{E}_X\left[D_A(\bm \theta^*_{\bm x}, \bm \theta^{1}_{\bm x})\right]
    \end{equation}
    \end{itemize}
\end{theorem}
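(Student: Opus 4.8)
The plan is to transplant the relatively-smooth first-order analysis of \citep{lu2017relativelysmoothconvexoptimizationfirstorder} into the expectation-over-$\bm x$ geometry supplied by Theorem~\ref{thm:EM is proj MD}, using the generalized Pythagorean identity \eqref{eqn: gen pythagorean identity} to absorb the moment-projection step. The two workhorses I would establish first are: (i) the one-step identity $Q(\bm\theta|\bm\theta^t) - Q(\bm\theta^t|\bm\theta^t) = \mathbb{E}_X[\langle \nabla L(\bm\theta^t_{\bm x}), \bm\theta_{\bm x} - \bm\theta^t_{\bm x}\rangle + D_A(\bm\theta_{\bm x}, \bm\theta^t_{\bm x})]$ coming out of the proof of Theorem~\ref{thm:EM is proj MD}, and (ii) its equivalent ``projection'' form $Q(\bm\theta|\bm\theta^t) - Q(\bm\theta^t|\bm\theta^t) = \mathbb{E}_X[D_A(\bm\theta_{\bm x}, \tilde{\bm\theta}^{t+1}_{\bm x})] - \mathbb{E}_X[D_A(\bm\theta^t_{\bm x}, \tilde{\bm\theta}^{t+1}_{\bm x})]$, which follows by substituting the pointwise MD stationarity condition $\nabla A(\tilde{\bm\theta}^{t+1}_{\bm x}) = \nabla A(\bm\theta^t_{\bm x}) - \nabla L(\bm\theta^t_{\bm x})$ and expanding the two Bregman terms. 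Taking expectations of the pointwise $1$-relative-smoothness inequality evaluated at $\bm\theta^{t+1}_{\bm x}$ then yields the descent hook $\mathcal{L}(\bm\theta^{t+1}) \le \mathcal{L}(\bm\theta^t) + Q(\bm\theta^{t+1}|\bm\theta^t) - Q(\bm\theta^t|\bm\theta^t)$, which ties the abstract progress estimates back to the loss.

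For Part 1, I would combine this descent hook with the projection form of the $Q$-gap at $\bm\theta = \bm\theta^{t+1}$, then invoke \eqref{eqn: gen pythagorean identity} with $\bm\phi = \bm\theta^t$ to cancel $\mathbb{E}_X[D_A(\bm\theta^{t+1}_{\bm x}, \tilde{\bm\theta}^{t+1}_{\bm x})]$ against $\mathbb{E}_X[D_A(\bm\theta^t_{\bm x}, \tilde{\bm\theta}^{t+1}_{\bm x})]$, leaving the quantified monotone descent $\mathcal{L}(\bm\theta^{t+1}) - \mathcal{L}(\bm\theta^t) \le -\mathbb{E}_X[D_A(\bm\theta^t_{\bm x}, \bm\theta^{t+1}_{\bm x})]$. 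Telescoping over $t \in [T]$, using $\mathcal{L}(\bm\theta^{T+1}) \ge \mathcal{L}(\bm\theta^*)$ (guaranteed by $A_2$), and lower-bounding the sum by $T$ times its minimum gives the stationarity rate exactly.

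For Parts 2 and 3 I would derive a single telescoping inequality in $\mathbb{E}_X[D_A(\bm\theta^*_{\bm x}, \cdot)]$. Starting again from the descent hook, I replace the $Q$-gap by its projection form, bound $\mathbb{E}_X[D_A(\bm\theta^{t+1}_{\bm x}, \tilde{\bm\theta}^{t+1}_{\bm x})]$ via \eqref{eqn: gen pythagorean identity} with $\bm\phi = \bm\theta^*$, and rewrite the residual gap $\mathbb{E}_X[D_A(\bm\theta^*_{\bm x}, \tilde{\bm\theta}^{t+1}_{\bm x})] - \mathbb{E}_X[D_A(\bm\theta^t_{\bm x}, \tilde{\bm\theta}^{t+1}_{\bm x})]$ back into the first-order form $\mathbb{E}_X[\langle \nabla L(\bm\theta^t_{\bm x}), \bm\theta^*_{\bm x}-\bm\theta^t_{\bm x}\rangle + D_A(\bm\theta^*_{\bm x}, \bm\theta^t_{\bm x})]$. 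Applying $\alpha$-average-convexity (with $\alpha=0$ for Part 2) to the inner-product term produces
\[
\mathcal{L}(\bm\theta^{t+1}) - \mathcal{L}(\bm\theta^*) \le (1-\alpha)\,\mathbb{E}_X[D_A(\bm\theta^*_{\bm x}, \bm\theta^t_{\bm x})] - \mathbb{E}_X[D_A(\bm\theta^*_{\bm x}, \bm\theta^{t+1}_{\bm x})].
\]
For $\alpha = 0$ this telescopes, and combined with the monotonicity of $\mathcal{L}(\bm\theta^t)$ from Part 1 it gives the $1/T$ rate; for $\alpha > 0$, dropping the nonnegative left-hand side (since $\bm\theta^*$ minimizes $\mathcal{L}$ on $\Theta$) yields the contraction $\mathbb{E}_X[D_A(\bm\theta^*_{\bm x}, \bm\theta^{t+1}_{\bm x})] \le (1-\alpha)\,\mathbb{E}_X[D_A(\bm\theta^*_{\bm x}, \bm\theta^t_{\bm x})]$, which I iterate and feed back into the displayed inequality to obtain the $(1-\alpha)^T$ bound.

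The main obstacle is the moment-projection step. Unlike the unconstrained mirror descent of \citep{lu2017relativelysmoothconvexoptimizationfirstorder}, the iterate that actually enters $\mathcal{L}$ is the projected $\bm\theta^{t+1}_{\bm x} = \eta(\bm x, \bm\theta^{t+1})$ rather than the pointwise MD point $\tilde{\bm\theta}^{t+1}_{\bm x}$, and the image $\{\eta(\bm x,\bm\theta):\bm\theta\in\Omega\}$ is generally nonconvex, so the usual Bregman projection inequality is not available for free. The generalized Pythagorean identity \eqref{eqn: gen pythagorean identity} is exactly the hypothesis that reinstates the three-point inequality across the projection for the relevant points $\{\bm\theta^t_{\bm x},\bm\theta^*_{\bm x}\}$, and the delicate bookkeeping is to check at each use that the triple $(\tilde{\bm\theta}^{t+1}_{\bm x},\bm\theta^{t+1}_{\bm x},\bm\phi_{\bm x})$ lies in its scope. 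A secondary subtlety is that smoothness, convexity, and the three-point relations all hold pointwise in $\bm x$, so I must ensure every estimate survives the outer $\mathbb{E}_X$ and that $\bm\theta^*,\bm\theta^t \in \Theta \subseteq \Omega$ keeps all invoked points feasible.
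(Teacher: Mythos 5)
Your proposal is correct and follows essentially the same route as the paper's own proof: expectation of the pointwise $1$-relative-smoothness bound, the decomposition $\nabla L(\bm\theta^t_{\bm x}) = \nabla A(\bm\theta^t_{\bm x}) - \nabla A(\tilde{\bm\theta}^{t+1}_{\bm x})$, the generalized Pythagorean inequality \eqref{eqn: gen pythagorean identity} invoked with $\bm\phi_{\bm x} = \bm\theta^t_{\bm x}$ for stationarity and $\bm\phi_{\bm x} = \bm\theta^*_{\bm x}$ for the two rates, average (relative strong) convexity, and then telescoping respectively contraction. The only difference is bookkeeping: where you equate the two forms of the $Q$-gap (first-order form versus projection form about $\tilde{\bm\theta}^{t+1}_{\bm x}$), the paper carries out the same algebra by applying the Bregman cosine law (three-point lemma) twice, so the two derivations coincide step for step.
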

The above condition on the initialization to belong in a locally average-convex region is satisfied trivially if $L(\bm \theta_{\bm x})$ is almost surely convex relative to $A(\bm \theta_{\bm x})$. Such an assumption is stronger, but more in line with standard sufficient conditions for optimality of MD.

As noted in the literature, EM's convergence is sensitive to initialization. If \( \bm \theta^1 \) is initialized within a locally average-convex region of \( \mathcal{L}(\bm \theta) \), the EM iterates for the MoE problem will converge sub-linearly to the true parameter. However, if \( \bm \theta^1 \) is in a region where \( \mathcal{L}(\bm \theta) \) is strongly average-convex relative to \( A \), the iterates will converge linearly. This last assumption is different from that of prior work, which typically require \( \bm \theta^1 \) to be initialized in a locally strongly convex region.

\subsection{Discussion of Main Result}

The results show that the EM update \eqref{eqn:EM Iteration} for general MoE is equivalent to projected mirror descent with a unit step-size and KL divergence regularizer on the complete data distribution. We offer the following additional remarks.

First, if we have oracle access to the EM updates for \( \bm{w} \) and \( \bm{\beta} \), EM requires no hyper-parameters, unlike GD, which is sensitive to the step size. This can be especially advantageous for cases where the \( \bm{\beta} \)-update has a closed-form solution (as is the case for linear experts), making EM's benefits over GD more evident. Additionally, while GD regularizes progress based on the Euclidean distance between iterates, EM adjusts progress based on the divergence between probability distributions across iterations. This is often more suitable for latent variable models, where small Euclidean changes may cause large shifts in the mixture distribution, and vice versa.

Second, whereas previous analysis of EM for various settings hinged on various types of analyses ranging from verifying obscure conditions to -- less reproducible at scale -- direct proofs, the connection to MD that we unveil greatly unifies the process of analysis and provides more intuition as to the inner workings of EM. In particular, as we will discuss in Section \ref{section: result for Syms}, our framework for analysis allows to easily provide intuitive conditions for linear convergence to the true parameters that are based on the MIM and subsequently, the SNR of the generative model.

Third, while \citet{JORDAN19951409} also demonstrated that the EM algorithm for MoE converges linearly to the true parameters, the sufficient conditions they provided are more restrictive. Specifically, their analysis requires the Hessian to be negative definite, and the convergence rate depends explicitly on its eigenvalues. These conditions are similar in nature to those typically required for GD-type methods. In contrast, our sufficient conditions for optimality align with those of MD, which more accurately captures the convergence behavior of EM, as established by the equivalence shown in Theorem~\ref{thm:EM is proj MD}.

Finally, large-scale applications often favor a mini-batch training paradigm, as it tends to yield better performance for a given computational cost. Large-scale implementation of EM can directly benefit from this paradigm for solving each iteration's convex optimization subproblem (i.e., Equation~\eqref{eqn:EM Iteration}) where a GD-style method is typically used. Scaling laws for the mini-batch paradigm suggest that reducing the batch size should be accompanied by a proportional reduction in the learning rate \cite{shuai2024scaling,malladi2022sdes, goyal2017accurate}. However, since EM is equivalent to MD with a fixed learning rate of $1$, this kind of modular tuning is not directly applicable to EM. This does not imply that there is no optimal batch size for EM. Rather, extending theoretical guarantees to stochastic and mini-batch settings can be approached through the framework of stochastic mirror descent (SMD) and mini-batch MD (MBMD), both of which have been studied in the context of composite optimization \cite{duchi2010composite}. We highlight this as an open direction for future research on scalable implementations of EM for MoE.
\section{Special case of SymMoLinE and SymMoLogE}
\label{section: result for Syms}
In this section, we narrow in on two special cases: the Symmetric Mixture of Linear Experts (SymMoLinE) and the Symmetric Mixture of Logistic Experts (SymMoLogE) models. We first show that minimizing the EM objective function defined in  \eqref{eqn:EM Iteration} is exactly equivalent to minimizing the subproblem associated with a single step of MD as defined in equation \eqref{eqn:MD objective}, with a step size of \( \eta = 1 \). This time, we fully specify the mirror map and show it is strictly convex.
Unlike the previous result, this correspondence between EM and MD does not feature the extra projection step that was present for general MoE. This allows us to more easily characterize the sufficient condition of EM for linear convergence to the true parameters, relating it to the MIM and SNR of the generative model (see Appendix~\ref{app: Satisfiability of Conditions Lin Log} and \ref{app: Existence of Convex Region SMOLINE}).  The proof is provided in Appendix~\ref{app:EM is Mirror Descent SMOLINE}.

\begin{theorem}\label{thm:EM is MD}
    For SymMoLinE and SymMoLogE, there is a mirror map $A(\bm \theta)$ such that the EM update in \eqref{eqn:EM Iteration} is equivalent to
    \begin{equation}
    \label{eqn:Mirror Descent SMOLINE}
        \argmin_{\bm \theta \in \Omega} \langle \nabla \mathcal{L}(\bm \theta^t), \bm \theta - \bm \theta^t \rangle + D_A(\bm \theta, \bm \theta^t),
    \end{equation}
    where, $\forall \bm\phi, \bm\theta \in \Omega$, the divergence function $D_A(\bm \theta, \bm \theta^t)$ is equal to the $KL$ divergence on the complete data:
    \begin{equation}
    \label{eqn:Bregman Divergence is KL}
        D_A(\bm \phi, \bm \theta) = \text{KL} [ p(\bm x,y,z;\bm \theta) \Vert p(\bm x,y,z;\bm \phi)].
    \end{equation}

    In particular, in the case of SymMoLinE,
    \begin{equation}
    \label{eqn: Mirror Map A}
        A(\bm \theta) = \mathbb{E}_{\bm X}\left[\frac{(\bm x^\top \bm \beta)^2}{2} + \log \left(1 +e^{\bm x^\top \bm w}\right) \right],
    \end{equation}
    while in the case of SymMoLogE, 
    \begin{equation}
    \label{eqn: Mirror Map B}
        A(\bm \theta) = \mathbb{E}_{\bm X}\left[\log \left(\left(1 +e^{\bm x^\top \bm \beta}\right) \left(1 +e^{\bm x^\top \bm w}\right)\right) \right].
    \end{equation}
    Finally, in both cases, the map $A(\bm \theta)$
    is strictly convex in $\bm \theta$ and $\mathcal{L}(\bm \theta)$ is $1$-smooth relative to $A(\bm \theta)$. 
\end{theorem}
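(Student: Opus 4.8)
The plan is to specialize the exponential-family machinery of Theorem~\ref{thm:EM is proj MD} to these two models and then argue that the extra moment projection disappears because the natural reparameterization is \emph{linear} in $\bm\theta$. I would first write $\log p(\bm x, y, z;\bm\theta) = \log p(\bm x) + \log p(y|\bm x, z;\bm\beta) + \log p(z|\bm x;\bm w)$ and read off the exponential-family form of $p(y,z|\bm x;\bm\theta)$. For SymMoLinE, expanding the Gaussian exponent and using $z^2 = 1$ makes the quadratic term $\tfrac12(\bm x^\top\bm\beta)^2$ independent of $(y,z)$, so it is absorbed into the normalizer; the sufficient statistics become $(zy, \tfrac{z+1}{2})$ with natural parameter $\bm\theta_{\bm x} = (\bm x^\top\bm\beta, \bm x^\top\bm w)$ and pointwise log-partition $A(\bm\theta_{\bm x}) = \tfrac12(\bm x^\top\bm\beta)^2 + \log(1 + e^{\bm x^\top\bm w})$. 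The SymMoLogE case is analogous, giving $A(\bm\theta_{\bm x}) = \log\!\big((1+e^{\bm x^\top\bm\beta})(1+e^{\bm x^\top\bm w})\big)$. Setting $A(\bm\theta) := \mathbb{E}_X[A(\bm\theta_{\bm x})]$ recovers \eqref{eqn: Mirror Map A} and \eqref{eqn: Mirror Map B}. The key structural fact to record is that $\eta(\bm x, \cdot):\bm\theta \mapsto (\bm x^\top\bm\beta, \bm x^\top\bm w)$ is linear for each fixed $\bm x$.

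\textbf{Removing the projection.} Starting from the decomposition proved for Theorem~\ref{thm:EM is proj MD},
\[
Q(\bm\theta|\bm\theta^t) - Q(\bm\theta^t|\bm\theta^t) = \mathbb{E}_X\big[\langle \nabla L(\bm\theta^t_{\bm x}), \bm\theta_{\bm x} - \bm\theta^t_{\bm x}\rangle + D_A(\bm\theta_{\bm x}, \bm\theta^t_{\bm x})\big],
\]
I would push the expectation through using linearity of $\eta(\bm x,\cdot)$. Since $\bm\theta_{\bm x} - \bm\theta^t_{\bm x} = \eta(\bm x,\cdot)(\bm\theta - \bm\theta^t)$ and $\mathcal{L}(\bm\theta) = \mathbb{E}_X[L(\bm\theta_{\bm x})]$, the chain rule with the (constant) Jacobian of $\eta$ gives $\mathbb{E}_X[\langle \nabla L(\bm\theta^t_{\bm x}), \bm\theta_{\bm x} - \bm\theta^t_{\bm x}\rangle] = \langle \nabla\mathcal{L}(\bm\theta^t), \bm\theta - \bm\theta^t\rangle$, and the same linearity yields $\mathbb{E}_X[D_A(\bm\theta_{\bm x}, \bm\theta^t_{\bm x})] = D_A(\bm\theta, \bm\theta^t)$ for the aggregate map. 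Hence $Q(\bm\theta|\bm\theta^t)$ differs from the MD subproblem objective in \eqref{eqn:Mirror Descent SMOLINE} only by a constant, so $\argmin_{\bm\theta\in\Omega} Q(\bm\theta|\bm\theta^t)$ is exactly one MD step with no separate moment projection.

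For the divergence identity \eqref{eqn:Bregman Divergence is KL}, I would use the standard exponential-family fact $D_A(\bm\phi_{\bm x}, \bm\theta_{\bm x}) = \text{KL}[p(y,z|\bm\theta_{\bm x})\Vert p(y,z|\bm\phi_{\bm x})]$ pointwise; taking $\mathbb{E}_X$ and noting that the parameter-free factor $p(\bm x)$ cancels in the log-ratio promotes the expected conditional KL to the full complete-data $\text{KL}[p(\bm x,y,z;\bm\theta)\Vert p(\bm x,y,z;\bm\phi)]$. Strict convexity of $A$ is separable: the $\bm\beta$-block Hessian is $\mathbb{E}_X[\bm x\bm x^\top] = \bm I_d$ (SymMoLinE), while the gating/logistic blocks have Hessian $\mathbb{E}_X[\sigma(\bm x^\top\bm v)(1-\sigma(\bm x^\top\bm v))\bm x\bm x^\top]$, which is positive definite since the scalar weight is strictly positive and $\bm x\sim\mathcal{N}(\bm 0,\bm I_d)$ has full support; summing strictly convex blocks gives strict convexity. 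Relative smoothness then follows by invoking the pointwise $1$-smoothness of $L$ relative to $A(\bm\theta_{\bm x})$ from Theorem~\ref{thm:EM is proj MD}: convexity of $A(\bm\theta_{\bm x}) - L(\bm\theta_{\bm x})$ is preserved under composition with the linear $\eta$ and under $\mathbb{E}_X$, so $A(\bm\theta) - \mathcal{L}(\bm\theta)$ is convex, i.e., $\mathcal{L}$ is $1$-smooth relative to $A$.

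The main obstacle is the aggregation in the second step: verifying that the expected pointwise Bregman divergence collapses \emph{exactly} to the Bregman divergence of the aggregated mirror map. This is what eliminates the projection present in the general case, and it hinges entirely on $\eta(\bm x,\cdot)$ being linear in $\bm\theta$ — a property special to the symmetric two-expert models. The remaining work is bookkeeping with exponential-family normalizers and a positive-definiteness computation exploiting the full-support Gaussian design.
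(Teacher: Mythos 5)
Your proposal is correct, and it reaches the theorem by a genuinely different route than the paper. The paper's own proof is self-contained: for each model it re-derives a ``near exponential'' form of the \emph{complete-data} density, $p(\bm x,y,z;\bm\theta)=\exp\{\langle S(\bm x,y,z),\bm\theta\rangle + a(\bm x,y,\bm\theta)\}$, observes that $\mathbb{E}_{\bm X,Y}[a(\bm x,y,\bm\theta)]=-A(\bm\theta)-C$, and then computes $Q(\bm\theta|\bm\theta^t)=A(\bm\theta)-\langle s(\bm\theta^t),\bm\theta\rangle+C$ directly; the MD identity, the KL identity (via $s(\bm\theta)=\nabla A(\bm\theta)$, proved with a Jensen argument), and relative smoothness (via $\mathcal{L}(\bm\theta)\le Q(\bm\theta|\bm\phi)$ with equality at $\bm\phi=\bm\theta$) all follow from that single display, and strict convexity is read off the explicit block-diagonal Hessian together with the quantitative eigenvalue bounds of Lemma~\ref{lem:MIM bounds}. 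You instead specialize Theorem~\ref{thm:EM is proj MD} and prove an aggregation lemma: when $\eta(\bm x,\cdot)$ is linear in $\bm\theta$, the expected pointwise linearization and Bregman divergence collapse exactly to $\langle\nabla\mathcal{L}(\bm\theta^t),\bm\theta-\bm\theta^t\rangle+D_A(\bm\theta,\bm\theta^t)$ for the aggregate map $A(\bm\theta)=\mathbb{E}_X[A(\bm\theta_{\bm x})]$, so the moment projection is vacuous. Both arguments ultimately rest on the same structural fact (the sufficient statistics pair linearly with $\bm\theta$), but your organization isolates \emph{why} the projection disappears and would extend verbatim to any conditionally exponential-family mixture whose natural parameter is affine in $\bm\theta$; the paper's direct computation, in exchange, makes the result manifestly independent of the projected-MD machinery (it explicitly exhibits that the complete-data law is \emph{not} an exponential family, so the theorem is not a corollary of prior work) and produces the Hessian eigenvalue estimates that are reused later in the MIM/SNR analysis. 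Two minor points to tighten: your strict-convexity argument (strictly positive logistic weights plus the full-support Gaussian design) is sufficient for the theorem as stated, though weaker than the paper's quantitative bounds; and your relative-smoothness step silently uses the equivalence between $1$-relative-smoothness and convexity of $A-\mathcal{L}$, plus an exchange of $\nabla$ and $\mathbb{E}_X$ — both routine here, but they should be stated, whereas the paper's $Q$-upper-bound argument avoids them entirely.
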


As shown in the proof of the result, it is evident from \eqref{eqn: Not Exp SMOLINE} and \eqref{eqn: Not Exp SMOLOGE} that \( p(\bm{x}, y, z; \bm \theta) \) does not belong to an exponential family of distributions for either SymMoLinE or SymMoLogE. Therefore, this result does not simply follow as a corollary of \citep[Proposition 1]{kunstner2022homeomorphicinvarianceemnonasymptoticconvergence}, but stands as an independent finding, introducing another class of latent variable models where EM is equivalent to MD. A couple of follow-up remarks are in order.

First, as the loss is 1-smooth relative to \( A \), this validates the choice of \( \eta = 1 \) for the Mirror Descent update, and subsequent convergence results from the MD literature. Specifically, in Corollary~\ref{cor:Convergence Prop} of Appendix~\ref{app: Conv for Lin and Log}, we contextualize convergence results from \citep{lu2017relativelysmoothconvexoptimizationfirstorder,kunstner2022homeomorphicinvarianceemnonasymptoticconvergence} for SymMoLinE and SymMoLogE that feature 1) a guaranteed sub-linear rate of convergence to a stationary point at no additional assumption, 2) a sub-linear rate of convergence to the true parameter if initialized within a convex region of the loss-function that includes the true parameters, and 3) a linear rate of convergence to the true parameter if initialized within a region of the loss function that contains the true parameters and is strongly-convex relative to the mirror map. Then, in Theorem~\ref{thm:MIM} of Appendix~\ref{app: Satisfiability of Conditions Lin Log}, we further characterize the assumptions required for linear convergence by relating the relative strong convexity of the objective to the eigenvalues of the MIM. Lastly, in Theorem~\ref{thm: Characterization of MIM} of Appendix~\ref{app: Existence of Convex Region SMOLINE}, we characterize the existence of the local region of convergence as a function of the SNR of the generative model for the cases of SymMoLinE and SymMoLogE. We then conclude in Appendix \ref{app: discussion Lin Log} with a discussion of the implications of the results.
\section{Experiments}
\label{sec:Experiment}

In this section, we empirically validate our theoretical results by comparing the performance of EM with Gradient EM (Algorithm \ref{alg: Gradient EM MOE}), and Gradient Descent (GD, Algorithm \ref{alg: Gradient Descent MOE}). Recall that EM for MoE obtains its next parameter iterate as the concatenation to the solutions of two minimization problems. Instead, Gradient EM obtains its next parameter iterate as the concatenation of a single gradient update on the respective sub-minimization problems of EM. This differs from GD that obtains its next parameter iterate as the gradient update on the negative log-likelihood objective $\mathcal{\bm \theta}$. We evaluate these methods on both a synthetic dataset and the real-world Fashion MNIST dataset, consistently reporting significant improvements for EM and Gradient EM over GD. We also provide mini-batch CIFAR-10 experiments with more than $2$-experts in Appendix \ref{app:CIFAR exp}. Note that our aim is not to achieve state-of-the-art accuracy, but to reiterate that EM can be more suitable than GD for fitting specific models.

\begin{figure}[H]
    \begin{subfigure}{3in}
      \centering
      \includegraphics[width=\linewidth]{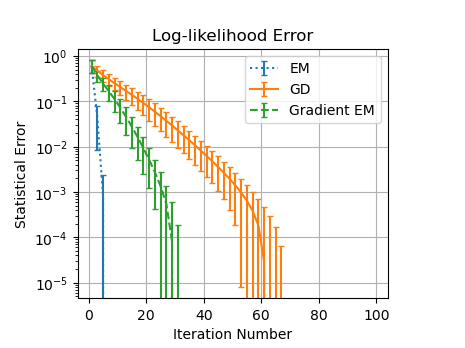}
      \caption{}
      \label{fig:sub1_stat_err_lik}
    \end{subfigure}%
    \begin{subfigure}{3in}
      \centering
      \includegraphics[width=\linewidth]{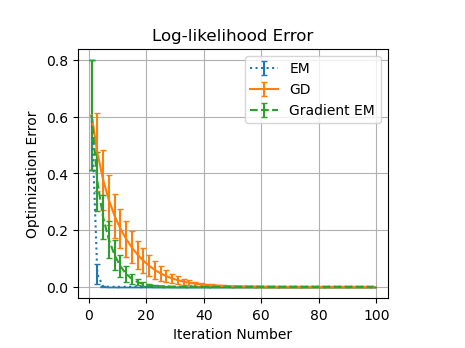}
      \caption{}
      \label{fig:sub2_opt_err_lik}
    \end{subfigure}
    \caption{Convergence of objective errors $\mathcal{L}(\bm \theta^t) - \mathcal{L}(\bm \theta^*)$ and $\mathcal{L}(\bm \theta^t) - \mathcal{L}(\bm \theta^T)$ in Fig~\ref{fig:sub1_stat_err_lik} and Fig~\ref{fig:sub2_opt_err_lik}, respectively, averaged over $50$ instances when fitting a SymMoLinE.}
    \label{fig: log-lig err SMoLinE}
\end{figure}
\begin{figure}[H]
    \begin{subfigure}{3in}
      \centering
     \includegraphics[width=\linewidth]{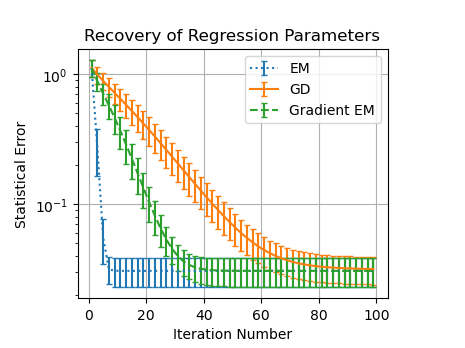}
      \caption{}
      \label{fig:sub1_beta_progress}
    \end{subfigure}%
    \begin{subfigure}{3in}
      \centering
       \includegraphics[width=\linewidth]{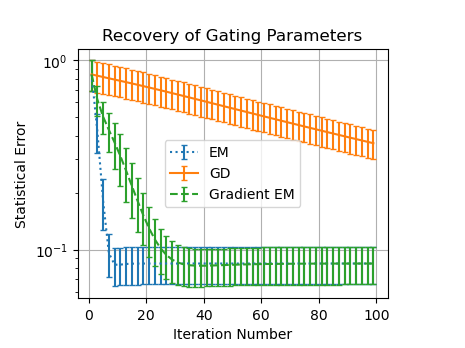}
      \caption{}
      \label{fig:sub2_w_progress}
    \end{subfigure}
    \caption{This figure shows the progress made towards the true parameters, $\frac{\Vert \bm \beta^t - \bm \beta^*\Vert_2}{\Vert\bm \beta^*\Vert_2}$ and $\frac{\Vert \bm w^t - \bm w^*\Vert_2}{\Vert \bm w^*\Vert_2}$ in figures \ref{fig:sub1_beta_progress} and \ref{fig:sub2_w_progress} respectively, averaged over $50$ instances when fitting a SymMoLinE}
    \label{fig:Parameter REcovery SMOLinE}
\end{figure}

\textbf{Synthetic Dataset.} We created the synthetic dataset so as to simulate a population setting of SymMoLinE. We sampled $10^3$ data points from an SymMoLinE with known additive unit Gaussian noise (i.e. $\mathcal{N}(0,1)$) and true parameters  $\bm \beta^*,\bm w^*\in \mathbb{R}^{10}$ that sastisfy $\Vert \bm \beta^* \Vert_2 = \Vert \bm w^* \Vert_2 = 4$. Subsequently, we run full-batch EM, Gradient EM, and GD for $50$ iterations and report the results on the training set averaged over $50$ instances. Each time, re-sampling the true parameters, initial parameters, and whole dataset. The initial parameters, are randomly initialize within a neighborhood of the true parameters, and are consistent across all benchmarks.

Figure \ref{fig: log-lig err SMoLinE} shows the objective function progress. EM requires fewer iterations to fit the mixture compared to both Gradient EM and GD, with Gradient EM also outperforming GD in fitting time.
Figure~\ref{fig:Parameter REcovery SMOLinE} illustrates the progress toward recovering the true SymMoLinE parameters. Once again, EM requires significantly fewer iterations to fit the mixture compared to both Gradient EM and GD, with Gradient EM also taking considerably less time than GD.

Overall, we observe that all three algorithms exhibit a linear convergence rate, both in optimizing the objective function and fitting the true parameters. This aligns with our theoretical results for MoE and is consistent with findings for Mixtures of Gaussians and Mixtures of Linear Regression in high SNR scenarios. To validate our results further, we perform a paired t-test \cite{ross2018basic}. For EM and Gradient EM compared to GD, we obtain a T-statistic $\geq 22$ indicating that the difference in final accuracy is statistically significant (p-value $\sim 0.000$).

\textbf{Validation Experiment on Fashion MNIST.}
For the small scale proof of concept experiment on Fashion MNIST \cite{xiao2017fashionmnistnovelimagedataset}, we alter the dataset so as to simulate a mixture of $2$ Logistic Experts. To do so, we  perform an inversion transformation on the images at random with probability~$\frac{1}{2}$. Effectively, the transformation inverts the images from a white article of clothing on a black background to a black article of clothing on a white background. As shown in Table \ref{table: MNISTStatssingleExpert}, the single expert on the original Fashion MNIST dataset reaches an accuracy of $83.2\%$ on the test set. Meanwhile, the single expert cannot achieve better than an accuracy of $10.2 \%$ on the altered dataset. This suggests a $2$-component MoLogE is appropriate for fitting the altered dataset, so long as the ground truth partitioning is linear in image space. 

The $2$-component MoLogE to be trained consists of one Linear gating layer of dimension $2\times28\times28$, and $2$ logistic experts of dimension $10\times28\times28$ each. We randomly initialize each linear layer to be unit norm and execute the algorithms on the same datasets and with the same initializations. For Gradient EM, the only additional code needed over GD is to define the EM Loss function appropriately, and then perform a Gradient Step on the Gating parameters and the Expert parameters separately as describe in Algorithm \ref{alg: Gradient EM MOE}. For EM, for each iteration, we perform several gradient steps in an inner loop to approximately recover the solutions to the sub-problems described in \eqref{eqn:EM Iteration}. We report our findings for the full-batch iteration of the respective algorithms in Table \ref{table: MNISTStats} and Figure \ref{fig:MNIST 1}.

In Table \ref{table: MNISTStats}, we report the respective final test accuracy and cross-entropy loss values after $100$ iterations of EM, Gradient EM and GD for fitting a $2$-component MoLogE on the altered Fashion MNIST dataset, averaged over $25$ instances. We see that EM boasts a much improved final test accuracy that nearly recovers the single expert accuracy on the original unaltered Fashion MNIST dataset of $79.2\%$. Meanwhile, Gradient EM also registers an improvement over GD. In Figure \ref{fig:MNIST 1}, we report the progress made on the accuracy and objective function for the test set over the $100$ iterations, averaged over $25$ instances. As was observed in our synthetic experiment, EM takes considerably less iterations to fit the mixture than both Gradient EM and GD, where the former also takes considerably less time to fit the mixture than GD. To validate our results further, we perform a paired t-test. For EM and Gradient EM compared to GD, we obtain a T-statistic $\geq 17$ indicating that the difference in final accuracy is statistically significant (p-value $\sim 0.000$).

\begin{table}[t]
\centering
\caption{Performance for single Logistic Expert}
\vskip 0.1in
\begin{tabular}{|l|c|c|}
\hline
                       & \multicolumn{1}{l|}{\textbf{Accuracy}} & \multicolumn{1}{l|}{\textbf{Random Invert}} \\ \hline
\textbf{Single Expert} & \textit{83.2\%}                        & \textit{No}                                 \\ \hline
\textbf{Single Expert} & 10.2\%                                 & Yes                                         \\ \hline
\end{tabular} \label{table: MNISTStatssingleExpert}
\end{table}

\begin{table}[t]
\centering
\caption{Performance for $2$-Component MoLogE}
\vskip 0.1in
\begin{tabular}{|l|c|c|}
\hline
                          & \multicolumn{1}{l|}{\textbf{\!Accuracy\!}} & \multicolumn{1}{l|}{\textbf{\!Cross Entropy\!}} \\ \hline
\textbf{EM}               & \textit{78.5\%}                        & \textit{0.827}                              \\ \hline
\textbf{Gradient EM}      & 66.0\%                                 & 1.29                                        \\ \hline
\textbf{Gradient Descent\!\!} & 62.4\%                                 & 1.30                                       \\ \hline
\end{tabular} \label{table: MNISTStats}
\end{table}

\begin{figure}[H]
    \begin{subfigure}{3in}
      \centering
      \includegraphics[width=\linewidth]{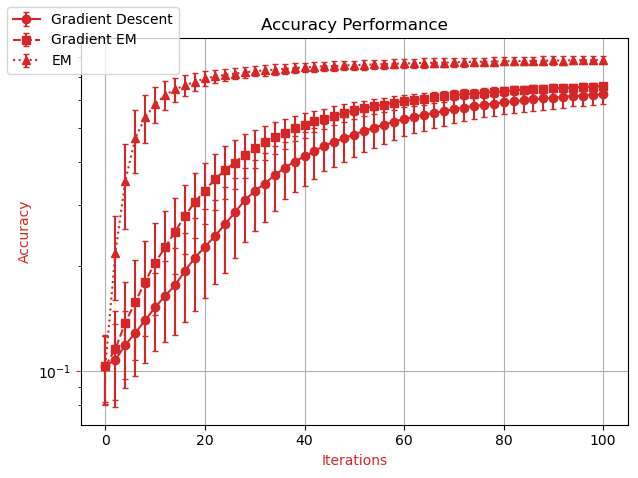}
      \caption{}
      \label{fig:sub1_accuracy_progress}
    \end{subfigure}%
    \begin{subfigure}{3in}
      \centering
      \includegraphics[width=\linewidth]{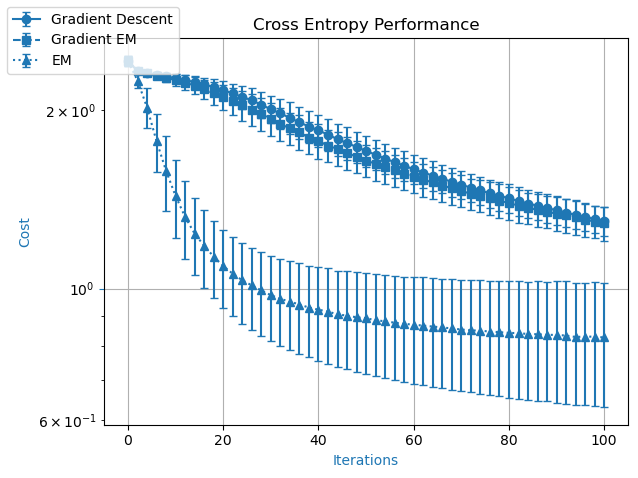}
      \caption{}
      \label{fig:sub2_cross_entropy_progress}
    \end{subfigure}
    \caption{Test accuracy and objective function, $\frac{1}{n}\sum_{i=1}^n \mathbbm{1}_{\hat{y_i} = y_i}$ and $\mathcal{L}(\bm \theta^t)$ in \ref{fig:sub1_accuracy_progress} and \ref{fig:sub2_cross_entropy_progress}, respectively, averaged over $25$ instances for a $2$-component MoLogE train on Random Invert FMNIST.}
    \label{fig:MNIST 1}
\end{figure}

\section{Conclusion}

In this paper, we theoretically addressed the problem of Mixtures of Experts (MoE) with the use of the EM algorithm. We first showed that the EM update for MoE could be interpreted as a projected Mirror Descent step on the log-likelihood with a unit step size and a KL divergence regularizer, extending the result of \cite{kunstner2022homeomorphicinvarianceemnonasymptoticconvergence} beyond complete data distribution belonging to an exponential family. Building on this, we characterized different convergence rates for EM in this setting under various assumptions about the log-likelihood function and specified when these assumptions held. Lastly, we empirically observed that EM can outperform gradient descent in both convergence rate and final performance.

\section*{Acknowledgements}
This research was supported by NSF Encore Tripods (2217069), NSF's AI Institute IFML (2019844) and NSF Grant 2007668.
\newpage
\printbibliography

@article{malladi2022sdes,
  title={On the SDEs and scaling rules for adaptive gradient algorithms},
  author={Malladi, Sadhika and Lyu, Kaifeng and Panigrahi, Abhishek and Arora, Sanjeev},
  journal={Advances in Neural Information Processing Systems},
  volume={35},
  pages={7697--7711},
  year={2022}
}

@article{goyal2017accurate,
  title={Accurate, large minibatch sgd: Training imagenet in 1 hour},
  author={Goyal, Priya and Doll{\'a}r, Piotr and Girshick, Ross and Noordhuis, Pieter and Wesolowski, Lukasz and Kyrola, Aapo and Tulloch, Andrew and Jia, Yangqing and He, Kaiming},
  journal={arXiv preprint arXiv:1706.02677},
  year={2017}
}

@article{shuai2024scaling,
  title={Scaling Law for Language Models Training Considering Batch Size},
  author={Shuai, Xian and Wang, Yiding and Wu, Yimeng and Jiang, Xin and Ren, Xiaozhe},
  journal={arXiv preprint arXiv:2412.01505},
  year={2024}
}

@inproceedings{duchi2010composite,
  title={Composite objective mirror descent.},
  author={Duchi, John C and Shalev-Shwartz, Shai and Singer, Yoram and Tewari, Ambuj},
  booktitle={Colt},
  volume={10},
  pages={14--26},
  year={2010},
  organization={Citeseer}
}

@book{ross2018basic,
  title={Basic and advanced statistical tests: Writing results sections and creating tables and figures},
  author={Ross, Amanda and Willson, Victor L},
  year={2018},
  publisher={Springer}
}

@article{JordannJacobs,
  title={Hierarchical mixtures of experts and the {EM} algorithm},
  author={Jordan, Michael I and Jacobs, Robert A},
  journal={Neural computation},
  volume={6},
  number={2},
  pages={181--214},
  year={1994},
  publisher={MIT Press}
}

@article{chen2022understandingmixtureexpertsdeep,
  title={Towards understanding mixture of experts in deep learning},
  author={Chen, Zixiang and Deng, Yihe and Wu, Yue and Gu, Quanquan and Li, Yuanzhi},
  journal={arXiv preprint arXiv:2208.02813},
  year={2022}
}

@article{WilliamFedus2022,
  title={Switch transformers: {S}caling to trillion parameter models with simple and efficient sparsity},
  author={Fedus, William and Zoph, Barret and Shazeer, Noam},
  journal={Journal of Machine Learning Research},
  volume={23},
  number={120},
  pages={1--39},
  year={2022}
}

@inproceedings{Ma2018,
  title={Modeling task relationships in multi-task learning with multi-gate mixture-of-experts},
  author={Ma, Jiaqi and Zhao, Zhe and Yi, Xinyang and Chen, Jilin and Hong, Lichan and Chi, Ed H},
  booktitle={Proceedings of the 24th ACM SIGKDD international conference on knowledge discovery \& data mining},
  pages={1930--1939},
  year={2018}
}

@article{6296526,
  title={Deep neural networks for acoustic modeling in speech recognition: The shared views of four research groups},
  author={Hinton, Geoffrey and Deng, Li and Yu, Dong and Dahl, George E and Mohamed, Abdel-rahman and Jaitly, Navdeep and Senior, Andrew and Vanhoucke, Vincent and Nguyen, Patrick and Sainath, Tara N and others},
  journal={IEEE Signal processing magazine},
  volume={29},
  number={6},
  pages={82--97},
  year={2012},
  publisher={IEEE}
}

@inproceedings{kunstner2022homeomorphicinvarianceemnonasymptoticconvergence,
  title={Homeomorphic-invariance of em: {N}on-asymptotic convergence in kl divergence for exponential families via mirror descent},
  author={Kunstner, Frederik and Kumar, Raunak and Schmidt, Mark},
  booktitle={International Conference on Artificial Intelligence and Statistics},
  pages={3295--3303},
  year={2021},
  organization={PMLR}
}

@article{plots,
  title={Statistical guarantees for the {EM} algorithm: {F}rom population to sample-based analysis},
  author={Balakrishnan, Sivaraman and Wainwright, Martin J and Yu, Bin},
  year={2017}
}

@article{EM,
  title={Maximum likelihood from incomplete data via the {EM} algorithm},
  author={Dempster, Arthur P and Laird, Nan M and Rubin, Donald B},
  journal={Journal of the royal statistical society: series B (methodological)},
  volume={39},
  number={1},
  pages={1--22},
  year={1977},
  publisher={Wiley Online Library}
}

@article{Jeff,
  title={On the convergence properties of the {EM} algorithm},
  author={Wu, CF Jeff},
  journal={The Annals of statistics},
  pages={95--103},
  year={1983},
  publisher={JSTOR}
}

@article{Tseng,
  title={An analysis of the {EM} algorithm and entropy-like proximal point methods},
  author={Tseng, Paul},
  journal={Mathematics of Operations Research},
  volume={29},
  number={1},
  pages={27--44},
  year={2004},
  publisher={INFORMS}
}

@inproceedings{Daskalakis2017EMGaussMix,
  title={Ten steps of {EM} suffice for mixtures of two {G}aussians},
  author={Daskalakis, Constantinos and Tzamos, Christos and Zampetakis, Manolis},
  booktitle={Conference on Learning Theory},
  pages={704--710},
  year={2017},
  organization={PMLR}
}

@article{NEURIPS2018acc21473,
  title={Theoretical guarantees for EM under misspecified Gaussian mixture models},
  author={Dwivedi, Raaz and Khamaru, Koulik and Wainwright, Martin J and Jordan, Michael I and others},
  journal={Advances in Neural Information Processing Systems},
  volume={31},
  year={2018}
}

@inproceedings{Caramanis2019,
  title={Global convergence of the {EM} algorithm for mixtures of two component linear regression},
  author={Kwon, Jeongyeol and Qian, Wei and Caramanis, Constantine and Chen, Yudong and Davis, Damek},
  booktitle={Conference on Learning Theory},
  pages={2055--2110},
  year={2019},
  organization={PMLR}
}

@inproceedings{Caramanis20212MLR,
  title={On the minimax optimality of the {EM} algorithm for learning two-component mixed linear regression},
  author={Kwon, Jeongyeol and Ho, Nhat and Caramanis, Constantine},
  booktitle={International Conference on Artificial Intelligence and Statistics},
  pages={1405--1413},
  year={2021},
  organization={PMLR}
}

@inproceedings{Caramanis2019kMLR,
  title={{EM} converges for a mixture of many linear regressions},
  author={Kwon, Jeongyeol and Caramanis, Constantine},
  booktitle={International Conference on Artificial Intelligence and Statistics},
  pages={1727--1736},
  year={2020},
  organization={PMLR}
}

@inproceedings{Caramanis2020GaussMix,
  title={The {EM} algorithm gives sample-optimality for learning mixtures of well-separated gaussians},
  author={Kwon, Jeongyeol and Caramanis, Constantine},
  booktitle={Conference on Learning Theory},
  pages={2425--2487},
  year={2020},
  organization={PMLR}
}

@article{JORDAN19951409,
  title={Convergence results for the {EM} approach to mixtures of experts architectures},
  author={Jordan, Michael I and Xu, Lei},
  journal={Neural networks},
  volume={8},
  number={9},
  pages={1409--1431},
  year={1995},
  publisher={Elsevier}
}

@article{MoEReview,
  title={Twenty years of mixture of experts},
  author={Yuksel, Seniha Esen and Wilson, Joseph N and Gader, Paul D},
  journal={IEEE transactions on neural networks and learning systems},
  volume={23},
  number={8},
  pages={1177--1193},
  year={2012},
  publisher={IEEE}
}

@inproceedings{makkuva2019breakinggridlockmixtureofexpertsconsistent,
  title={Breaking the gridlock in mixture-of-experts: {C}onsistent and efficient algorithms},
  author={Makkuva, Ashok and Viswanath, Pramod and Kannan, Sreeram and Oh, Sewoong},
  booktitle={International Conference on Machine Learning},
  pages={4304--4313},
  year={2019},
  organization={PMLR}
}

@article{lu2017relativelysmoothconvexoptimizationfirstorder,
  title={Relatively smooth convex optimization by first-order methods, and applications},
  author={Lu, Haihao and Freund, Robert M and Nesterov, Yurii},
  journal={SIAM Journal on Optimization},
  volume={28},
  number={1},
  pages={333--354},
  year={2018},
  publisher={SIAM}
}

@inproceedings{OrchardWoodbury1972AMISSINGINFORMATIONPRINCIPLE,
  title={A missing information principle: theory and applications},
  author={Orchard, Terence and Woodbury, Max A},
  booktitle={Proceedings of the Sixth Berkeley Symposium on Mathematical Statistics and Probability, Volume 1: Theory of Statistics},
  volume={6},
  pages={697--716},
  year={1972},
  organization={University of California Press}
}

@misc{xiao2017fashionmnistnovelimagedataset,
      title={Fashion-MNIST: a Novel Image Dataset for Benchmarking Machine Learning Algorithms}, 
      author={Han Xiao and Kashif Rasul and Roland Vollgraf},
      year={2017},
      eprint={1708.07747},
      archivePrefix={arXiv},
      primaryClass={cs.LG},
      url={https://arxiv.org/abs/1708.07747}, 
}

@article{jacobs1991adaptive,
  title={Adaptive mixtures of local experts},
  author={Jacobs, Robert A and Jordan, Michael I and Nowlan, Steven J and Hinton, Geoffrey E},
  journal={Neural computation},
  volume={3},
  number={1},
  pages={79--87},
  year={1991},
  publisher={MIT Press}
}

@article{becker2020expected,
  title={Expected information maximization: Using the i-projection for mixture density estimation},
  author={Becker, Philipp and Arenz, Oleg and Neumann, Gerhard},
  journal={arXiv preprint arXiv:2001.08682},
  year={2020}
}

@article{liu2024deepseek,
  title={Deepseek-v3 technical report},
  author={Liu, Aixin and Feng, Bei and Xue, Bing and Wang, Bingxuan and Wu, Bochao and Lu, Chengda and Zhao, Chenggang and Deng, Chengqi and Zhang, Chenyu and Ruan, Chong and others},
  journal={arXiv preprint arXiv:2412.19437},
  year={2024}
}
\newpage
\appendix
\appendix
\onecolumn

\par\noindent\rule{\textwidth}{0.5pt}
\begin{center}
    \LARGE Appendix
\end{center}
\par\noindent\rule{\textwidth}{0.5pt}

\tableofcontents

\newpage
\subsection*{Notations}
\label{app:Notations}

We summarize here the notations used throughout the paper. For clarity, we distinguish between different types of mathematical objects (e.g., vectors, random variables, distributions) and follow standard conventions where possible.

The Kullback-Leibler (KL) divergence of a distribution \( p \) from a distribution \( q \) is denoted by \( \text{KL}[q \,\|\, p] := \int q(x) \log\left(q(x)/p(x)\right) dx \). We use lowercase letters such as \( p \) to denote continuous probability density functions and uppercase letters such as \( P \) to denote discrete probability mass functions. The Euclidean (or \(\ell_2\)) norm of a vector is denoted by \( \|\cdot\|_2 \). We use the compact notation $[k] := \{1,2,...,k\}$.

We denote vectors using bold lowercase letters (e.g., \( \bm{x} \)), and random variables using uppercase letters (e.g., \( X \)). Bold uppercase letters (e.g., \( \bm{X} \)) are used to represent either vector-valued random variables or matrices; the distinction between the two is clear from context. For a matrix \( \bm{M} \), we denote its \( i^\text{th} \) eigenvalue by \( \lambda_i \), and the corresponding eigenvector by \( \bm{v}_i \). The minimum and maximum eigenvalues of \( \bm{M} \) are denoted by \( \lambda_{\min} \) and \( \lambda_{\max} \), respectively. We use \( \bm{I}_d \) to denote the \( d \times d \) identity matrix, and \( \bm{e}_i \) to denote the \( i^\text{th} \) standard basis (unit) vector in \( \mathbb{R}^d \).

Expectations are written as \( \mathbb{E}_{\bm{X}}[f(\bm{X})] = \int p(\bm{x}) f(\bm{x}) d\bm{x} \), where the distribution of \( \bm{X} \) is implicitly defined by the context. When needed, we make the dependence on parameters explicit by writing \( \bm{X}; \bm{\theta} \), where \( \bm{\theta} \) denotes the parameters of the distribution of \( \bm{X} \). The notation \( \mathcal{N}(\bm{\mu}, \bm{\Sigma}) \) refers to the multivariate normal distribution with mean vector \( \bm{\mu} \) and covariance matrix \( \bm{\Sigma} \).

\newpage
\section{EM, Projected Mirror Descent, and General MoE.}
\label{app: EM is Proj MD}

In this Section, we provide all the complete proofs and discussions relating to results from Section \ref{section: EM is Mirror Descent for SMOLinE}.

\subsection{EM is Projected Mirror Descent for General MoE.}
\label{app: thm 0 proof}

In this section, we provide the full and detailed proof of the main result, Theorem \ref{thm:EM is proj MD}. For ease of comprehension and in the hope that this will provide useful insights into other types of non-exponential family mixtures for which EM is also connected to MD, we prove our result following the same general ideas as that of \citet[Proposition 1]{kunstner2022homeomorphicinvarianceemnonasymptoticconvergence}.

For ease of reading, we re-state the theorem below:\\

\textbf{Theorem \ref{thm:EM is proj MD}:}
    For General MoE, there exists a natural re-parameterization $\bm \theta_{\bm x} \in \{\eta(\cdot, \bm \theta): \theta \in \Omega\}$ with 
    \begin{equation*}
         \mathcal{L}(\bm \theta) = \mathbb{E}_X \left[L(\bm \theta_{\bm x})\right]
    \end{equation*}and a mirror map $A(\bm \theta_{\bm x})$ such that the EM update in \eqref{eqn:EM Iteration} simplifies and is equivalent to the expectation moment projection,
    \begin{equation*}
        \argmin_{\bm \theta \in \Omega} \mathbb{E}_X \left[\text{KL}\left[ p\left(y,z \middle|\bm \tilde{\theta}^{t+1}_{\bm x}\right)\middle\Vert p\left(y, z \middle|\eta(\bm x ,\bm \theta)\right) \right]\right],
    \end{equation*}
    where for each $\bm x$, $\bm \tilde{\bm \theta}^{t+1}_{\bm x}$ is obtained from the following MD step,
    \begin{equation*}
    \label{eqn:Mirror Descent SMOLINE appendix}
        \argmin_{\bm \psi\in \tilde{\Omega}} \langle \nabla L(\bm \theta_{\bm x}^t), \bm \psi - \bm \theta_{\bm x}^t \rangle + D_A(\bm \psi, \bm \theta_{\bm x}^t),
    \end{equation*}
    with $L(\bm \theta_{\bm x})$ being $1$-smooth relative to $A(\bm \theta_{\bm x})$. Further,  $\forall \bm\psi_1, \bm\psi_2 \in \tilde{\Omega}$, the divergence function $ D_A(\bm \psi_1,\bm \psi_2)$ is equal to the $KL$ divergence on $p(y,z,|\psi)$:
    \begin{equation*}
    \label{eqn:Bregman Divergence is KL}
        D_A(\bm \psi_1,\bm \psi_2) = \text{KL} [ p(y,z|\bm \psi_2) \Vert p(y,z|\bm \psi_1)].
    \end{equation*}

\begin{proof}
The EM is centered around iterative minimization of the surrogate upper-bound $Q(\phi|\theta)$. For conditionally exponential family of distribution, We can decompose it in terms of the sufficient statistic and log-partition:
\begin{align*}
    Q(\bm\theta|\bm\theta^t) &= -\mathbb{E}_{Y,X}\left[\sum_{z} \ln(p(y,\bm x,z;\bm\theta)) P(z|y,\bm x;\bm \theta^t)\right] \\
    &= -\mathbb{E}_{Y,X}\left[\sum_z \left(\ln(p(y,z|\bm x;\bm \theta)) + \ln(p(\bm x))\right) P(z|y,\bm x;\bm \theta^t)\right]\\
    &= -\mathbb{E}_{Y,X}\left[\sum_z \left(\langle S(y,z), \theta_{\bm x}\rangle - A(\bm \theta^t_{\bm x})) + \ln(p(\bm x))\right) P(z|y,\bm x;\bm \theta^t)\right]\\
    &=\mathbb{E}_{X}\left[-\langle\underbrace{\mathbb{E}_{Y|\bm x,\bm \theta^*}\mathbb{E}_{Z|y,\bm x,\bm \theta^t}\left[S(y,z)\right]}_{s(\bm x;\bm \theta^t)}, \bm \theta_{\bm x}\rangle + A(\bm \theta_{\bm x}) - \ln(p(\bm x))\right]\\
    &=\mathbb{E}_{X}\left[-\langle s(\bm x;\bm \theta^t), \bm \theta_{\bm x}\rangle + A(\bm \theta_{\bm x}) - \ln(p(\bm x))\right].
\end{align*}
Therefore, the above also implies 
\begin{align}\label{eqn: proj Grad Decomp}
    \mathbb{E}_X\left[ \nabla L(\bm \theta^t_{\bm x})\right] &= \mathbb{E}_X\left[ \nabla Q(\bm \theta^t_{\bm x}|\bm \theta^t_{\bm x} )\right]\\
    &=\mathbb{E}_X\left[ \nabla A(\bm \theta_{\bm x}) - s(\bm x;\bm \theta^t) \right]
\end{align}
Simple algebra then shows that
\begin{align*}
    Q(\bm \theta|\bm \theta^t) - Q(\bm \theta^t|\bm \theta^t) &= \mathbb{E}_{X}\left[-\langle s(\bm x;\bm \theta^t), \bm\theta_{\bm x}\rangle + A(\bm \theta_{\bm x}) +\langle s(\bm x;\bm \theta^t), \bm \theta^t_{\bm x}\rangle - A(\bm\theta^t_{\bm x})\right]\\
    &= \mathbb{E}_X\left[-\langle s(\bm x;\bm\theta^t), \bm\theta_{\bm x} - \bm\theta_{\bm x}\rangle + A(\bm\theta_{\bm x}) - A(\bm \theta^t_{\bm x})\right]\\
    &\stackrel{i)}{=} \mathbb{E}_X\left[-\langle s(\bm x;\bm\theta^t) - \nabla A(\bm \theta^t_{\bm x}), \bm\theta_{\bm x} - \bm\theta^t_{\bm x}\rangle + D_A(\bm \theta_{\bm x},\bm \theta^t_{\bm x})\right]\\
    &= \mathbb{E}_X\left[\langle \nabla L(\bm \theta^t_{\bm x}), \bm\theta_{\bm x} - \bm \theta^t_{\bm x}\rangle + D_A(\bm\theta_{\bm x},\bm\theta^t_{\bm x})\right]
\end{align*}

where $i)$ adds and subtracts $\langle \nabla A(\bm \theta^t_{\bm x}), \bm\theta_{\bm x} - \bm\theta^t_{\bm x}\rangle$. This is especially important as EM minimizes $Q(\bm \theta|\bm\theta^t) - Q(\bm\theta^t|\bm\theta^t)$ in each iteration. Now recall that $\mathcal{L}(\bm\theta) = Q(\bm\theta|\bm\theta^t) - H(\bm\theta|\bm\theta^t)$ and $H(\bm\theta^t|\bm\theta^t) - H(\bm\theta|\bm\theta^t) \leq 0$, it follows that
\begin{align*}
    \mathcal{L}(\bm\theta) - \mathcal{L}(\bm\theta^t) &= Q(\bm\theta|\bm\theta^t) - Q(\bm\theta^t|\bm\theta^t) - H(\bm\theta|\bm\theta^t) - H(\bm\theta^t|\bm\theta^t)\\
    &\leq Q(\bm\theta|\bm\theta^t) - Q(\bm\theta^t|\bm\theta^t)\\
    &= \mathbb{E}_X\left[\langle \nabla L(\bm\theta^t_{\bm x}), \bm \theta_{\bm x} - \bm\theta^t_{\bm x}\rangle + D_A(\bm\theta_{\bm x},\bm \theta^t_{\bm x})\right]
\end{align*}

Thus far, we have shown that the EM iteration under the considered setting is equivalent to minimizing the following upper bound on $\mathcal{L}(\bm \theta)$, w.r.t. $\bm \theta$:
\begin{equation}
\label{eqn:EM is Mirr 1.0}
    \mathcal{L}(\bm \theta^t) + \mathbb{E}_X\left[\langle \nabla L(\bm \theta^t_{\bm x}), \bm\theta_{\bm x} - \bm \theta^t_{\bm x}\rangle + D_A(\bm \theta_{\bm x},\bm \theta^t_{\bm x})\right]
\end{equation}

Now, recall $\bm \tilde{\theta}^{t+1}_{\bm x} := \argmin_{\bm \theta_{\bm x}} \langle \nabla L(\bm \theta^t_{\bm x}), \bm \theta_{\bm x} - \bm \theta^t_{\bm x}\rangle + D_A(\bm \theta_{\bm x},\bm \theta^t_{\bm x})$ which is the outcome of a mirror descent step. Differentiating and setting equal to $0$, it holds that
\begin{equation}\label{eqn: proj A and s}
    \nabla A(\bm \tilde{\bm \theta}^{t+1}_{\bm x}) = s(\bm x;\bm \theta^t).
\end{equation}
Using this and decomposing $\nabla L(\bm \theta_{\bm x})$ above, we see that \eqref{eqn:EM is Mirr 1.0} is equal to
\begin{align*}
    &=\mathcal{L}(\bm \theta^t) + \mathbb{E}_X\left[\langle \nabla A(\bm \theta^t_{\bm x}) - s(\bm x;\bm \theta^t_{\bm x}), \bm\theta_{\bm x} - \bm\theta^t_{\bm x}\rangle + D_A(\bm \theta_{\bm x}, \bm\theta^t_{\ x})\right]\\
    &= \mathcal{L}(\bm \theta^t) + \mathbb{E}_X\left[\langle \nabla A(\bm\theta^t_{\bm x}) - \nabla A(\bm \tilde{\theta}^{t+1}_{\bm x}), \bm \theta_{\bm x} - \bm\theta^t_{\bm x}\rangle + D_A(\bm\theta_{\bm x},\bm\theta^t_{\bm x})\right]\\
    &= \mathcal{L}(\bm\theta^t) + \mathbb{E}_X\left[-\langle \nabla A(\bm\tilde{\theta}^{t+1}_{\bm x}), \bm\theta_{\bm x} - \bm\theta^t_{\bm x}\rangle + A(\bm \theta_{\bm x}) - A(\bm \theta^t_{\bm x})\right].
\end{align*}
Thus, minimizing \eqref{eqn:EM is Mirr 1.0} with respect to $\bm \theta_{\bm x}$ is equivalent to minimizing
\begin{equation}\label{eqn: obj of projection}
    \mathbb{E}_X\left[D_A(\bm\theta_{\bm x}, \bm\tilde{\bm\theta}^{t+1}_{\bm x})\right].
\end{equation}
Substituting the Bregman Divergence induced by $A$ by the KL divergence yields the claim. 

It remains to verify that $D_A(\bm\theta_{\bm x}, \bm\tilde{\bm\theta}^{t+1}_{\bm x}) = \text{KL}\left[p(y,z|\bm\tilde{\bm\theta}^{t+1}_{\bm x}) || p(y,z|\bm\theta_{\bm x})\right]$ and that the function $L(\bm \theta_{\bm x})$ is $1$-smooth relative to $A(\bm \theta_{\bm x})$. This follows directly from previous work by \citet{kunstner2022homeomorphicinvarianceemnonasymptoticconvergence} since $L(\bm \theta_{\bm x})$ is the expected negative log-likelihood of $y|\bm x$ where $A(\bm \theta_{\bm x})$ is the log-partition of the exponential distribution $p(y,z|\bm \theta_{\bm x})$. For completeness, the derivation is as follows. For $Y,Z | \bm x$ belonging to an exponential family of distribution, the KL divergence can be decomposed directly using \eqref{eqn: proj A and s} as follows to obtain the Bregman Divergence:
\begin{align*}
    \text{KL}\left[p(y,z|\bm\tilde{\bm\theta}^{t+1}_{\bm x}) || p(y,z|\bm\theta_{\bm x})\right] &:= \mathbb{E}_{Y,Z|\bm x;\tilde{\bm \theta}_{\bm x}^{t+1}} \left[\log\left(\frac{p(y,z;\tilde{\bm \theta}_{\bm x}^{t+1})}{p(y,z;\bm \theta_{\bm x})}\right)\right]\\
    &=\mathbb{E}_{Y,Z|\bm x;\tilde{\bm \theta}_{\bm x}^{t+1}} \left[\langle S(y,z), \tilde{\bm \theta}_{\bm x}^{t+1} - \theta_{\bm x}\rangle + A(\theta_{\bm x}) - A(\tilde{\bm \theta}_{\bm x}^{t+1})\right]\\
    &= \langle \nabla A(\bm \tilde{\bm \theta}^{t+1}_{\bm x}), \tilde{\bm \theta}_{\bm x}^{t+1} - \theta_{\bm x}\rangle + A(\theta_{\bm x}) - A(\tilde{\bm \theta}_{\bm x}^{t+1})\\
    &= D_A(\bm\theta_{\bm x}, \bm\tilde{\bm\theta}^{t+1}_{\bm x}).
\end{align*}
\end{proof}

\subsection{Convergence Results for EM applied to General MoE.}
\label{app: Convergence Results General MoE}
In this section, we provide the full proof of Theorem \ref{thm: proj MD conv result}. Before we begin, we recall and discuss the regularity assumptions previously made. Recall assumptions $A_1$, $A_2$, and $A_3$:

\begin{itemize}
    \item[$A_1$] The conditional distribution $p(y,z|\bm \theta_{\bm x})$ is a steep, minimal exponential family of distribution and $\eta(\bm x, \cdot)$ is a continuously differentiable function,
    \item[$A_2$] The optimal objective function values is  bounded below, i.e., $\mathcal{L}(\bm \theta^*) > -\infty$, on the constraint set $\Omega$,
    \item[$A_3$] The following sub-level sets $\Omega_{\theta} := \{\bm \phi \in \Omega: Q(\bm\phi |\bm\theta) \leq Q(\bm\theta | \bm\theta)\}$ are compact. .
\end{itemize}

Assumption $A_1$ serves to ensure that $\mathcal{L}(\theta) = \mathbb{E}_X\left[L(\theta_{\bm x})\right]$ is differentiable, that the EM surrogate is also differentiable and has a solution. It further serves to guarantee the mirror map $A$ is smooth, ensuring that projecting into the dual space is well-defined. We note that if the re-parametrization function is continuously differentiable in $\bm \theta$, it will hold that $A_1$ is satisfied for the popular case that $p(y,z|\bm x)$ is a Gaussian mixture (this includes MoE with Gaussian experts). Next, $A_2$ and $A_3$ are classical optimization assumptions that serve to guarantee the solution of the M-step is unique and exists within the constraint set $\Omega$, thereby ensuring the EM iterations are well defined. 

Further, we make the additional remark that the projection, \eqref{eqn: proj moment proj}, in Theorem \ref{thm:EM is proj MD} can be seen to be equivalent to the following projection over the space of functions on $\bm x$.
\begin{equation}
    \eta(\bm x, \bm \theta^{t+1}) = \bm \theta_{\bm x}^{t+1} = \argmin_{\bm \phi_{\bm x} \in \{\eta(\cdot, \bm \theta): \theta \in \Omega\}} \mathbb{E}_X\left[D_A(\bm \phi_{\bm x}, \tilde{\bm \theta}_{\bm x}^{t+1})\right]
\end{equation}
We can see  that  the set $\{\eta(\cdot, \bm \theta): \theta \in \Omega\}$ is not necessarily guaranteed to be convex. Such a result would require the re-parametrization function $\eta(\bm x, \bm \theta)$ to be affine. In situations where this set is not convex, we cannot take our weak generalized Pythagorean identity \eqref{eqn: gen pythagorean identity} for granted, and thus we include it as an extra assumption to be satisfied for these convergence results. Still, convexity is not necessary for our generalized Pythagorean inequality to hold. For instance, ensuring $\tilde{\bm \theta}_{\bm x}^{t+1}$ is in the relative interior of $\{\eta(\cdot, \bm \theta): \theta \in \Omega\}$, or simply satisfying the inequality in expectation will suffice, but may be difficult to show.

For ease of reading, we re-state the result below:\\
\textbf{Theorem \ref{thm: proj MD conv result}:} For general MoE with re-parameterization given by $\bm \theta_{\bm x} := \eta(\bm x, \bm \theta)$, strictly convex mirror map $A(\bm \theta_{\bm x})$,
     and if for all $\tilde{\bm \theta}^{t+1}_{\bm x}, \bm \theta^{t+1}_{\bm x},\bm \phi_{\bm x} \in \{\bm \theta_{\bm x}^{t}, \bm \theta^*_{\bm x}\}$,
     \begin{equation*}
            \mathbb{E}_X\left[D_A\left(\bm \phi_{\bm x}, \tilde{\bm \theta}^{t+1}_{\bm x}\right)\right] \geq\\
            \mathbb{E}_X\left[D_A\left(\bm \theta^{t+1}_{\bm x}, \tilde{\bm \theta}^{t+1}_{\bm x}\right) + D_A\left(\bm \phi_{\bm x}, \bm \theta^{t+1}_{\bm x}\right)\right],
     \end{equation*}
    then, the EM iterates $\{\bm \theta^t\}_{t\in[T]}$ satisfy:
    \begin{itemize}
    \vspace{-3mm}
        \item [1)]\textbf{Stationnarity.} For no additional conditions, \begin{equation*}
        \min_{t \in [T]} \mathbb{E}_X\left[D_A(\bm \theta^t_{\bm x}, \bm \theta^{t+1}_{\bm x})\right] \leq \frac{\mathcal{L}(\bm \theta^1) - \mathcal{L}(\bm \theta^*)}{T};
    \end{equation*}
    \vspace{-6mm}
    \item[2)]\textbf{Sub-linear Rate to $\bm \theta^*$.} If $\bm \theta^1$ is initialized in $\Theta$, a locally average-convex region of $\mathcal{L}(\bm \theta)$ containing $\bm \theta^*$, then
   \begin{equation*}
        \mathcal{L}(\bm \theta^T) - \mathcal{L}(\bm \theta^*) \leq \frac{\mathbb{E}_X\left[D_A(\bm \theta^*_{\bm x}, \bm \theta^{1}_{\bm x})\right]}{T}
    \end{equation*}
    \vspace{-6mm}
    \item[3)]\textbf{Linear Rate to $\bm \theta^*$.}
    If $\bm \theta^1$ is initialized in $\Theta \subseteq \Omega$, a locally average-strongly convex region of $\mathcal{L}(\bm \theta)$ relative to $A(\bm \theta)$ that contains $\bm \theta^*$, then
    \begin{equation*}
        \mathcal{L}(\bm \theta^T) - \mathcal{L}(\bm \theta^*) \leq (1-\alpha)^T \mathbb{E}_X\left[D_A(\bm \theta^*_{\bm x}, \bm \theta^{1}_{\bm x})\right]
    \end{equation*}
    \end{itemize}
\begin{proof}

The proof is divided into three parts that correspond to each of the three sub-results of the corollary.

To aid the reader's comprehension, we re-state the cosine law for Bregman divergence (also known as $3$-point lemma).
\begin{lemma}[cosine law for Bregman divergence]\label{lem: cosine law Breg}
    Assume the mapping $A$ is proper and convex. Then, for all $a,b,c\in \tilde{\Omega}$, it holds that
    \begin{equation}
        D_A(a,b) = D_A(a,c)+D_A(c,b)-\langle \nabla A(b) - \nabla A(c), a - c\rangle.
    \end{equation}
\end{lemma}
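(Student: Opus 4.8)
The plan is to establish this purely as an algebraic identity by expanding each Bregman divergence through its definition; properness and convexity of $A$ enter only to guarantee that $\nabla A$ exists at $b$ and $c$, so that every term below is well-defined. Recall that, under the convention used throughout the paper, the linearization in $D_A$ is taken at the \emph{second} argument, i.e.
\[
D_A(u,v) = A(u) - A(v) - \langle \nabla A(v), u - v\rangle .
\]

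First I would expand the two Bregman divergences appearing on the right-hand side as $D_A(a,c) = A(a) - A(c) - \langle \nabla A(c), a - c\rangle$ and $D_A(c,b) = A(c) - A(b) - \langle \nabla A(b), c - b\rangle$, then substitute them into $D_A(a,c) + D_A(c,b) - \langle \nabla A(b) - \nabla A(c), a - c\rangle$. The function-value terms collapse at once, since $A(a) - A(c) + A(c) - A(b) = A(a) - A(b)$, so the $A(c)$ contributions cancel and only the difference $A(a) - A(b)$ survives.

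The remaining work is to bookkeep the inner-product terms, which group as
\[
-\langle \nabla A(c), a - c\rangle - \langle \nabla A(b), c - b\rangle - \langle \nabla A(b), a - c\rangle + \langle \nabla A(c), a - c\rangle .
\]
Here the two $\nabla A(c)$ terms cancel, and the two $\nabla A(b)$ terms combine by bilinearity of the inner product using the telescoping identity $(c - b) + (a - c) = a - b$, leaving exactly $-\langle \nabla A(b), a - b\rangle$. Combining this with the surviving function-value difference shows the right-hand side equals $A(a) - A(b) - \langle \nabla A(b), a - b\rangle = D_A(a,b)$, which is the claim.

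There is essentially no hard step: the identity is nothing more than adding and subtracting $\langle \nabla A(c), a - c\rangle$ and regrouping, and it holds for any differentiable $A$ irrespective of convexity. The only point demanding care is the argument convention of $D_A$ — keeping track that the gradient is evaluated in the second slot of each divergence — because swapping the roles of the two arguments would flip the sign structure and break the cancellation that makes the identity work.
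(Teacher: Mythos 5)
Your proof is correct. Note that the paper itself gives no proof of this lemma: it is explicitly presented as a restatement of the well-known three-point (cosine) law for Bregman divergences, used as a tool inside the proof of Theorem \ref{thm: proj MD conv result}. Your expand-and-cancel argument is exactly the standard derivation of that identity, and your bookkeeping checks out: the $A(c)$ terms cancel, the two $\nabla A(c)$ inner products cancel, and the two $\nabla A(b)$ terms telescope via $(c-b)+(a-c) = a-b$, yielding $A(a)-A(b)-\langle \nabla A(b), a-b\rangle = D_A(a,b)$. Your closing observation is also accurate and worth keeping: the identity is purely algebraic and requires only differentiability of $A$ at $b$ and $c$; properness and convexity serve only to make the Bregman divergence well-defined (and nonnegative) in the way the paper uses it elsewhere.
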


\underline{\textbf{Part 1):} Stationarity.}\\
We begin by utilizing the result from Theorem \ref{thm:EM is proj MD} that the conditional log-likelihood $L(\bm \theta_{\bm x})$ is $1$-smooth relative to the mirror map $A(\bm \theta_{\bm x})$. For all $\bm \theta_{\bm x} \in \tilde{\Omega}$,
\begin{equation*}
    L(\bm \theta_{\bm x}^{t+1}) \leq L(\bm \theta_{\bm x}^{t}) + \langle \nabla L(\bm \theta_{\bm x}^t), \bm \theta_{\bm x}^{t+1} - \bm \theta_{\bm x}^t\rangle + D_A(\bm \theta_{\bm x}^{t+1},\bm \theta_{\bm x}^t).
\end{equation*}
Taking the expectation on both sides with respect to the random feature variable $\bm x$ yields:
\begin{equation*}
    \mathcal{L}(\bm \theta^{t+1}) \leq \mathcal{L}(\bm \theta^{t}) + \mathbb{E}_{X}\left[\langle \nabla L(\bm \theta_{\bm x}^t), \bm \theta_{\bm x}^{t+1} - \bm \theta_{\bm x}^t\rangle + D_A(\bm \theta_{\bm x}^{t+1},\bm \theta_{\bm x}^t)\right].
\end{equation*}
Combining \eqref{eqn: proj Grad Decomp} and \eqref{eqn: proj A and s} then plugging into the above, we obtain
\begin{equation*}
    \mathcal{L}(\bm \theta^{t+1}) \leq \mathcal{L}(\bm \theta^{t}) + \mathbb{E}_{X}\left[\langle \nabla A(\bm \theta_{\bm x}^t) - \nabla A(\tilde{\bm \theta}^{t+1}_{\bm x}), \bm \theta_{\bm x}^{t+1} - \bm \theta_{\bm x}^t\rangle + D_A(\bm \theta_{\bm x}^{t+1},\bm \theta_{\bm x}^t)\right].
\end{equation*}
Decomposing $D_A(\bm \theta_{\bm x}^{t+1},\bm \theta_{\bm x}^t)$ above and canceling appropriate terms yields
\begin{equation*}
    \mathcal{L}(\bm \theta^{t+1}) \leq \mathcal{L}(\bm \theta^{t}) + \underbrace{\mathbb{E}_{X}\left[-\langle \nabla A(\tilde{\bm \theta}^{t+1}_{\bm x}), \bm \theta_{\bm x}^{t+1} + A(\bm \theta_{\bm x}^{t+1}) - A(\bm \theta_{\bm x}^{t})\right]}_{i)}.
\end{equation*}
It can then be checked that i) is equal to $\mathbb{E}_{X}\left[D_A(\bm \theta_{\bm x}^{t+1},\tilde{\bm \theta}^{t+1}_{\bm x}) - D_A(\bm \theta_{\bm x}^{t}, \tilde{\bm \theta}^{t+1}_{\bm x})\right]$. Therefore, substituting the above and utilizing the inequality \eqref{eqn: gen pythagorean identity}, it follows that
\begin{equation*}
    \mathcal{L}(\bm \theta^{t+1}) \leq \mathcal{L}(\bm \theta^{t}) - \mathbb{E}_X\left[D_A(\bm \theta_{\bm x}^{t}, \bm \theta_{\bm x}^{t+1})\right].
\end{equation*}

Re-arranging the terms and averaging over $T$-iterations yields the claim:
\begin{align*}
    &\mathbb{E}_X\left[D_A(\bm \theta_{\bm x}^{t}, \bm \theta_{\bm x}^{t+1})\right]   \leq  \mathcal{L}(\bm \theta^{t+1}) - \mathcal{L}(\bm \theta^{t})\\
    \implies &\min_{t\leq T} \mathbb{E}_X\left[D_A(\bm \theta_{\bm x}^{t}, \bm \theta_{\bm x}^{t+1})\right] \leq \frac{1}{T}\sum_{t=1}^T  \mathcal{L}(\bm \theta^{t+1}) - \mathcal{L}(\bm \theta^{t}) = \frac{\mathcal{L}(\bm \theta^{1}) - \mathcal{L}(\bm \theta^{T})}{T}
\end{align*}

\underline{\textbf{Part 2):} Sub-linear rate to $\theta^*$.}\\
We begin by utilizing the result from Theorem \ref{thm:EM is proj MD} that the conditional log-likelihood $L(\bm \theta_{\bm x})$ is $1$-smooth relative to the mirror map $A(\bm \theta_{\bm x})$. For all $\bm \theta_{\bm x} \in \tilde{\Omega}$, then apply expectation with respect to $\bm x$ on both sides yielding:
\begin{equation*}
    \mathcal{L}(\bm \theta^{t+1}) \leq \mathcal{L}(\bm \theta^{t}) + \mathbb{E}_{X}\left[\langle \nabla L(\bm \theta_{\bm x}^t), \bm \theta_{\bm x}^{t+1} - \bm \theta_{\bm x}^t\rangle + D_A(\bm \theta_{\bm x}^{t+1},\bm \theta_{\bm x}^t)\right].
\end{equation*}
We then add and subtract $\mathbb{E}_X\left[\langle \nabla L(\bm \theta_{\bm x}^{t}), \bm \theta_{\bm x}^{*}\rangle\right]$ to obtain
\begin{equation*}
    \mathcal{L}(\bm \theta^{t+1}) \leq \mathcal{L}(\bm \theta^{t}) + \mathbb{E}_{X}\left[\langle \nabla L(\bm \theta_{\bm x}^t), \bm \theta_{\bm x}^{t+1} - \bm \theta_{\bm x}^{*} + \bm \theta_{\bm x}^{*} - \bm \theta_{\bm x}^t\rangle + D_A(\bm \theta_{\bm x}^{t+1},\bm \theta_{\bm x}^t)\right].
\end{equation*}
We then use the average local convexity assumption, \eqref{eqn: average convexity}, and obtain
\begin{equation*}
    \mathcal{L}(\bm \theta^{t+1}) \leq \mathcal{L}(\bm \theta^{*}) + \mathbb{E}_{X}\left[\langle \nabla L(\bm \theta_{\bm x}^t), \bm \theta_{\bm x}^{t+1} - \bm \theta_{\bm x}^{*}\rangle + D_A(\bm \theta_{\bm x}^{t+1},\bm \theta_{\bm x}^t)\right].
\end{equation*}
Combining \eqref{eqn: proj Grad Decomp} and \eqref{eqn: proj A and s} then plugging into the above, we obtain
\begin{equation}\label{app: eq0}
    \mathcal{L}(\bm \theta^{t+1}) \leq \mathcal{L}(\bm \theta^{*}) + \mathbb{E}_{X}\left[\langle \nabla A(\bm \theta_{\bm x}^t) - \nabla A(\tilde{\bm \theta}_{\bm x}^{t+1}), \bm \theta_{\bm x}^{t+1} - \bm \theta_{\bm x}^{*}\rangle + D_A(\bm \theta_{\bm x}^{t+1},\bm \theta_{\bm x}^t)\right].
\end{equation}
We now decompose $D_A(\bm \theta_{\bm x}^{t+1},\bm \theta_{\bm x}^t)$ using Lemma \ref{lem: cosine law Breg} with $a = \tilde{\bm \theta}_{\bm x}^{t+1}$, $b = \bm \theta_{\bm x}^{t}$, and $c = \bm \theta_{\bm x}^{t+1}$ and obtain
\begin{align*}
    &\langle \nabla A(\bm \theta_{\bm x}^t) - \nabla A(\tilde{\bm \theta}_{\bm x}^{t+1}), \bm \theta_{\bm x}^{t+1} - \bm \theta_{\bm x}^{*}\rangle + D_A(\bm \theta_{\bm x}^{t+1},\bm \theta_{\bm x}^t) \\
    =& \langle \nabla A(\bm \theta_{\bm x}^t) - \nabla A(\tilde{\bm \theta}_{\bm x}^{t+1}), \bm \theta_{\bm x}^{t+1} - \bm \theta_{\bm x}^{*}\rangle + D_A(\tilde{\bm \theta}_{\bm x}^{t+1},\bm \theta_{\bm x}^{t}) - D_A(\tilde{\bm \theta}_{\bm x}^{t+1},\bm \theta_{\bm x}^{t+1}) + \langle \nabla A(\bm \theta_{\bm x}^t) - \nabla A(\bm \theta_{\bm x}^{t+1}), \tilde{\bm \theta}_{\bm x}^{t+1} - \bm \theta_{\bm x}^{t+1}\rangle\\
    = & A(\tilde{\bm \theta}_{\bm x}^{t+1}) - A(\bm \theta_{\bm x}^{t}) + \langle \nabla A(\bm \theta_{\bm x}^{t}), \bm \theta_{\bm x}^{t} -\bm \theta_{\bm x}^{*}\rangle + \langle -\nabla A(\tilde{\bm \theta}_{\bm x}^{t+1}),  \bm \theta_{\bm x}^{t+1} - \bm \theta_{\bm x}^{*}\rangle +A(\bm \theta_{\bm x}^{t+1}) - A(\tilde{\bm \theta}_{\bm x}^{t+1})
\end{align*}

We now add and subtract $\langle \nabla A(\bm \theta_{\bm x}^{t}), \tilde{\bm \theta}_{\bm x}^{t+1}\rangle$ and $\langle A(\tilde{\bm \theta}_{\bm x}^{t+1}), \tilde{\bm \theta}_{\bm x}^{t+1}\rangle$ and group terms to obtain
\begin{align*}
    D_A(\bm \theta_{\bm x}^{t+1}, \tilde{\bm \theta}_{\bm x}^{t+1}) + \underbrace{D_A(\tilde{\bm \theta}_{\bm x}^{t+1}, \bm \theta_{\bm x}^{t}) + \langle \nabla A(\bm \theta_{\bm x}^{t}) - \nabla A(\tilde{\bm \theta}_{\bm x}^{t+1}), \tilde{\bm \theta}_{\bm x}^{t+1} -  \bm \theta_{\bm x}^{*}\rangle}_{ii)}
\end{align*}
We now apply Lemma \ref{lem: cosine law Breg} again to ii) with $a = \bm \theta_{\bm x}^{*}$, $b = \bm \theta_{\bm x}^{t}$, $c = \tilde{\bm \theta}_{\bm x}^{t+1}$ and obtain the sub-result:
\begin{equation*}
    \langle \nabla A(\bm \theta_{\bm x}^t) - \nabla A(\tilde{\bm \theta}_{\bm x}^{t+1}), \bm \theta_{\bm x}^{t+1} - \bm \theta_{\bm x}^{*}\rangle + D_A(\bm \theta_{\bm x}^{t+1},\bm \theta_{\bm x}^t) = D_A(\bm \theta_{\bm x}^{*}, \bm \theta_{\bm x}^{t}) - D_A(\bm \theta_{\bm x}^{*},\tilde{\bm \theta}_{\bm x}^{t+1}) + D_A(\bm \theta_{\bm x}^{t+1}, \tilde{\bm \theta}_{\bm x}^{t+1})
\end{equation*}

Plugging the above equality into \eqref{app: eq0}, we obtain
\begin{equation*}
    \mathcal{L}(\bm \theta^{t+1}) \leq \mathcal{L}(\bm \theta^{*}) + \mathbb{E}_{X}\left[D_A(\bm \theta_{\bm x}^{*}, \bm \theta_{\bm x}^{t}) - D_A(\bm \theta_{\bm x}^{*}, \tilde{\bm \theta}_{\bm x}^{t+1}) + D_A(\bm \theta_{\bm x}^{t+1}, \tilde{\bm \theta}_{\bm x}^{t+1})\right].
\end{equation*}
Then, using \ref{eqn: gen pythagorean identity}, we obtain
\begin{equation*}
    \mathcal{L}(\bm \theta^{t+1}) \leq \mathcal{L}(\bm \theta^{*}) + \mathbb{E}_{X}\left[D_A(\bm \theta_{\bm x}^{*}, \bm \theta_{\bm x}^{t}) - D_A(\bm \theta_{\bm x}^{*}, \bm \theta_{\bm x}^{t+1})\right].
\end{equation*}

Re-arranging, then averaging over $T$ iterations yields the claim:
\begin{align*}
    &T( \mathcal{L}(\bm \theta^{T}) - \mathcal{L}(\bm \theta^{*})) \leq \sum_{t=1}^T \left(\mathcal{L}(\bm \theta^{t}) - \mathcal{L}(\bm \theta^{*})\right) \leq \sum_{t=1}^T \mathbb{E}_{X}\left[D_A(\bm \theta_{\bm x}^{*}, \bm \theta_{\bm x}^{t}) - D_A(\bm \theta_{\bm x}^{*}, \bm \theta_{\bm x}^{t+1})\right]\\
    \implies & \mathcal{L}(\bm \theta^{T}) - \mathcal{L}(\bm \theta^{*}) \leq \frac{\mathbb{E}_{X}\left[D_A(\bm \theta_{\bm x}^{*}, \bm \theta_{\bm x}^{1}) - D_A(\bm \theta_{\bm x}^{*}, \bm \theta_{\bm x}^{T})\right]}{T} \leq \frac{\mathbb{E}_{X}\left[D_A(\bm \theta_{\bm x}^{*}, \bm \theta_{\bm x}^{1})\right]}{T}
\end{align*}

\underline{\textbf{Part 3):} Linear rate to $\theta^*$.}\\
We begin by utilizing the result from Theorem \ref{thm:EM is proj MD} that the conditional log-likelihood $L(\bm \theta_{\bm x})$ is $1$-smooth relative to the mirror map $A(\bm \theta_{\bm x})$. For all $\bm \theta_{\bm x} \in \tilde{\Omega}$, then apply expectation with respect to $\bm x$ on both sides yielding:
\begin{equation*}
    \mathcal{L}(\bm \theta^{t+1}) \leq \mathcal{L}(\bm \theta^{t}) + \mathbb{E}_{X}\left[\langle \nabla L(\bm \theta_{\bm x}^t), \bm \theta_{\bm x}^{t+1} - \bm \theta_{\bm x}^t\rangle + D_A(\bm \theta_{\bm x}^{t+1},\bm \theta_{\bm x}^t)\right].
\end{equation*}
We then add and subtract $\mathbb{E}_X\left[\langle \nabla L(\bm \theta_{\bm x}^{t}), \bm \theta_{\bm x}^{*}\rangle\right]$ to obtain
\begin{equation*}
    \mathcal{L}(\bm \theta^{t+1}) \leq \mathcal{L}(\bm \theta^{t}) + \mathbb{E}_{X}\left[\langle \nabla L(\bm \theta_{\bm x}^t), \bm \theta_{\bm x}^{t+1} - \bm \theta_{\bm x}^{*} + \bm \theta_{\bm x}^{*} - \bm \theta_{\bm x}^t\rangle + D_A(\bm \theta_{\bm x}^{t+1},\bm \theta_{\bm x}^t)\right].
\end{equation*}
We then use the local $\alpha$-strongly average-convexity assumption, \eqref{eqn: average convexity}, and obtain
\begin{equation*}
    \mathcal{L}(\bm \theta^{t+1}) \leq \mathcal{L}(\bm \theta^{*}) + \mathbb{E}_{X}\left[\langle \nabla L(\bm \theta_{\bm x}^t), \bm \theta_{\bm x}^{t+1} - \bm \theta_{\bm x}^{*}\rangle + D_A(\bm \theta_{\bm x}^{t+1},\bm \theta_{\bm x}^t) - \alpha D_A(\bm \theta^*_{\bm x}, \bm \theta_{\bm x}^t)\right].
\end{equation*}
Then, following the same steps as for the sub-linear case, we Combining \eqref{eqn: proj Grad Decomp} and \eqref{eqn: proj A and s}, utilize the cosine law for Bregman Divergence $D_A(\cdot, \cdot)$ twice, then apply \eqref{eqn: gen pythagorean identity} to obtain:
\begin{align*}
    \mathcal{L}(\bm \theta^{t+1}) &\leq \mathcal{L}(\bm \theta^{*}) + \mathbb{E}_{X}\left[D_A(\bm \theta_{\bm x}^{*},\bm \theta_{\bm x}^{t}) - D_A(\bm \theta_{\bm x}^{*},\bm \theta_{\bm x}^{t+1}) - \alpha D_A(\bm \theta^*_{\bm x}, \bm \theta_{\bm x}^{t})\right]\\
    &\leq  \mathcal{L}(\bm \theta^{*}) + \mathbb{E}_{X}\left[(1-\alpha)D_A(\bm \theta_{\bm x}^{*},\bm \theta_{\bm x}^{t}) \right].
\end{align*}

Unraveling the recurrence over $T$ iterations, yields the result:
\begin{equation*}
    \mathcal{L}(\bm \theta^{T}) - \mathcal{L}(\bm \theta^{*}) \leq (1-\alpha)^T\mathbb{E}_{X}\left[D_A(\bm \theta_{\bm x}^{*},\bm \theta_{\bm x}^{1}) \right]
\end{equation*}
This completes the proof.
\end{proof}

\newpage
\section{EM, Mirror Descent, and SymMoLogE and SymMoLinE.}
\label{app:EM is MD for Sym}
In this section, we provide all results, proofs, and discussion pertaining to EM for symmetric mixtures of logistic or linear experts.

\subsection{EM is Mirror Descent for SymMoLogE and SymMoLinE}
\label{app: thm 1 proof}

In this section, we provide the full and detailed proof of Theorem \ref{thm:EM is MD}. For ease of comprehension and in the hope that this will provide useful insights into other types of non-exponential family mixtures for which EM is also connected to MD, we prove our result following the same general ideas as that of \citep[Proposition 1]{kunstner2022homeomorphicinvarianceemnonasymptoticconvergence}. We split the proof into two parts (SymMoLinE and SymMoLogE) which can be found in Appendix \ref{app:EM is Mirror Descent SMOLINE} and \ref{app:EM is Mirror Descent SMOLOGE}. 

For ease of reading, we re-state the theorem below:\\
\textbf{Theorem \ref{thm:EM is MD}:} For SymMoLinE and SymMoLogE, there is a mirror map $A(\bm \theta)$ such that the EM update in \eqref{eqn:EM Iteration} simplifies and is equivalent to
    \begin{equation*}
        \argmin_{\bm \theta \in \Omega} \langle \mathcal{\nabla L(\bm \theta)}, \bm \theta - \bm \theta^t \rangle + D_A(\bm \theta, \bm \theta^t),
    \end{equation*}
    where  $\forall \bm\phi, \bm\theta \in \Omega$ the divergence function $D_A(\bm \theta, \bm \theta^t)$ is equal to the $KL$ divergence on the complete data:
    \begin{equation*}
        D_A(\bm \phi, \bm \theta) = \text{KL} [ p(\bm x,y,z;\bm \theta) \Vert p(\bm x,y,z;\bm \phi)].
    \end{equation*}

    In particular, in the case of SymMoLinE,
    \begin{equation*}
        A(\bm \theta) = \mathbb{E}_{\bm X}\left[\frac{(\bm x^\top \bm \beta)^2}{2} + \log \left(1 +e^{\bm x^\top \bm w}\right) \right],
    \end{equation*}
    while in the case of SymMoLogE, 
    \begin{equation*}
        A(\bm \theta) = \mathbb{E}_{\bm X}\left[\log \left(\left(1 +e^{\bm x^\top \bm \beta}\right) \left(1 +e^{\bm x^\top \bm w}\right)\right) \right].
    \end{equation*}
    Finally, in both cases, the map $A(\bm \theta)$
    is strictly convex in $\bm \theta$ and $\mathcal{L}(\bm \theta)$ is $1$-smooth relative to $A( \bm \theta)$.

\subsection{Proof of Theorem \ref{thm:EM is MD} for SymMoLinE}
\label{app:EM is Mirror Descent SMOLINE}
\begin{proof}
Recall that we consider a $2$ component SymMoLinE (see Section \ref{sec: SMOE}) where $z \in \{-1,1\}$ is the latent unobserved variable, and 
\begin{itemize}
    \item [1)] $\bm x\sim \mathcal{N}(\bm 0, \bm I_d)$,
    \item [2)] $P(z |\bm x; \bm w) = \frac{\exp\{\frac{z+1}{2}\bm x^\top {\bm {w}}\}}{1+e^{\bm x^\top {\bm {w}}}}$,
    \item [3)] $p(y|\bm x, z ; \bm \beta) = \frac{\exp\left\{-\frac{(y-z\bm x^\top \bm {\beta} )^2}{2}\right\}}{\sqrt{2\pi}}$.
\end{itemize}
We begin by deriving a near exponential form of the complete data probability density function $p(\bm x, z, y;\bm \theta)$:
    \begin{align*}
        &p(\bm x, y, z;\bm \theta) \\
        &= p(y|\bm x, z; \bm \theta)P(z|\bm x ;\bm\theta)p(\bm x)\\
        &= \exp\{\log p(y|\bm x, z; \bm \beta) + \log P(z|\bm x ;\bm w)+ \log p(\bm x)\}\\
        &=\exp\left\{\frac{-(y - z \bm x^\top \bm \beta)^2}{2} - \frac{1}{2}\log (2 \pi) + \left(\frac{z+1}{2}\right) \bm x^\top \bm w - \log (1+e^{\bm x^\top \bm w})+ \log p(\bm x)\right\}\\
        &= \exp\left\{\frac{-y^2}{2} + yz\bm x^\top \bm \beta - \frac{z^2(\bm x^\top \bm \beta)^2}{2}  + \left(\frac{z+1}{2}\right) \bm x^\top \bm w - \log (1+e^{\bm x^\top \bm w})+ \log p(\bm x)- \frac{1}{2}\log (2 \pi)\right\}\\
        &= \exp\left\{\left\langle \begin{bmatrix}
        y z \bm x\\ \frac{z \bm x}{2}
    \end{bmatrix},\begin{bmatrix}
        \bm \beta \\\bm w
    \end{bmatrix}\right\rangle + \frac{\bm x^\top \bm w }{2} - \frac{(\bm x^\top \bm \beta)^2}{2} - \log (1+e^{\bm x^\top \bm w})+ \log p(\bm x)- \frac{y^2}{2}- \frac{1}{2}\log (2 \pi)\right\}.
    \end{align*}
    Thus we have recovered the decomposition,
\begin{equation}\label{eqn: Not Exp SMOLINE}
    p(\bm x,y,z;\bm \theta) = \exp\left\{\left\langle \underbrace{\begin{bmatrix}
        \frac{z \bm x}{2} \\ y z \bm x
    \end{bmatrix}}_{S(\bm x,y,z)},\begin{bmatrix}
        \bm w \\ \bm \beta
    \end{bmatrix}\right\rangle + a(\bm x, y, \bm \theta)  \right\},
\end{equation}
where in $a(\bm x, y, \bm \theta)$, the feature variable $\bm x$  cannot be linearly separated from the parameter $\bm \theta$:
\begin{equation*}
    a(\bm x,y,\bm\theta)= \frac{\bm x^\top \bm w}{2}- \frac{(\bm x^\top \bm \beta)^2}{2} - \log \left(1 +e^{\bm x^\top \bm w}\right) + \log p(\bm x)- \frac{y^2}{2}- \frac{1}{2}\log (2 \pi).
\end{equation*}
At this point, we pause and discuss the implications of the obtained form. First we recall that for a random variable $\bm U$ to belong to an exponential family, it must satisfy 
\begin{equation*}
    p(\bm u ;\bm \theta) = h(\bm u) \exp\left\{{\langle s(\bm u), \bm \theta\rangle - A(\bm \theta)}\right\}
\end{equation*}
for some $h(\cdot), s(\cdot), \bm \theta, A(\cdot)$ that are called  the normalization function, sufficient statistics, natural
parameters, and log-partition function respectively. To clarify, we note that 1) $A(\bm \theta)$ must be a function of the parameters only and cannot depend on $\bm u$ and 2) $h(\bm u)$ must be a function of the variable $\bm u$ only and cannot depend on the parameters. In other words, it must be that $h(\bm u)$ and $A(\bm \theta)$ are linearly separated inside the $\exp$. With the above in mind, we see that SymMoLinE is not an exponential family. Further, we remark that the above formulation showing $S(\bm x, y, z)$ is linear with $(\bm w, \bm \beta)^\top$ does not extend beyond the \textbf{symmetric} setting of the Mixture of Linear Experts; for $k \geq 3$, this relationships becomes non-linear. This turns out to be problematic for showing EM is equivalent to MD for $k\geq3$. Lastly, note that taking the expectation of $a(\bm x, y ,\bm \theta)$ over $(\bm x,y) \sim p(\bm x,y;\bm\theta^*)$ yields,
\begin{align*}
    \mathbb{E}_{\bm X, Y} [a(\bm x,y,\bm\theta)] &= - \mathbb{E}_{\bm X} \left[\frac{(\bm x^\top \bm \beta)^2}{2} + \log \left(1 +e^{\bm x^\top \bm w}\right)\right] - C\\
    &= -A(\bm \theta) - C,
\end{align*}
where the above follows from $\mathbb{E}_{\bm X}[\frac{\bm x^\top \bm w}{2}] = 0$ and $C := -\mathbb{E}_{\bm X, Y} [\log p(\bm x)- \frac{y^2}{2}- \frac{1}{2}\log (2 \pi)]$ is not a function of the parameter $\bm \theta$. With the obtained form \eqref{eqn: Not Exp SMOLINE}, we now continue with the proof.

\underline{\textbf{Part a):} Show EM is MD, i.e.,  $\argmin_{\bm \theta \in \Omega} Q(\bm\theta | \bm\theta^t) = \argmin_{\bm \theta \in \Omega} \langle \mathcal{\nabla L(\bm \theta)}, \bm \theta - \bm \theta^t \rangle + D_A(\bm \theta, \bm \theta^t) $.} \\

Taking the appropriate expectation, the EM objective $Q$ can be written as
\begin{align*}
       Q(\bm \theta|\bm \theta^t)&= -\mathbb{E}_{\bm X,Y} \left[\mathbb{E}_{Z|\bm x,y;\bm \theta^t}\left[\log p(\bm x, y, z;\bm \theta)\right]\right]\\
       &= -\mathbb{E}_{\bm X,Y} \left[\mathbb{E}_{Z|\bm x,y;\bm \theta^t}\left[\langle S(\bm x,y,z), \bm \theta \rangle + a(\bm x,y,\bm\theta)\right]\right]\\
       &= -\mathbb{E}_{\bm X,Y}\left[a(\bm x, y, \bm \theta)\right] -\mathbb{E}_{\bm X,Y} \left[\mathbb{E}_{Z|\bm x,y;\bm \theta^t}\left[\langle S(\bm x,y,z), \bm \theta \rangle\right]\right]\\
        &= A(\bm\theta) - \left\langle s(\bm\theta^t), \bm\theta \right\rangle + C\label{eqn:1.1}
\end{align*}
    where $s(\bm \theta^t):=\mathbb{E}_{\bm X,Y} \mathbb{E}_{Z|\bm x,y;\bm \theta^t} [S(\bm x,y,z)] $. As a consequence, it is also  true that
    \begin{equation}
        \label{eqn: app 1.0}
        \nabla Q(\bm\theta^t |\bm \theta^t) = \nabla A(\bm\theta^t) - s(\bm\theta^t).
    \end{equation}
    
    Continuing, we use the above to simplify the expression for $Q(\bm \theta | \bm \theta^t) - Q(\bm \theta^t | \bm\theta^t)$ that will subsequently give us the MD loss:
    \begin{align*}
        Q(\bm \theta | \bm \theta^t) - Q(\bm \theta^t | \bm \theta^t) &= A(\bm \theta) - \left\langle s(\bm \theta^t), \bm \theta \right\rangle - A(\bm \theta^t) + \left\langle s(\bm \theta^t), \bm \theta^t \right\rangle\\
        &= - \left\langle s(\bm \theta^t), \bm \theta - \bm \theta^t \right\rangle + \langle \nabla A (\bm\theta^t), \bm\theta - \bm\theta^t\rangle - \langle \nabla A (\bm\theta^t), \bm\theta - \bm\theta^t\rangle + A(\bm \theta)- A(\bm \theta^t)\\
        &\stackrel{i)}{=} \left\langle \nabla Q(\bm \theta^t | \bm \theta^t), \bm \theta - \bm \theta^t \right\rangle + D_A(\bm \theta, \bm \theta^t)\\
        &\stackrel{ii)}{=} \left\langle \nabla \mathcal{L}(\bm \theta^t), \bm \theta - \bm\theta^t \right\rangle + D_A(\bm\theta, \bm\theta^t)
    \end{align*}
    where we first adding and subtracting $\langle \nabla A (\bm\theta^t), \bm\theta - \bm\theta^t\rangle$ then $i)$ follows from  \eqref{eqn: app 1.0} and $ii)$ follows from $\nabla \mathcal{L}(\bm\theta) = \nabla Q(\bm\theta|\bm\theta)$ (see Section \ref{sec: EM for MOE} for the derivation).

    Finally, the first part of our result follows trivially as
    \begin{equation*}
        \argmin_{\bm \theta \in \Omega} Q(\bm\theta | \bm\theta^t) = \argmin_{\bm\theta \in \Omega} Q(\bm\theta | \bm\theta^t) - Q(\bm\theta^t | \bm\theta^t).
    \end{equation*}

\underline{\textbf{Part b):} Show $D_A(\bm \phi, \bm \theta) = \text{KL} [ p(\bm x,y,z;\bm \theta) \Vert p(\bm x,y,z;\bm \phi)]$} \\

This result follows simply from decomposing $\text{KL} [ p(\bm x,y,z;\bm \theta) \Vert p(\bm x,y,z;\bm \phi) ]$ as follows:
    \begin{align*}
        &\text{KL} [ p(\bm x,y,z;\bm \theta) \Vert p(\bm x,y,z;\bm \phi) ] = \mathbb{E}_{\bm X, Y, Z |\bm \theta}\left[\log\frac{ p(\bm x,y,z;\bm \theta)}{p(\bm x,y,z;\bm \phi)}\right] \\
        &\stackrel{\eqref{eqn: Not Exp SMOLINE}}{=} \langle s(\bm \theta), \bm \theta - \bm \phi \rangle - A(\bm \theta)+ A(\phi) \pm \mathbb{E}_{\bm X, Y|\bm \theta} \left[\log p(\bm x)- \frac{y^2}{2}- \frac{1}{2}\log (2 \pi)\right]\\
        &= A(\bm \phi) - A(\bm \theta) -\langle s(\bm \theta), \bm \phi - \bm \theta \rangle\\
        &\stackrel{i)}{=} A(\bm \phi) - A(\bm \theta) -\langle \nabla A(\bm \theta), \bm \phi - \bm \theta \rangle.
    \end{align*}
    where $i)$ follows from the fact that $\bm \phi = \bm \theta$ minimizes $-\mathbb{E}_{\bm X, Z, Y|\bm \theta} \left[\log p(\bm x,y,z;\bm \phi)\right]$. To see this, we use Jensen's inequality:
    \begin{align*}
        0 &\leq -\mathbb{E}_{\bm X, Z, Y|\bm \theta} \left[\log p(\bm x,y,z;\bm \theta)\right]\\
        &\stackrel{\text{Jensen's}}{\leq} -\log \mathbb{E}_{\bm X, Z, Y|\bm \theta} \left[ p(\bm x,y,z;\bm \theta)\right] =-\log \int_{\bm x, y, z} p(\bm x, y, z:\bm \theta)^2 d\bm x dz dy\\
        &\stackrel{\text{Jensen's}}{\leq} -\log \left( \int_{\bm x, y, z} p(\bm x, y, z;\bm \theta) d\bm x dz dy \right)^2 = -\log (1) = 0.
    \end{align*}
    Finally, taking the derivative with respect to $\phi$ and setting equal to $0$ completes the proof:
    \begin{align*}
        \bm 0 &= \frac{\partial}{\partial \bm \theta} \mathbb{E}_{\bm X, Z, Y|\bm \theta} \left[\log p(\bm x,y,z;\bm \phi)\right] \vert_{\bm \phi =\bm \theta} \\
        &= \mathbb{E}_{\bm X, Z, Y|\bm \theta} \left[ S(\bm x,y,z)+ \frac{\partial}{\partial \bm \phi} a(\bm x,y,\bm\phi) \vert_{\bm \phi = \bm \theta}\right]\\
        &= s(\bm \theta) - \nabla A(\bm \theta).
    \end{align*}

\underline{\textbf{Part c):} Show $\mathcal{L}(\bm \theta)$ is $1$-smooth relative to $A(\bm \theta)$.}\\
The function $\mathcal{L}(\bm \theta)$ is said to be $1$-smooth relative to $A(\bm \theta)$ if for all $\bm \phi, \bm \theta$, it holds that
\begin{equation*}
    \mathcal{L}(\bm \theta) \leq \mathcal{L}(\bm \phi) + \left\langle \nabla \mathcal{L}(\bm \phi, \bm \theta - \bm\phi \right\rangle + D_A(\bm\theta, \bm\phi).
\end{equation*}
Recall the following from Section \ref{sec: EM for MOE}. The objective function $\mathcal{L}(\bm \theta)$ is related to the EM objective $Q(\bm \phi |\bm \theta)$ by \eqref{EM L Q and H},
\begin{equation*}
    \mathcal{L}(\bm \theta) = Q(\bm\theta|\bm\phi) - H(\bm\theta|\bm\phi),
\end{equation*}
where $H(\bm \phi|\bm \theta) \geq 0$ and $H(\bm \theta |\bm \theta) = 0$ for all $\bm \phi, \bm\theta \in \Omega$. Consequently, it then holds that for all $\bm \phi, \bm\theta \in \Omega$,
\begin{align}
    \mathcal{L}(\bm \theta) &= Q(\bm\theta|\bm \theta)\label{eqn: Q equality to L}\\
    \mathcal{L}(\bm \theta) &\leq Q(\bm\theta|\bm\phi)\label{eqn: Q is upper bound}.
\end{align}
Recall also from part a) that $Q(\bm \theta | \bm \phi) - Q(\bm \phi | \bm \phi) = \left\langle \nabla \mathcal{L}(\bm \phi), \bm \theta - \bm\phi \right\rangle + D_A(\bm\theta, \bm\phi)$. Then, the claim follows naturally from the above as follows:
\begin{align*}
    \mathcal{L}(\bm \theta) &\stackrel{\eqref{eqn: Q is upper bound}}{\leq} Q(\bm \theta | \bm \phi)\\
    &\stackrel{a)}{=} Q(\bm \phi | \bm \phi) + \left\langle \nabla \mathcal{L}(\bm \phi), \bm \theta - \bm\phi \right\rangle + D_A(\bm\theta, \bm\phi)\\
    &\stackrel{\eqref{eqn: Q equality to L}}{=} \mathcal{L}(\bm \phi) + \left\langle \nabla \mathcal{L}(\bm \phi), \bm \theta - \bm\phi \right\rangle + D_A(\bm\theta, \bm\phi)
\end{align*}
It follows that $\mathcal{L}(\bm \theta)$ is $1$-smooth relative to $A(\bm \theta)$.

\underline{\textbf{Part d):} $A(\bm \theta)$ and the MD objective is convex with respect to $\bm \theta$.}\\
Here, we will show that the mirror descent objective is strongly convex in $\bm \theta$. It is important for this to hold so that the iterations of MD are well-defined; the minimizer of a strongly convex objective exists and is unique.

Note that the mirror descent objective, $\langle \mathcal{\nabla L(\bm \theta)}, \bm \theta - \bm \theta^t \rangle + D_A(\bm \theta, \bm \theta^t)$, is strongly convex in $\bm \theta$ if $A(\bm \theta)$ is strongly convex in $\bm \theta$. Therefore, since $A(\bm \theta)$ given in \eqref{eqn: Mirror Map A} is twice continuously differentiable, it is strongly convex with respect to $\bm \theta$ if and only if $\nabla^2 A(\bm \theta) \succeq r \bm I_{2d}$, for some $r > 0$. We begin:
\begin{align*}
    \nabla^2 A(\bm \theta) &= \frac{\partial^2}{\partial \bm \theta^2}\mathbb{E}_{\bm X}\left[\frac{(\bm x^\top \bm \beta)^2}{2} + \log \left(1 +e^{\bm x^\top \bm w}\right) \right]\\
    &= \mathbb{E}_{\bm X}\left[\frac{\partial^2}{\partial \bm \theta^2}\left(\frac{(\bm x^\top \bm \beta)^2}{2} + \log \left(1 +e^{\bm x^\top \bm w}\right)\right) \right]\\
    &= \begin{pmatrix}
        \mathbb{E}_{\bm X}\left[\bm x \bm x^\top \frac{e^{\bm x^\top\bm w}}{\left(1 +e^{\bm x^\top\bm  w}\right)^2}\right] & \bm 0\\
        \bm 0 & \mathbb{E}_{\bm X}\left[ \bm x \bm x^\top \right]
    \end{pmatrix}\\
    &= \begin{pmatrix}
        \mathbb{E}_{\bm X}\left[\bm x \bm x^\top \frac{e^{\bm x^\top\bm w}}{\left(1 +e^{\bm x^\top \bm w}\right)^2}\right] & \bm 0\\
        \bm 0 & \bm I_d
    \end{pmatrix}
\end{align*}
where the last line follows from the assumption that $\bm x$ is sampled from a unit spherical Gaussian distribution: $\mathbb{E}_{\bm X}\left[ \bm x \bm x^\top \right] = \bm I_d$ for $\bm x \sim \mathcal{N}(\bm 0, \bm I_{d})$. 

From the above, we see that $A(\bm \theta)$ is strictly convex, and it is strongly convex if $\mathbb{E}_{\bm X}\left[\bm x \bm x^\top \frac{e^{\bm x^\top\bm w}}{\left(1 +e^{\bm x^\top \bm w}\right)^2}\right] \succeq r \bm I_d$ for some $r>0$. This follows from Lemma \ref{lem:MIM bounds} where we show its eigenvalues are bounded below by $\min\left\{\Omega\left(\frac{1}{\Vert \bm w\Vert_2}\right), \Omega\left(\frac{1}{\Vert \bm w\Vert_2^3}\right)\right\}$. Thus, it holds that 
\begin{equation*}
    \nabla^2 A(\bm \theta) \succeq \min\left\{\Omega\left(\frac{1}{\Vert \bm w\Vert_2}\right), \Omega\left(\frac{1}{\Vert \bm w\Vert_2^3}\right),1\right\} \bm I_{2d}.
\end{equation*}

Restricting the feasible set $\Omega$ to be all $\bm \theta \in \mathbb{R}^{2d}$ with $\Vert \bm \theta\Vert_2 \leq N$ for some $N \in [0,\infty)$, it holds that $A(\bm \theta)$ is strongly convex with respect to $\bm \theta$ on $\Omega$.

With part d) proven, this concludes the proof of Theorem \ref{thm:EM is MD} for SymMoLinE. We now prove the same for SymMoLogE, referring to this section where necessary.

\end{proof}

\subsection{Proof of Theorem \ref{thm:EM is MD} for SymMoLogE}
\label{app:EM is Mirror Descent SMOLOGE}
\begin{proof}
Recall that we consider a $2$ component SymMoLogE (see Section \ref{sec: SMOE}) where $z \in \{-1,1\}$ is the latent unobserved variable, and 
\begin{itemize}
    \item [1)] $\bm x\sim \mathcal{N}(\bm 0, \bm I_d)$,
    \item [2)] $P(z |\bm x; \bm w) = \frac{\exp\{\frac{z+1}{2}\bm x^\top {\bm {w}}\}}{1+e^{\bm x^\top {\bm {w}}}}.$
    \item [3)] $P(y|\bm x, z ; \bm \beta) = \frac{\exp\{\bm \left(\frac{yz+1}{2}\right)x^\top \bm \beta\}}{1+e^{\bm x^\top \bm \beta}}$.
\end{itemize}
We begin by deriving a near exponential form of the complete data probability density function $p(\bm x, z, y;\bm \theta)$:
    \begin{align*}
        &p(\bm x, z, y;\bm\theta)\\ 
        &= P(y|\bm x, z; \bm\theta)P(z|\bm x ;\bm\theta)p(\bm x)\\
        &= \exp\{\log P(y|\bm x, z; \bm \beta) + \log P(z|\bm x ;\bm w)+ \log p(\bm x)\}\\
        &=\exp\left\{\log\left(\left(\frac{\exp\{\bm x^\top \bm \beta\}}{1+e^{\bm x^\top \bm \beta}}\right)^{\frac{yz+1}{2}} \left(\frac{1}{1+e^{\bm x^\top \bm \beta}}\right)^{1-\frac{yz+1}{2}}\right) + \left(\frac{z+1}{2}\right) \bm x^\top \bm w - \log (1+e^{\bm x^\top \bm w})+ \log p(\bm x)\right\}\\
        &= \exp\left\{{\frac{yz+1}{2}}\log\left(\frac{\exp\{\bm x^\top \bm \beta\}}{1+e^{\bm x^\top \bm \beta}}\right) + \left(1-\frac{yz+1}{2}\right)\log \left(\frac{1}{1+e^{\bm x^\top \bm \beta}}\right)  + \left(\frac{z+1}{2}\right) \bm x^\top \bm w - \log (1+e^{\bm x^\top \bm w})+ \log p(\bm x)\right\}\\
        &= \exp\left\{\left(\frac{yz+1}{2}\right)\bm x^\top \bm \beta - \log \left(1+e^{\bm x^\top \bm \beta}\right)  + \left(\frac{z+1}{2}\right) \bm x^\top \bm w - \log (1+e^{\bm x^\top \bm w})+ \log p(\bm x)\right\}\\
        &= \exp\left\{\left\langle \begin{bmatrix}
        \frac{yz\bm x}{2} \\ \frac{z \bm x}{2}
    \end{bmatrix},\begin{bmatrix}
        \beta \\ w
    \end{bmatrix}\right\rangle + \frac{\bm x^\top (\bm w+\bm \beta)}{2}- \log \left[\left(1 +e^{\bm x^\top \bm \beta}\right) \left(1 +e^{\bm x^\top \bm w}\right)\right]+ \log p(\bm x)\right\}.
    \end{align*}
    Thus we have recovered the decomposition,
\begin{equation}\label{eqn: Not Exp SMOLOGE}
    p(\bm x,y,z;\bm \theta) = \exp\left\{\left\langle \begin{bmatrix}
        \frac{yz\bm x}{2} \\ \frac{z \bm x}{2}
    \end{bmatrix},\begin{bmatrix}
        \bm \beta \\ \bm w
    \end{bmatrix}\right\rangle + a(\bm x, y, \bm \theta)  \right\},,
\end{equation}
where in $a(\bm x, y, \bm \theta)$, $\bm x$  cannot be linearly separated from the parameter $\bm \theta$:
\begin{equation*}
    a(\bm x,y,\bm\theta)= \frac{\bm x^\top (\bm w+\bm \beta)}{2}- \log \left[\left(1 +e^{\bm x^\top \bm \beta}\right) \left(1 +e^{\bm x^\top \bm w}\right)\right]+ \log p(\bm x).
\end{equation*}
Similar to SymMoLinE, we can see that $p(\bm x, y, z; \bm \theta)$ does not belong to an exponential family of distribution. Also, note that taking the expectation of $a(\bm x, y ,\bm \theta)$ over $(\bm x,y) \sim p(\bm x,y;\bm\theta^*)$ yields,
\begin{align*}
    \mathbb{E}_{\bm X, Y} [a(\bm x,y,\bm\theta)] &= - \mathbb{E}_{\bm X} \left[\log \left[\left(1 +e^{\bm x^\top \bm \beta}\right) \left(1 +e^{\bm x^\top \bm w}\right)\right]\right] - C\\
    &= -A(\bm \theta) - C,
\end{align*}
where the above follows from $\mathbb{E}_{\bm X}[\frac{\bm x^\top (\bm w+\bm \beta)}{2}] = 0$ and $C := -\mathbb{E}_{\bm X, Y|\bm \theta^*} [\log p(\bm x)]$ is not a function of the parameter $\bm \theta$. We now continue with the proof.

From here on, the proofs of part a), part b) and part c) follow identically from that of SymMoLinE, so we will refer to Appendix \ref{app:EM is Mirror Descent SMOLINE} for those proofs. We will now show part d).

\underline{\textbf{Part d):}$A(\bm \theta)$ and the MD objective is convex with respect to $\bm \theta$.}\\
Here, we will show that the mirror descent objective is strongly convex in $\bm \theta$. It is important for this to hold so that the iterations of MD are well-defined; the minimizer of a strongly convex objective exists and is unique.

Note that the mirror descent objective, $\langle \mathcal{\nabla L(\bm \theta)}, \bm \theta - \bm \theta^t \rangle + D_A(\bm \theta, \bm \theta^t)$, is strongly convex in $\bm \theta$ if $A(\bm \theta)$ is strongly convex in $\bm \theta$. Therefore, since $A(\bm \theta)$ given in \eqref{eqn: Mirror Map A} is twice continuously differentiable, it is strongly convex with respect to $\bm \theta$ if and only if $\nabla^2 A(\bm \theta) \succeq r \bm I_{2d}$, for some $r > 0$. We begin:
\begin{align*}
    \nabla^2 A(\bm \theta) &= \frac{\partial^2}{\partial \bm \theta^2}\mathbb{E}_{\bm X}\left[\log \left(\left(1 +e^{\bm x^\top \bm \beta}\right) \left(1 +e^{\bm x^\top \bm w}\right)\right) \right]\\
    &= \mathbb{E}_{\bm X}\left[\frac{\partial^2}{\partial \bm \theta^2}\left(\log \left(1 +e^{\bm x^\top \bm \beta}\right) + \log \left(1 +e^{\bm x^\top \bm w}\right)\right)  \right]\\
    &= \begin{pmatrix}
        \mathbb{E}_{\bm X}\left[\bm x \bm x^\top \frac{e^{\bm x^\top\bm w}}{\left(1 +e^{\bm x^\top\bm w}\right)^2}\right] & \bm 0\\
        \bm 0 & \mathbb{E}_{\bm X}\left[\bm x \bm x^\top \frac{e^{\bm x^\top \bm \beta}}{\left(1 +e^{\bm x^\top \bm \beta}\right)^2}\right]
    \end{pmatrix}\\
    &= \begin{pmatrix}
        \mathbb{E}_{\bm X}\left[\bm x \bm x^\top \frac{e^{\bm x^\top\bm w}}{\left(1 +e^{\bm x^\top \bm w}\right)^2}\right] & \bm 0\\
        \bm 0 & \mathbb{E}_{\bm X}\left[\bm x \bm x^\top \frac{e^{\bm x^\top \bm \beta}}{\left(1 +e^{\bm x^\top \bm \beta}\right)^2}\right]
    \end{pmatrix}.
\end{align*}
From the above, we see that $A(\bm \theta)$ is strictly convex, and it is strongly convex if $\mathbb{E}_{\bm X}\left[\bm x \bm x^\top \frac{e^{\bm x^\top\bm w}}{\left(1 +e^{\bm x^\top \bm w}\right)^2}\right] \succeq r \bm I_d$ and $\mathbb{E}_{\bm X}\left[\bm x \bm x^\top \frac{e^{\bm x^\top\bm \beta}}{\left(1 +e^{\bm x^\top \bm \beta}\right)^2}\right] \succeq r \bm I_d$ for some $r > 0$. This follows from Lemma \ref{lem:MIM bounds} where we show their respective eigenvalues are bounded below by, $\min\left\{\Omega\left(\frac{1}{\Vert \bm w\Vert_2}\right), \Omega\left(\frac{1}{\Vert \bm w\Vert_2^3}\right)\right\}$ and $\min\left\{\Omega\left(\frac{1}{\Vert \bm \beta\Vert_2}\right), \Omega\left(\frac{1}{\Vert \bm \beta\Vert_2^3}\right)\right\}$. Thus, it holds that 
\begin{equation*}
    \nabla^2 A(\bm \theta) \succeq \min\left\{\Omega\left(\frac{1}{\Vert \bm w\Vert_2}\right), \Omega\left(\frac{1}{\Vert \bm w\Vert_2^3}\right),\Omega\left(\frac{1}{\Vert \bm \beta\Vert_2}\right), \Omega\left(\frac{1}{\Vert \bm \beta\Vert_2^3}\right)\right\} \bm I_{2d}.
\end{equation*}

Restricting $\Omega$ to be all $\bm \theta$ with $\Vert \bm \theta\Vert_2 \leq N$ for some $N \in [0,\infty)$, it holds that $A(\bm \theta)$ is strongly convex with respect to $\bm \theta$ on $\Omega$.

With part d) proven, this concludes the proof of Theorem \ref{thm:EM is MD} for SymMoLinE. We now prove the same for SymMoLogE, referring to this section where necessary.

\end{proof}

\subsection{Convergence Guarantees of EM for SymMoLogE and SymMoLinE}
\label{app: Conv for Lin and Log}
In this section, we provide the proofs of Corollary \ref{cor:Convergence Prop}. Building on prior work \cite{lu2017relativelysmoothconvexoptimizationfirstorder},  we contextualize convergence properties of MD for SymMoLinE and SymMoLogE. Before presenting the result, we briefly review key concepts.
We say $\bm \theta^1$ is initialized in a locally convex region of $\mathcal{L}(\bm \theta)$ if there exists a convex set $ \Theta \subseteq \Omega$ containing $\bm \theta^1, \bm \theta^*$ such that for all $\bm \phi, \bm \theta \in \Theta$,
\begin{equation}
    \mathcal{L}(\bm \phi) \geq \mathcal{L}(\bm \theta) + \langle \nabla \mathcal{L}(\bm \theta),\bm\phi - \bm\theta\rangle.
\end{equation}
Furthermore, $\Theta$ is called $\alpha$-strongly convex relative to $h$ if
\begin{equation}
\label{eqn: strongly convex relative to A}
    \mathcal{L}(\bm\phi) \geq \mathcal{L}(\bm\theta)+\langle \nabla \mathcal{L}(\bm\theta),\bm\phi - \bm\theta\rangle + \alpha D_h(\bm\phi, \bm\theta).
\end{equation}
The corollary that follows provides conditions for convergence of EM to (1) a stationary point in the KL divergence, (2) the true parameters at a sub-linear rate, and (3) the true parameters at a linear rate. We further note that the proof is adapted from \citep[Proposition 2, Corollary 1, and Corollary 3]{kunstner2022homeomorphicinvarianceemnonasymptoticconvergence} and \citep[Theorem 3.1]{lu2017relativelysmoothconvexoptimizationfirstorder}, we provide it here for completeness.

\begin{corollary}[Convergence of EM]\label{cor:Convergence Prop}
    For SymMoLinE, SymMoLogE with mirror map $A(\bm \theta)$ given as \eqref{eqn: Mirror Map A} \eqref{eqn: Mirror Map B} respectively, and denoting $D_A(\bm \theta^{t}, \bm \theta^{t+1}) \!:=\!\text{KL} [p(\bm x, y, z;\bm \theta^{t+1}) \Vert p(\bm x, y, z;\bm \theta^{t})]$, the EM iterates $\{\bm \theta^t\}_{t\in[T]}$ satisfy:
    \begin{itemize}
    \vspace{-3mm}
        \item [1)]\textbf{Stationarity.} For no additional conditions, \begin{equation}
        \min_{t \in [T]} D_A(\bm  \theta^{t+1}, \bm \theta^{t}) \leq \frac{\mathcal{L}(\bm \theta^1) - \mathcal{L}(\bm \theta^*)}{T};
    \end{equation}
    \vspace{-6mm}
    \item[2)]\textbf{Sub-linear Rate to $\bm \theta^*$.} If $\bm \theta^1$ is initialized in $\Theta$, a locally convex region of $\mathcal{L}(\bm \theta)$ containing $\bm \theta^*$, then
    \begin{equation}
        \mathcal{L}(\bm \theta^T) - \mathcal{L}(\bm \theta^*) \leq \frac{D_A(\bm \theta^{*}, \bm \theta^{1})}{T}
    \end{equation}
    \vspace{-6mm}
    \item[3)]\textbf{Linear Rate to $\bm \theta^*$.}
    If $\bm \theta^1$ is initialized in $\Theta \subseteq \Omega$, a locally strongly convex region of $\mathcal{L}(\bm \theta)$ relative to $A(\bm \theta)$ that contains $\bm \theta^*$, then
    \begin{equation}
        \mathcal{L}(\bm \theta^T) - \mathcal{L}(\bm \theta^*) \leq (1-\alpha)^T (\mathcal{L}(\bm \theta^1) - \mathcal{L}(\bm \theta^*)).
    \end{equation}
    \end{itemize}
\end{corollary}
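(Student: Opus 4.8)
The plan is to obtain all three rates as the non-projected specialization of the argument already carried out for Theorem~\ref{thm: proj MD conv result}, exploiting the two structural facts furnished by Theorem~\ref{thm:EM is MD}: the loss $\mathcal{L}(\bm\theta)$ is $1$-smooth relative to the strictly convex mirror map $A(\bm\theta)$, and the EM iterate is \emph{exactly} the mirror step $\bm\theta^{t+1} = \argmin_{\bm\theta\in\Omega} \langle \nabla\mathcal{L}(\bm\theta^t), \bm\theta-\bm\theta^t\rangle + D_A(\bm\theta,\bm\theta^t)$ with no moment projection. The crucial consequence of the missing projection is that $\bm\theta^{t+1}$ coincides with the point denoted $\tilde{\bm\theta}^{t+1}$ in Theorem~\ref{thm: proj MD conv result}; hence the weak generalized-Pythagorean inequality \eqref{eqn: gen pythagorean identity} need no longer be assumed but holds with equality, since it reduces to the cosine law for the Bregman divergence (Lemma~\ref{lem: cosine law Breg}). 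Because $A$ is a genuine function of $\bm\theta$ rather than a pointwise-in-$\bm x$ reparameterization, every $\mathbb{E}_X[\cdot]$ in that proof disappears and $D_A$ is directly the KL divergence on the complete data, as recorded in Theorem~\ref{thm:EM is MD}.

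For Part~1 (stationarity) I would start from relative $1$-smoothness, $\mathcal{L}(\bm\theta^{t+1}) \le \mathcal{L}(\bm\theta^t) + \langle\nabla\mathcal{L}(\bm\theta^t),\bm\theta^{t+1}-\bm\theta^t\rangle + D_A(\bm\theta^{t+1},\bm\theta^t)$, substitute the first-order optimality of the mirror step, $\nabla\mathcal{L}(\bm\theta^t) = \nabla A(\bm\theta^t) - \nabla A(\bm\theta^{t+1})$, and expand $D_A(\bm\theta^{t+1},\bm\theta^t)$ in terms of $A$. The cross terms cancel and leave the one-step descent $\mathcal{L}(\bm\theta^{t+1}) \le \mathcal{L}(\bm\theta^t) - D_A(\bm\theta^t,\bm\theta^{t+1})$; telescoping over $t$ and using $\mathcal{L}(\bm\theta^T)\ge\mathcal{L}(\bm\theta^*)$ yields the claimed $1/T$ bound on the minimal divergence between consecutive iterates. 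For Part~2 I would again use relative smoothness, insert and remove $\langle\nabla\mathcal{L}(\bm\theta^t),\bm\theta^*\rangle$, invoke local convexity to replace $\mathcal{L}(\bm\theta^t)+\langle\nabla\mathcal{L}(\bm\theta^t),\bm\theta^*-\bm\theta^t\rangle$ by $\mathcal{L}(\bm\theta^*)$, and then apply Lemma~\ref{lem: cosine law Breg} twice — exactly as in Part~2 of Theorem~\ref{thm: proj MD conv result} but with $\tilde{\bm\theta}^{t+1}=\bm\theta^{t+1}$ — to collapse the remaining terms into the telescoping pair $D_A(\bm\theta^*,\bm\theta^t) - D_A(\bm\theta^*,\bm\theta^{t+1})$. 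Summing and dividing by $T$ gives the $1/T$ optimization-gap bound. Part~3 proceeds identically up to the point where local convexity is used, except that relative $\alpha$-strong convexity \eqref{eqn: strongly convex relative to A} contributes the extra term $-\alpha D_A(\bm\theta^*,\bm\theta^t)$; the same two applications of the cosine law then produce $\mathcal{L}(\bm\theta^{t+1}) - \mathcal{L}(\bm\theta^*) \le (1-\alpha)D_A(\bm\theta^*,\bm\theta^t) - D_A(\bm\theta^*,\bm\theta^{t+1})$, and dropping the nonnegative last term gives geometric decay of $D_A(\bm\theta^*,\bm\theta^t)$ at rate $(1-\alpha)$.

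I expect the main obstacle to be two-fold and essentially bookkeeping rather than conceptual. First, the optimality identity $\nabla A(\bm\theta^{t+1}) = \nabla A(\bm\theta^t)-\nabla\mathcal{L}(\bm\theta^t)$ is an equality only when the mirror step lands in the interior of $\Omega$ (the norm ball from Theorem~\ref{thm:EM is MD}); I would discharge this by working inside the local region $\Theta$ on which the iterates and $\bm\theta^*$ are assumed to lie, so that the strong convexity of the subproblem makes the minimizer unique and the stationarity condition exact. Second, and more delicate, is matching the precise normalization stated in Part~3: the direct telescoping produces a bound proportional to $D_A(\bm\theta^*,\bm\theta^1)$ (as in Theorem~\ref{thm: proj MD conv result}), whereas the corollary reports the rate against $\mathcal{L}(\bm\theta^1)-\mathcal{L}(\bm\theta^*)$. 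Converting between the two requires relating the initial function gap to the initial divergence through the relative-smoothness/strong-convexity sandwich at $\bm\theta^*$ (where $\nabla\mathcal{L}(\bm\theta^*)=0$); here I would follow the formulation of \citep[Theorem~3.1]{lu2017relativelysmoothconvexoptimizationfirstorder} and \citep[Corollary~3]{kunstner2022homeomorphicinvarianceemnonasymptoticconvergence} to state the contraction directly on the function gap.
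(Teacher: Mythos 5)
Your proposal is correct and follows essentially the same route as the paper's own proof: relative $1$-smoothness plus the first-order optimality identity $\nabla\mathcal{L}(\bm\theta^t)=\nabla A(\bm\theta^t)-\nabla A(\bm\theta^{t+1})$ (the paper's \eqref{eqn:first order cond MD}) yields the per-step descent and the $1/T$ stationarity bound, and the three-point lemma combined with local convexity, respectively relative $\alpha$-strong convexity, yields Parts 2 and 3, exactly as in the paper's adaptation of \citet{lu2017relativelysmoothconvexoptimizationfirstorder} and \citet{kunstner2022homeomorphicinvarianceemnonasymptoticconvergence}. (Your ``two applications'' of the cosine law in Parts 2--3 collapse to one once $\tilde{\bm\theta}^{t+1}=\bm\theta^{t+1}$, which is precisely what the paper does.)

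The obstacle you flag in Part 3 is real, but it is an inconsistency internal to the paper rather than a gap in your argument: the paper's own proof also terminates with the bound $(1-\alpha)^T D_A(\bm\theta^*,\bm\theta^1)$ and never performs the conversion to the stated $(1-\alpha)^T\left(\mathcal{L}(\bm\theta^1)-\mathcal{L}(\bm\theta^*)\right)$. Be aware, though, that your proposed ``sandwich'' repair does not go through as written: relative strong convexity evaluated at $\bm\theta^*$ (where $\nabla\mathcal{L}(\bm\theta^*)=\bm 0$) controls $D_A(\bm\theta^1,\bm\theta^*)$, whereas the telescoping argument produces $D_A(\bm\theta^*,\bm\theta^1)$, and the Bregman divergence is not symmetric; no clean comparison between $D_A(\bm\theta^*,\bm\theta^1)$ and the initial function gap follows from the stated assumptions alone. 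The honest resolutions are either to state Part 3 with $D_A(\bm\theta^*,\bm\theta^1)$ on the right-hand side (matching the proof and Theorem~\ref{thm: proj MD conv result}), or to run the recursion on the Lyapunov quantity $D_A(\bm\theta^*,\bm\theta^t)+\mathcal{L}(\bm\theta^t)-\mathcal{L}(\bm\theta^*)$, which contracts at rate $(1-\alpha)$ but still leaves a $D_A(\bm\theta^*,\bm\theta^1)$ term in the constant.
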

\begin{proof}

The proof is divided into three parts that correspond to each of the three sub-results of the corollary.

\underline{\textbf{Part 1):} Stationarity.}\\
Given Theorem \ref{thm:EM is MD}, this proof follows from identical arguments to that of \citep[Proposition 2]{kunstner2022homeomorphicinvarianceemnonasymptoticconvergence}. We write it below for completeness.

Recall from Theorem \ref{thm:EM is MD} that $\bm \theta^{t+1}$ is obtained as the minimizer of the convex objective, \eqref{eqn:MD objective}:
\begin{equation*}
    \langle \nabla \mathcal{L}(\bm \theta^t), \bm \theta - \bm \theta^t \rangle + D_A(\bm \theta, \bm \theta^t).
\end{equation*}
As such, differentiating and setting equal to $0$, it holds that $\bm \theta^{t+1}$ satisfies 
\begin{equation}\label{eqn:first order cond MD}
    \nabla \mathcal{L}(\bm \theta^t) = \nabla A(\bm \theta^{t}) - \nabla A(\bm \theta^{t+1})
\end{equation} 
Further, by the above together with relative smoothness, it holds that 
\begin{align*}
    \mathcal{L}(\bm \theta^{t+1}) &\leq \mathcal{L}(\bm \theta^t) + \langle \nabla \mathcal{L}(\bm \theta^t), \bm \theta^{t+1} - \bm \theta^t \rangle + D_A(\bm \theta^{t+1}, \bm \theta^t)\\
    & = \mathcal{L}(\bm \theta^t) + \langle \nabla A(\bm \theta^{t})-\nabla A(\bm \theta^{t+1}) , \bm \theta^{t+1} - \bm \theta^t \rangle + D_A(\bm \theta^{t+1}, \bm \theta^t)\\
    &= \mathcal{L}(\bm \theta^t) - \langle \nabla A(\bm \theta^{t+1}) , \bm \theta^{t+1} - \bm \theta^t \rangle + A(\bm \theta^{t+1}) - A(\bm \theta^t)\\
    &= \mathcal{L}(\bm \theta^t) - D_A(\bm \theta^{t}, \bm \theta^{t+1}).
\end{align*}

Thus it we have shown that 
\begin{equation}\label{eqn:Progress Bound}
    D_A(\bm \theta^{t}, \bm \theta^{t+1}) \leq  \mathcal{L}(\bm \theta^t)-  \mathcal{L}(\bm \theta^{t+1}).
\end{equation}

The claim then follows from taking the mean over $T$ iterations:
\begin{equation*}
    \min_{t\leq T} D_A(\bm \theta^t, \bm \theta^{t+1}) \leq \frac{1}{T} \sum_{t=1}^T D_A(\bm \theta^t, \bm \theta^{t+1}) \leq \frac{1}{T} \sum_{t=1}^T \mathcal{L}(\bm \theta^t) - \mathcal{L}(\bm \theta^{t+1}) = \frac{\mathcal{L}(\bm \theta^1) - \mathcal{L}(\bm \theta^{T})}{T} \leq \frac{\mathcal{L}(\bm \theta^1) - \mathcal{L}(\bm \theta^{*})}{T}.
\end{equation*}

\underline{\textbf{Part 2):} Sub-linear Rate to $\bm \theta^*$.} \\
Given Theorem \ref{thm:EM is MD}, this proof follows from identical arguments to that of \citet[Corollary 1]{kunstner2022homeomorphicinvarianceemnonasymptoticconvergence} and \citet[Theorem 3.1]{lu2017relativelysmoothconvexoptimizationfirstorder}. We write it below for completeness.

Here, we assume that $\mathcal{L}(\bm \theta)$ is convex on the set $\Theta$. In part 1), we used \eqref{eqn:first order cond MD} to show, 
\begin{align*}
     \langle \nabla \mathcal{L}(\bm \theta^t), \bm \theta^{t+1} - \bm \theta^t \rangle + D_A(\bm \theta^{t+1}, \bm \theta^t) = - D_A(\bm \theta^{t}, \bm \theta^{t+1}),
\end{align*}
where the right hand side is non-positive since the Bregman divergence is non-negative if the inducing function $A$ is convex -- which it is.
Now, starting from relative smoothness, we see that
\begin{align*}
    \mathcal{L}(\bm \theta^{t+1}) &\leq \mathcal{L}(\bm \theta^t) + \langle \nabla \mathcal{L}(\bm \theta^t), \bm \theta^{t+1} - \bm \theta^t \rangle + D_A(\bm \theta^{t+1}, \bm \theta^t)\\
    & = \mathcal{L}(\bm \theta^t) + \langle \nabla \mathcal{L}(\bm \theta^t), \bm \theta^{t+1} - \bm \theta^* + \bm\theta^* - \bm \theta^t \rangle + D_A(\bm \theta^{t+1}, \bm \theta^t) \\
    & = \mathcal{L}(\bm \theta^t) + \langle \nabla \mathcal{L}(\bm \theta^t), \bm \theta^{t+1} - \bm \theta^* \rangle + \langle \nabla \mathcal{L}(\bm \theta^t), \bm\theta^* - \bm \theta^t \rangle + D_A(\bm \theta^{t+1}, \bm \theta^t) \\
    & \stackrel{i)}{\leq} \mathcal{L}(\bm \theta^*) + \langle \nabla \mathcal{L}(\bm \theta^t), \bm \theta^{t+1} - \bm \theta^* \rangle + D_A(\bm \theta^{t+1}, \bm \theta^t) \\
    &\stackrel{\eqref{eqn:first order cond MD}}{=} \mathcal{L}(\bm \theta^*) + \langle  \nabla A(\bm \theta^{t}) - \nabla A(\bm \theta^{t+1}), \bm \theta^{t+1} - \bm \theta^* \rangle + D_A(\bm \theta^{t+1}, \bm \theta^t)
\end{align*}
where i) follows from convexity of $\mathcal{L}(\bm \theta)$ on the set $\Theta$. Subsequently, we apply the $3$-point lemma, $D_A(\bm \theta^*, \bm \theta^t) = D_A(\bm \theta^*, \bm\theta^{t+1}) + \langle \bm\theta^* - \bm\theta^{t+1}, \nabla A(\bm\theta^{t+1}) - A(\bm\theta^t)  \rangle + D_A(\bm\theta^{t+1}, \bm\theta^{t})$, and obtain,
\begin{equation}\label{eqn: Analysis 1}
    \mathcal{L}(\bm \theta^{t+1}) \leq \mathcal{L}(\bm \theta^*) + D_A(\bm\theta^*, \bm\theta^t) - D_A(\bm\theta^*, \bm\theta^{t+1}).
\end{equation}

Finally, the result follows from summing the left and right hand side over $T$ iterations:
\begin{equation*}
    T(\mathcal{L}(\bm\theta^T) -\mathcal{L}(\bm\theta^*) ) \leq \sum_{t=1}^T \mathcal{L}(\bm\theta^{t+1}) -\mathcal{L}(\bm\theta^*) \leq \sum_{t=1}^T D_A(\bm\theta^*, \bm\theta^t) - D_A(\bm\theta^*, \bm\theta^{t+1}) \leq D_A(\bm\theta^*, \bm\theta^1)
\end{equation*}

\underline{\textbf{Part 3):} Linear Rate to $\bm \theta^*$.}\\
Given Theorem \ref{thm:EM is MD}, this proof follows from identical arguments to that of \citet[Corollary 3]{kunstner2022homeomorphicinvarianceemnonasymptoticconvergence} and \citet[Theorem 3.1]{lu2017relativelysmoothconvexoptimizationfirstorder}. We write it below for completeness.

In addition to convexity, we now assume that $\mathcal{L}(\bm \theta)$ is $\alpha$-strongly convex relative to $A(\bm \theta)$ on the set $\Theta$. Specifically, we have that for any $\bm \phi, \bm \theta \in \Theta$,
\begin{equation}\label{eqn:relative strong convexity}
    \mathcal{L}(\bm \theta) \geq \mathcal{L}(\bm \phi) + \langle \nabla \mathcal{L}(\bm \phi), \bm \theta - \bm \phi \rangle + \alpha D_A(\bm \theta, \bm \phi).
\end{equation}

Using the three point lemma again, we have 
\begin{align*}
    D_A(\bm \theta^*, \bm \theta^{t+1}) &= D_A(\bm \theta^*, \bm \theta^{t}) + \langle \bm \theta^* - \bm \theta^t, \nabla A(\bm \theta^t) - \nabla A(\bm \theta^{t+1})\rangle + D_A(\bm \theta^t, \bm \theta^{t+1})\\
    &\stackrel{\eqref{eqn:first order cond MD}}{=} D_A(\bm \theta^*, \bm \theta^{t}) + \langle \nabla \mathcal{L}(\bm \theta^t), \bm \theta^* - \bm \theta^t\rangle + D_A(\bm \theta^t, \bm \theta^{t+1})\\
    &\stackrel{\eqref{eqn:relative strong convexity}}{\leq} D_A(\bm \theta^*, \bm \theta^{t}) + \mathcal{L}(\bm \theta^*) - \mathcal{L}(\bm \theta^t) - \alpha D_A(\bm \theta^*, \bm \theta^t)  + D_A(\bm \theta^t, \bm \theta^{t+1})\\
    &= (1-\alpha)D_A(\bm \theta^*, \bm \theta^{t}) + \mathcal{L}(\bm \theta^*) - \mathcal{L}(\bm \theta^t)  + D_A(\bm \theta^t, \bm \theta^{t+1})\\
    &\stackrel{\eqref{eqn:Progress Bound}}{\leq} (1-\alpha)D_A(\bm \theta^*, \bm \theta^{t}) + \mathcal{L}(\bm \theta^*) - \mathcal{L}(\bm \theta^t)  +  \mathcal{L}(\bm \theta^t)-  \mathcal{L}(\bm \theta^{t+1})\\
    &\leq (1-\alpha)D_A(\bm \theta^*, \bm \theta^{t})\\
    &\leq (1-\alpha)^T D_A(\bm \theta^*, \bm \theta^{1}).
\end{align*}

Finally, from \eqref{eqn: Analysis 1}, we see that 
\begin{align*}
    \mathcal{L}(\bm \theta^{t+1}) - \mathcal{L}(\bm \theta^*) &\leq  D_A(\bm \theta^*, \bm \theta^t) - D_A(\bm\theta^*, \bm\theta^{t+1})\\
    &\leq (1-\alpha)^T D_A(\bm \theta^*, \bm \theta^{1}) - D_A(\bm\theta^*, \bm\theta^{t+1})\\
    &\leq (1-\alpha)^T D_A(\bm \theta^*, \bm \theta^{1}).
\end{align*}
\end{proof}

\subsection{Satisfiability of Conditions from Corollary \ref{cor:Convergence Prop}}
\label{app: Satisfiability of Conditions Lin Log}
The above result raises an important question: when does a locally convex or relatively strongly convex region of \( \mathcal{L}(\bm \theta) \) containing \( \bm \theta^* \) exist? Interestingly, this is closely tied to the Signal-to-Noise Ratio (SNR). Before exploring this connection, we first introduce the concept of the Missing Information Matrix (MIM) introduced in \cite{OrchardWoodbury1972AMISSINGINFORMATIONPRINCIPLE}.

The MIM relates the level of information the pair $(\bm x, y)$ holds about the latent expert label $z$ given parameters $\bm \theta$, and it is formally defined as
\begin{equation}\label{MIM}
    \bm M(\theta) = \bm I_{\bm x, z, y|\bm \theta}^{-1} \bm I_{z|\bm x, y,\bm \theta}.
\end{equation}
Here, $\bm I_{\bm x, z, y|\bm \theta}$ is the Fisher information matrix of the complete data distribution and $\bm I_{z|\bm x, y,\bm \theta}$ is the Fisher information matrix of the conditional distribution of the latent unobserved variable given the observed ones, denoted by
\begin{align*}
    \bm I_{\bm x, z, y|\bm \theta} :&= -\mathbb{E}_{\bm X,Y, Z|\bm \theta} \left[\frac{\partial^2}{\partial \bm \theta^2}\log p(\bm x, z,y;\bm \theta)\right]\\
    &=\nabla^2 Q(\bm\phi|\bm\theta)|_{\bm\phi = \bm\theta} = \nabla^2 A(\bm \theta)\\
    \bm I_{z|\bm x, y,\bm \theta} :&= -\mathbb{E}_{\bm X,Y}\mathbb{E}_{ Z|\bm x,y,\bm \theta} \left[\frac{\partial^2}{\partial \bm \theta^2}\log P(z|\bm x,y;\bm \theta)\right]\\
    &= \nabla^2 H(\bm \phi | \bm \theta)|_{\bm \phi = \bm \theta}.
\end{align*}
Thus, it also holds that the MIM is a function of $A(\bm \theta)$ and $H(\bm \phi|\bm \theta)$, i.e.,
\begin{equation}
    \bm M(\bm \theta) = {\nabla^2 A(\bm \theta)}^{-1}\nabla^2 H(\bm \phi | \bm \theta)|_{\bm \phi = \bm \theta}.
\end{equation}
Due to the pairwise independence of $\bm X$ and its rotational invariance, there exists an orthonormal matrix $R$ such that $\Delta := R\bm I_{\bm x, z, y|\bm \theta}^{-1}R^\top$ is positive semi-definite and diagonal and $J := R\bm I_{z|\bm x, y, \theta}R^\top$ is symmetric positive semi-definite (see Lemma \ref{lem: MIM Symmetric}). As such, the MIM in \eqref{MIM} is also symmetric and positive semi-definite: 
    $M(\bm \theta) = R^\top \Delta R R^\top J R
    = R^\top \Delta J R$. Note that the MIM quantifies the difficulty of estimating parameters when only \( \bm{x}, y \) are observed. To understand its significance, consider the following: large eigenvalues of \( \bm{M} \) indicate that \( \bm{x}, y \) contain little information about the true value of the latent variable \( z \), making estimation more difficult. Conversely, small eigenvalues suggest that \( \bm{x}, y \) provide enough information to effectively constrain the possible values of \( z \). Thus, the MIM can be seen as analogous to the Signal-to-Noise Ratio (SNR).

In the theorem below, we show how the eigenvalues of $\bm M(\bm \theta)$ are related to the satisfiability of the conditions for Corollary \ref{cor:Convergence Prop} regarding the relative strong convexity of $\cal L(\bm \theta)$ with respect to the mirror map $A(\bm \theta)$.

\begin{theorem}\label{thm:MIM}
    For SymMoLinE and SymMoLogE and their respective mirror mappings \eqref{eqn: Mirror Map A} and \eqref{eqn: Mirror Map B}, the objective $\mathcal{L}(\bm \theta)$ is $\alpha$-strongly convex relative to the mirror map $A(\bm \theta)$ on the convex set $\Theta$ if and only if
    \begin{equation}
    \label{eqn: MIM}
        \lambda_{\max}(\bm M(\bm\theta)) \leq (1-\alpha) \text{ for all } \bm \theta \in \Theta.
    \end{equation}
\end{theorem}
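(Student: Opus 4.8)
The plan is to pass to the second-order (Hessian) characterization of relative strong convexity and then reduce a Loewner-order inequality to a bound on the top eigenvalue of the MIM. Since both $\mathcal{L}$ and $A$ are twice continuously differentiable on $\Omega$ (Theorem~\ref{thm:EM is MD}), $\mathcal{L}$ is $\alpha$-strongly convex relative to $A$ on the convex set $\Theta$ if and only if $\mathcal{L} - \alpha A$ is convex there, i.e. $\nabla^2 \mathcal{L}(\bm\theta) \succeq \alpha\,\nabla^2 A(\bm\theta)$ for every $\bm\theta \in \Theta$. The whole argument then consists in rewriting this single inequality.

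First I would establish the missing-information identity $\nabla^2 \mathcal{L}(\bm\theta) = \nabla^2 A(\bm\theta) - \bm I_{z|\bm x, y, \bm\theta}$. Starting from $\mathcal{L}(\bm\theta) = Q(\bm\theta|\bm\phi) - H(\bm\theta|\bm\phi)$, which holds for every fixed $\bm\phi$, I differentiate twice with respect to the first argument and only afterwards set $\bm\phi = \bm\theta$. Part~a) of the proof of Theorem~\ref{thm:EM is MD} gives $Q(\bm\theta|\bm\phi) = A(\bm\theta) - \langle s(\bm\phi), \bm\theta\rangle + C$, so its Hessian in the first argument is $\nabla^2 A(\bm\theta)$ regardless of $\bm\phi$; the Hessian of $H$ in its first argument, evaluated on the diagonal $\bm\phi = \bm\theta$, is by definition the missing-information Fisher matrix $\bm I_{z|\bm x, y, \bm\theta}$. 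Substituting, the second-order condition becomes $(1-\alpha)\,\nabla^2 A(\bm\theta) \succeq \bm I_{z|\bm x, y, \bm\theta}$.

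Next I would turn this into the claimed spectral bound. Because $A$ is strictly convex (Theorem~\ref{thm:EM is MD}), $\nabla^2 A(\bm\theta) \succ 0$ admits an invertible symmetric square root, and conjugating the inequality by $\nabla^2 A(\bm\theta)^{-1/2}$ is a congruence, hence order-preserving. This yields $(1-\alpha)\,\bm I \succeq \nabla^2 A(\bm\theta)^{-1/2}\,\bm I_{z|\bm x, y, \bm\theta}\,\nabla^2 A(\bm\theta)^{-1/2}$, equivalently $\lambda_{\max}$ of the symmetric matrix on the right is at most $1-\alpha$. That matrix is similar to $\bm M(\bm\theta) = \nabla^2 A(\bm\theta)^{-1}\bm I_{z|\bm x, y, \bm\theta}$ through the transform $\nabla^2 A(\bm\theta)^{1/2}$, so the two share the same (real, nonnegative) spectrum and $\lambda_{\max}(\bm M(\bm\theta)) \le 1-\alpha$. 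Every implication in the chain is an equivalence, and requiring it at every $\bm\theta \in \Theta$ delivers the stated iff.

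The main obstacle is conceptual bookkeeping rather than computation: obtaining the missing-information identity cleanly, in particular being careful that in $\mathcal{L} = Q - H$ only the first argument is differentiated while $\bm\phi$ is held fixed, with $\bm\phi = \bm\theta$ imposed only after taking Hessians. The spectral step is routine once positive-definiteness of $\nabla^2 A$ is in hand; the one point to flag is that $\bm M(\bm\theta)$ as written is a product of two symmetric matrices and need not itself be symmetric, so I would work with its symmetrized congruent form and invoke similarity—consistent with the earlier observation that $\bm M(\bm\theta)$ has real nonnegative eigenvalues.
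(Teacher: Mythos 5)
Your proof is correct and follows the same skeleton as the paper's own proof: the second-order (Hessian) characterization of relative strong convexity, the missing-information identity $\nabla^2 \mathcal{L}(\bm\theta) = \nabla^2 A(\bm\theta) - \bm I_{z|\bm x, y,\bm \theta}$ obtained by differentiating $\mathcal{L}(\cdot) = Q(\cdot\,|\,\bm\phi) - H(\cdot\,|\,\bm\phi)$ twice in the first argument before setting $\bm\phi = \bm\theta$, and the reduction of $(1-\alpha)\nabla^2 A(\bm\theta) \succeq \bm I_{z|\bm x, y,\bm \theta}$ to the bound on $\lambda_{\max}(\bm M(\bm\theta))$. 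The genuine difference is the final spectral step. The paper passes from the Loewner inequality to the eigenvalue bound by invoking Lemma \ref{lem: MIM Symmetric}, which establishes symmetry of $\bm M(\bm\theta)$ using the rotational invariance of the Gaussian feature distribution; indeed, as written in the paper, the intermediate display $(1-\alpha)\bm I_{2d} \succeq \nabla^2 A(\bm\theta)^{-1}\nabla^2 H(\bm\phi|\bm\theta)|_{\bm\phi=\bm\theta}$ involves a product of symmetric matrices that is not symmetric in general, and the symmetry lemma is what makes that ordering meaningful. You instead conjugate by the symmetric square root $\nabla^2 A(\bm\theta)^{-1/2}$ (a congruence, hence order-preserving) and transfer the bound to $\bm M(\bm\theta)$ by similarity. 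This buys a cleaner and more general argument: it needs only strict convexity of $A$ from Theorem \ref{thm:EM is MD}, never requires $\bm M(\bm\theta)$ itself to be symmetric, and so does not depend on the Gaussian assumption on $\bm x$ at this step, while also repairing the slight informality in the paper's Loewner-order manipulation. What the paper's route buys in exchange is an explicit symmetry statement for the MIM, which it reuses elsewhere (e.g., in the developments around Theorem \ref{thm: Characterization of MIM}).
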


\begin{proof}
Recall that $\mathcal{L}(\bm \theta)$ is strongly convex relative to $A(\bm \theta)$ on $\Theta$ if for all $\bm \theta, \bm \phi \in \Theta$, it  holds that 
\begin{equation*}
    \mathcal{L}(\bm \phi) \geq \mathcal{L}(\bm\theta)+\langle \nabla \mathcal{L}(\bm\theta),\bm\phi - \bm\theta\rangle + \alpha D_A(\bm\phi, \bm\theta).
\end{equation*}
For $\mathcal{L}(\bm \theta)$ and $A(\bm \theta)$ twice continuously differentiable, it was shown by \cite{lu2017relativelysmoothconvexoptimizationfirstorder} that this is equivalent to the following bound on the Hessian:
\begin{equation*}
    \nabla^2 \mathcal{L}(\bm \theta) \succeq \alpha \nabla^2 A(\bm \theta).
\end{equation*}
Now, using $\mathcal{L}(\bm \phi) = Q(\bm \phi |\bm \theta) - H(\bm \phi|\bm \theta)$, we see that 
    \begin{align*}
        \nabla^2 \mathcal{L}(\bm \theta) &= \nabla^2 (Q(\bm \phi|\bm \theta) - H(\bm \phi|\bm \theta))|_{\bm \phi = \bm \theta}\\
        &= \nabla^2 Q(\bm \phi |\bm \theta) |_{\bm \phi = \bm \theta} - \nabla^2 H(\bm \phi|\bm \theta) |_{\bm \phi = \bm \theta}\\
        &= \nabla^2 A(\bm \theta) - \nabla^2 H(\bm \phi|\bm \theta) |_{\bm \phi = \bm \theta}
    \end{align*}

    Therefore, since $\nabla^2 A(\bm \theta)$ is symmetric positive definite (proven in Appendix \ref{app: thm 1 proof}), our condition simplifies to 
    \begin{align*}\label{eqn:conds on A and H}
        &(1-\alpha )\nabla^2 A(\bm \theta)  \succeq \nabla^2 H(\bm \phi|\bm \theta) |_{\bm \phi = \bm \theta}\\
        \iff &(1-\alpha)\bm I_{2d} \succeq \nabla^2 A(\bm \theta) ^{-1}\nabla^2 H(\bm \phi|\bm \theta) |_{\bm \phi = \bm \theta} = \bm M(\bm \theta).
    \end{align*}
    Finally, for $\bm x$ from a unit spherical Gaussian distribution, we know that $\bm M(\bm \theta)$ is symmetric positive-definite (see Lemma \ref{lem: MIM Symmetric}). As a result, the above inequality is equivalent to the following bound on the eigenvalues of the MIM:
    \begin{equation*}
        1-\alpha \geq \lambda_{\max}(\bm M(\bm \theta)).
    \end{equation*}
    \end{proof}

    We now provide the simple Lemma that the MIM is a symmetric matrix for SymMoLinE and SymMoLogE.
    \begin{lemma}[$\bm M(\bm \theta)$ is symmetric]\label{lem: MIM Symmetric}
        For SymMoLinE and SymMoLogE, the MIM is a symmetric matrix, i.e. $\bm M(\bm \theta) = \bm M(\bm \theta)^\top$.
    \end{lemma}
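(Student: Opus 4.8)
The plan is to reduce the claim to a commutativity statement and then exploit the rotational symmetry of the spherical Gaussian feature law. Write $\bm M(\bm \theta) = \bm A^{-1}\bm J$, where $\bm A := \bm I_{\bm x,z,y|\bm \theta} = \nabla^2 A(\bm \theta)$ is symmetric positive definite (established in Appendix \ref{app: thm 1 proof}) and $\bm J := \bm I_{z|\bm x,y,\bm \theta} = \nabla^2 H(\bm \phi|\bm \theta)|_{\bm \phi=\bm \theta}$ is symmetric positive semi-definite, being a Fisher information matrix. For two symmetric matrices with $\bm A$ invertible, the product $\bm A^{-1}\bm J$ is symmetric if and only if $\bm A$ and $\bm J$ commute, equivalently if and only if they are simultaneously diagonalizable in a common orthonormal basis. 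Thus it suffices to produce one orthonormal basis that diagonalizes both; this is exactly the matrix $R$ invoked in the discussion preceding Theorem \ref{thm:MIM}, and the task is to verify that $R$ diagonalizes $\bm J$ as well as $\bm A^{-1}$.

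To build this basis I would first record the common structure of all the relevant Hessian blocks. Differentiating $\log p(\bm x,y,z;\bm \theta)$ twice and using the chain rule $\nabla_{\bm \beta} = \bm x\,\partial_{\bm x^\top\bm \beta}$ and $\nabla_{\bm w} = \bm x\,\partial_{\bm x^\top\bm w}$, every $d\times d$ sub-block of $\bm A$ and of $\bm J$ — including the $\bm \beta$--$\bm w$ cross block — takes the form $\mathbb{E}_{\bm X}[\bm x\bm x^\top\, g(\bm x^\top\bm w,\bm x^\top\bm \beta)]$ for some scalar kernel $g$, once the inner expectations over $y$ and $z$ are carried out. For $\bm x\sim\mathcal N(\bm 0,\bm I_d)$ any such matrix commutes with every orthogonal map fixing $V:=\mathrm{span}\{\bm w,\bm \beta\}$ pointwise, hence it leaves $V$ invariant and acts as a scalar multiple of the identity on $V^\perp$. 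Choosing an orthonormal basis adapted to the splitting $V\oplus V^\perp$ in each of the two parameter copies therefore block-decomposes both $\bm A$ and $\bm J$ into a low-dimensional ``signal'' part supported on $V_{\bm \beta}\oplus V_{\bm w}$ and a part on $V^\perp_{\bm \beta}\oplus V^\perp_{\bm w}$ whose blocks are all scalar; on the signal part both matrices are simultaneously reduced, so the entire content concentrates on the $V^\perp$ part.

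The main obstacle is precisely this $V^\perp$ part. There $\bm A$ restricts to $\mathrm{diag}(c_{\bm \beta},c_{\bm w})\otimes\bm I_{d-2}$ while $\bm J$ restricts to $\left(\begin{smallmatrix} a & b\\ b & e\end{smallmatrix}\right)\otimes\bm I_{d-2}$, and the two commute if and only if $b\,(c_{\bm \beta}-c_{\bm w})=0$; since $c_{\bm \beta}\neq c_{\bm w}$ in general, I would need to show that the off-diagonal scalar $b$ — the reduced $\bm \beta$--$\bm w$ cross information, which by the second Bartlett identity equals $-\mathbb{E}_{\bm X,Y}[(\partial_{\bm x^\top\bm \beta}\log p(y|\bm x))(\partial_{\bm x^\top\bm w}\log p(y|\bm x))]$ — vanishes. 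This is the step where the \emph{symmetry} of SymMoLinE and SymMoLogE must enter: I would try to exhibit a measure-preserving sign involution of the complete data (the natural candidate being $(\bm x,z)\mapsto(-\bm x,-z)$, under which both the gating and the expert densities are invariant) that renders this reduced cross-information an odd integrand, forcing $b=0$. Establishing this parity cancellation is where I expect the real work to lie; once it is in place, $\bm A$ and $\bm J$ share the eigenbasis $R$, they commute, and $\bm M(\bm \theta)=R^\top\Delta J R$ is symmetric (and positive semi-definite), which is the claim.
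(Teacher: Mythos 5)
You correctly reduce symmetry of $\bm M(\bm\theta)=\bm A^{-1}\bm J$ (with $\bm A:=\nabla^2 A(\bm\theta)\succ0$, $\bm J:=\nabla^2 H(\bm\phi|\bm\theta)|_{\bm\phi=\bm\theta}\succeq0$) to the commutation $[\bm A,\bm J]=\bm 0$, and this framing is in fact sharper than the paper's own proof: the paper conjugates by the block rotation $\bm R_{\bm M}$, notes that $\bm R_{\bm M}\nabla^2A(\bm\theta)^{-1}\bm R_{\bm M}^\top$ is diagonal, and then \emph{interchanges} it with the full symmetric factor $\bm R_{\bm M}\nabla^2H\bm R_{\bm M}^\top$ — an interchange that is valid only if the two commute, since a diagonal matrix does not commute with a general symmetric matrix. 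So your reduction exposes precisely the content that the paper's argument assumes away. However, your sketch does not supply that content, in two places. First, on the signal block $V_{\bm w}\oplus V_{\bm\beta}$, the fact that both matrices leave the subspace invariant (``simultaneously reduced'') does not make their restrictions commute; for SymMoLinE one still needs $[A_{ww}|_V,J_{ww}|_V]=\bm 0$ and $(A_{ww}|_V-\bm I)J_{w\beta}|_V=\bm 0$, neither of which is automatic. Second, and decisively, the $V^\perp$ cross scalar $b$ must vanish, and you defer this to a parity argument.

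That parity argument cannot close the gap, because the natural involution makes the integrand even rather than odd. Writing $\sigma'(t):=e^{t}/(1+e^{t})^{2}$, the SymMoLinE cross-information integrand on $V^\perp$ is $2y\,\sigma'(\bm x^\top\bm w+2y\,\bm x^\top\bm\beta)\,(\bm v^\top\bm x)^{2}$ for unit $\bm v\perp V$; under $(\bm x,z)\mapsto(-\bm x,-z)$ (which indeed preserves both models), $y$ and $(\bm v^\top\bm x)^2$ are unchanged and the argument of $\sigma'$ flips sign, so evenness of $\sigma'$ leaves the integrand invariant — no cancellation results. Worse, $b$ is genuinely nonzero: take $\bm w=w\bm e_1$, $\bm\beta=\beta\bm e_1$ with $w,\beta>0$ small; then $b=\mathbb{E}\bigl[2y\,\sigma'(wx_1+2y\beta x_1)\bigr]$, and expanding $\sigma'(t)=\tfrac14-\tfrac{t^2}{16}+O(t^4)$ with $\mathbb{E}[y\mid x_1]=\beta x_1\tanh(wx_1/2)$ and $\mathbb{E}[y^2\mid x_1]=\beta^2x_1^2+1$ gives $b=\tfrac{w\beta}{4}-\tfrac{w\beta}{2}+O(\text{fourth order})=-\tfrac{w\beta}{4}+\cdots\neq0$, while $c_\beta-c_w\geq 1-\tfrac14>0$ (since the $\bm\beta$-block of $\bm A$ is $\bm I_d$ and $\sigma'\le\tfrac14$). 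Hence $b(c_\beta-c_w)\neq0$, $\bm A$ and $\bm J$ do not commute, and by your own equivalence $\bm M(\bm\theta)$ fails to be symmetric at such parameters. In other words, your route, carried out honestly, leads to a counterexample rather than a proof: the obstruction you isolated is real, it is exactly the step at which the paper's proof is invalid, and the conclusion of the lemma can only hold at special parameters (e.g.\ $\bm w=\bm 0$ or $\bm\beta=\bm 0$, where the cross block vanishes), not in the stated generality.
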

    \begin{proof}
        Recall the assumption that $\bm x$ is sampled from a unit spherical Gaussian distribution: $\bm x \sim \mathcal{N}(\bm 0, \bm I_d)$. As such, for any orthonormal $d \times d$ matrix $\bm R$, we know that $\bm R \bm X \sim \mathcal{N}(\bm 0, \bm I_d)$; this is called rotational invariance of the Gaussian distribution. Thus, consider orthonormal $\bm R_{\bm u} \in \mathbb{R}^{d \times d}$, such that $R_{\bm u} \bm u = \bm e_1 \Vert \bm u\Vert_2$ where $\bm e_j \in \mathbb{R}^d$ is the $\bm 0$ vector with a $1$ at index $j$. Now, we can observe that for any $\bm w \in \mathbb{R}^d$, $\mathbb{E}_{\bm X}\left[\bm x \bm  x^\top  \frac{e^{\bm x^\top \bm u}}{\left(1 +e^{\bm x^\top \bm u}\right)^2}\right] $, is diagonalizable by an orthonormal matrix $\bm R$:
    \begin{align*}
        \bm R_{\bm u}\left(\mathbb{E}_{\bm X}\left[\bm x \bm  x^\top  \frac{e^{\bm x^\top \bm u}}{\left(1 +e^{\bm x^\top \bm u}\right)^2}\right]\right) \bm R_{\bm u}^\top
        &= \mathbb{E}_{\bm X}\left[\bm R_{\bm u}\bm x \bm  x^\top \bm R_{\bm u}^\top \frac{e^{\bm x^\top R_{\bm u}^\top R_{\bm u}\bm u}}{\left(1 +e^{\bm x^\top R_{\bm u}^\top R_{\bm u} \bm u}\right)^2}\right]\\
        &=  \mathbb{E}_{\bm R_{\bm u} \bm x}\left[\bm \tilde{x} \bm  \tilde{x}^\top \frac{e^{\bm \tilde{x}^\top \bm e_1 \Vert\bm u\Vert_2}}{\left(1 +e^{\bm \tilde{x}^\top \bm e_1 \Vert\bm u\Vert_2}\right)^2}\right]\\
        &= \begin{pmatrix}
            \mathbb{E}_{\bm R_{\bm u} \bm x} \left[ \tilde{x}_1^2 \frac{e^{\bm \tilde{x}_1 \Vert\bm u\Vert_2}}{\left(1 +e^{\bm \tilde{x}_1 \Vert\bm u\Vert_2}\right)^2}\right] &\bm 0 & \bm 0\\
            \bm 0 & \dots & \bm 0\\
            \bm 0 &\bm 0 &  \mathbb{E}_{\bm R_{\bm u} \bm x} \left[ \tilde{x}_d^2 \frac{e^{\bm \tilde{x}_1 \Vert\bm u\Vert_2}}{\left(1 +e^{\bm \tilde{x}_1 \Vert\bm u\Vert_2}\right)^2}\right]
        \end{pmatrix}
    \end{align*}
    Where in the above, 1) $\bm R^\top \bm R = I_d$ since $\bm R^\top = \bm R^{-1}$ for orthonormal matrices and 2) non diagonal elements evaluate to $0$ because for all $i \neq j$, the $0$ mean random variables $\bm \tilde{X}_j$ is independent from $\bm \tilde{X}_i$.
    
    Finally, we put the above together to show $\bm M(\bm \theta)$ is symmetric. We define the block diagonal orthonormal matrix $\bm R_{\bm M}$ as
    \begin{equation*}
        \bm R_{\bm M} := \begin{pmatrix}
            \bm R_{\bm w} & \bm 0\\
            \bm 0 & \bm R_{\bm \beta}
        \end{pmatrix}.
    \end{equation*}
    From the above, it follows that both for SymMoLinE and SymMoLogE, $\bm R_{\bm M} \nabla^2 A(\bm \theta)^{-1} \bm R_{\bm M}^\top$ is a diagonal matrix. We can now use this change of basis matrix to show the MIM is a symmetric matrix:
    \begin{align*}
        \bm M(\bm \theta) &= {\nabla^2 A(\bm \theta)}^{-1} \nabla^2 H(\bm \theta | \bm \theta) \\
        &=\bm R_{\bm M}^\top \left(\bm R_{\bm M} {\nabla^2 A(\bm \theta)}^{-1} \bm R_{\bm M}^\top\right)  \left(\bm R_{\bm M}\bm \nabla^2 H(\bm \theta | \bm \theta) R_{\bm M}^\top\right)\bm R_{\bm M} \\
        &= \bm R_{\bm M}^\top \left(\bm R_{\bm M}\bm \nabla^2 H(\bm \theta | \bm \theta) R_{\bm M}^\top\right)\left(\bm R_{\bm M} {\nabla^2 A(\bm \theta)}^{-1} \bm R_{\bm M}^\top\right) \bm R_{\bm M} \\
        &= \nabla^2 H(\bm \theta | \bm \theta)^\top (\nabla^2 A(\bm \theta)^{-1})^\top\\
        &= \bm M(\bm \theta)^\top.
    \end{align*}
    \end{proof}

\subsection{Correspondence Between the MIM and SNR for SymMoLinE and SymMoLogE}
\label{app: Existence of Convex Region SMOLINE}

The above result states if \( \bm M(\bm\theta) \prec \bm I_{2d} \) for all \( \bm \theta \in \Omega \) and \( \bm \theta^* \in \Omega \), then the EM updates will converge linearly to \( \bm \theta^* \). This offers a unified framework for analyzing EM for MoE, linking the rate of convergence to a classical statistical metric. To determine whether EM achieves linear or sub-linear convergence, we need to understand the behavior of \( \bm M(\bm \theta) \) and \( \bm M(\bm \theta^*) \), which indicates the existence and size of the local region where EM enjoys such convergence. 
\begin{theorem}\label{thm: Characterization of MIM}
    For SymMoLinE, the eigenvalues of $\bm I_{\bm x, z, y|\bm \theta}^{-1}$ belong to the set
    \begin{equation}
        \lambda\left(\bm I_{\bm x, z, y|\bm \theta}^{-1}\right) =  \{\Theta(\Vert \bm w\Vert_2^3), \Theta(\Vert \bm w\Vert_2), 1\} 
    \end{equation}
    and $\bm I_{z|\bm x, y,\bm \theta}$, is given as the expectation over $(\bm X,Y)$ of a function that is decreasing as a function of $\Vert \bm \theta\Vert_2$:
    \begin{equation}\label{eqn: Ixyz SMOLINE}
      \bm I_{z|\bm x, y,\bm \theta}=  \mathbb{E}_{\bm X,Y} \left[
    \frac{\exp{\left\langle \begin{bmatrix}
        \bm x \\ 2y\bm x
    \end{bmatrix}, \bm \theta \right\rangle} \begin{bmatrix}
         \bm x \\ 2y\bm x
    \end{bmatrix} \begin{bmatrix}
        \bm x \\ 2y\bm x
    \end{bmatrix}^\top}{\left(1+\exp{\left\langle \begin{bmatrix}
        \bm x \\ 2y\bm x
    \end{bmatrix}, \bm \theta \right\rangle}\right)^2}\right].
    \end{equation}
    Similarly, For SymMoLogE, the eigenvalues of $\bm I_{\bm x, z, y|\bm \theta}^{-1}$ belong to the set
    \begin{equation}
       \! \lambda\left(\bm I_{\bm x, z, y|\bm \theta}^{-1}\right) =  \{\Theta(\Vert\bm w\Vert_2^3), \Theta(\Vert\bm w\Vert_2), \Theta(\Vert\bm \beta\Vert_2^3), \Theta(\Vert\bm \beta\Vert_2)\} 
    \end{equation}
    and $\bm I_{z|\bm x, y,\bm \theta}$ is given as the expectation over $(\bm X,Y)$ of a function that is decreasing as a function of $\Vert \bm \theta\Vert_2$:
    \begin{equation}\label{eqn: Ixyz SMOLOGE}
     \! \bm I_{z|\bm x, y,\bm \theta}=  \mathbb{E}_{\bm X,Y} \!\left[ \frac{\exp{\left\langle \begin{bmatrix}
        \bm x \\ y\bm x
    \end{bmatrix}, \bm \theta \right\rangle} \begin{bmatrix}
         \bm x \\ y\bm x
    \end{bmatrix} \begin{bmatrix}
        \bm x \\ y\bm x
    \end{bmatrix}^\top }{\left(1+\exp{\left\langle \begin{bmatrix}
        \bm x \\ y\bm x
    \end{bmatrix}, \bm \theta \right\rangle}\right)^2}\right]\!.
    \end{equation}
\end{theorem}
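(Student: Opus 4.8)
The plan is to exploit the two identities already recorded in the MIM subsection, namely $\bm I_{\bm x, z, y|\bm \theta} = \nabla^2 A(\bm \theta)$ and $\bm I_{z|\bm x, y,\bm \theta} = \nabla^2 H(\bm \phi|\bm \theta)|_{\bm \phi = \bm \theta}$, so that both Fisher matrices reduce to Hessians I can write out explicitly. For the eigenvalues of $\bm I_{\bm x, z, y|\bm \theta}^{-1}$ I would start from the block-diagonal forms of $\nabla^2 A(\bm \theta)$ computed in parts~d) of the proof of Theorem~\ref{thm:EM is MD}: for SymMoLinE the $\bm \beta$-block is exactly $\bm I_d$ while the $\bm w$-block is $\mathbb{E}_{\bm X}[\bm x \bm x^\top \sigma'(\bm x^\top \bm w)]$ with $\sigma'(t) = e^{t}/(1+e^{t})^2$, and for SymMoLogE both blocks carry this logistic form (in $\bm w$ and $\bm \beta$ respectively). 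Since the matrix is block-diagonal, its spectrum is the union of the block spectra, so everything localizes to analyzing one logistic-weighted second-moment matrix.

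Second, I would diagonalize each nontrivial block using the rotational-invariance change of basis of Lemma~\ref{lem: MIM Symmetric}: taking the orthonormal $\bm R_{\bm w}$ with $\bm R_{\bm w}\bm w = \Vert \bm w\Vert_2 \bm e_1$ turns $\mathbb{E}_{\bm X}[\bm x \bm x^\top \sigma'(\bm x^\top \bm w)]$ into a diagonal matrix with exactly two distinct entries: a ``parallel'' entry $\mathbb{E}_g[g^2 \sigma'(g\Vert \bm w\Vert_2)]$ of multiplicity one and a ``perpendicular'' entry $\mathbb{E}_g[\sigma'(g\Vert \bm w\Vert_2)]$ of multiplicity $d-1$, where $g \sim \mathcal{N}(0,1)$ and the perpendicular entries factor by independence. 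The rates then come from the substitution $u = g\Vert \bm w\Vert_2$: using $\int \sigma'(u)\,du = 1$ and the finiteness and positivity of $\int u^2 \sigma'(u)\,du$, dominated convergence gives the perpendicular entry $= \Theta(1/\Vert \bm w\Vert_2)$ and the parallel entry $= \Theta(1/\Vert \bm w\Vert_2^3)$. Inverting produces $\Theta(\Vert \bm w\Vert_2)$ and $\Theta(\Vert \bm w\Vert_2^3)$, with the SymMoLinE $\bm \beta$-block contributing the eigenvalue $1$ (and for SymMoLogE the $\bm \beta$-block contributing $\Theta(\Vert \bm \beta\Vert_2)$ and $\Theta(\Vert \bm \beta\Vert_2^3)$), matching the claimed eigenvalue sets. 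I would invoke Lemma~\ref{lem:MIM bounds} to supply the matching lower bounds required for the two-sided $\Theta$ statements rather than mere $\mathcal{O}$ bounds.

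For the explicit formulas \eqref{eqn: Ixyz SMOLINE} and \eqref{eqn: Ixyz SMOLOGE}, I would first compute the posterior $P(z|\bm x, y;\bm \theta)$ by Bayes' rule and simplify. In both models the competing Gaussian (resp. logistic) normalizers cancel, collapsing the two-class posterior into a single logistic $P(z=1|\bm x, y;\bm \theta) = \sigma(\xi)$, with $\xi = \langle \begin{bmatrix}\bm x \\ 2y\bm x\end{bmatrix}, \bm \theta\rangle$ for SymMoLinE and $\xi = \langle \begin{bmatrix}\bm x \\ y\bm x\end{bmatrix}, \bm \theta\rangle$ for SymMoLogE. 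Because $\xi$ is linear in $\bm \theta$ with $\nabla_{\bm \theta}\xi = \bm v$ equal to the stated sufficient-statistic vector, the Hessian of $\log P(z|\bm x, y;\bm \theta)$ equals the standard logistic form $-\sigma(\xi)(1-\sigma(\xi))\,\bm v\bm v^\top$, which is independent of $z$ (so the inner E-step over $z$ is trivial); taking $-\mathbb{E}_{\bm X,Y}$ and using $\sigma(\xi)(1-\sigma(\xi)) = e^{\xi}/(1+e^{\xi})^2$ reproduces the stated expressions. The monotonicity assertion then follows from the fact that the scalar weight $e^{\xi}/(1+e^{\xi})^2$ is maximized at $\xi = 0$ and decays in $|\xi|$, while $|\xi|$ grows stochastically with $\Vert \bm \theta\Vert_2$.

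The main obstacle I anticipate is the sharpness of the asymptotics: obtaining two-sided $\Theta$ bounds uniformly as $\Vert \bm w\Vert_2 \to \infty$ for $\mathbb{E}_g[g^2\sigma'(g\Vert \bm w\Vert_2)]$ and $\mathbb{E}_g[\sigma'(g\Vert \bm w\Vert_2)]$, which requires controlling the $o(1)$ error from $e^{-u^2/(2\Vert \bm w\Vert_2^2)} \to 1$ after the change of variables and verifying the leading constants are strictly positive—precisely the role of Lemma~\ref{lem:MIM bounds}. A secondary subtlety is formalizing ``decreasing in $\Vert \bm \theta\Vert_2$'' for the matrix-valued $\bm I_{z|\bm x, y,\bm \theta}$; I would phrase it through the monotone decay of the scalar weight inside the expectation (equivalently a decay of the directional eigenvalues) rather than asserting a global Loewner-order monotonicity, which need not hold verbatim.
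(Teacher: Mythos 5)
Your proposal is correct and follows essentially the same route as the paper's own proof: it reduces $\bm I_{\bm x,z,y|\bm\theta}$ to the block-diagonal Hessian $\nabla^2 A(\bm\theta)$, diagonalizes each logistic-weighted second-moment block by the rotational-invariance argument and invokes Lemma~\ref{lem:MIM bounds} for the two-sided $\Theta(1/\Vert\bm u\Vert_2^3)$ and $\Theta(1/\Vert\bm u\Vert_2)$ eigenvalue bounds before inverting, and it derives \eqref{eqn: Ixyz SMOLINE}--\eqref{eqn: Ixyz SMOLOGE} by the same Bayes-rule collapse of the posterior to a logistic in $\xi$ followed by the standard logistic Hessian computation. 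The only (minor) point of departure is that you treat the ``decreasing in $\Vert\bm\theta\Vert_2$'' claim more carefully than the paper does—phrasing it through decay of the scalar weight rather than Loewner monotonicity—which is a reasonable strengthening, not a deviation.
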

\begin{proof}
We divide the proof into two parts. In the first part, we consider the SymMoLinE setting and, in the second part, we consider the SymMoLogE setting.

\underline{\textbf{Part a):} SymMoLinE.}\\
For $\bm I_{\bm x, z, y|\bm \theta}$, recall that is has the following form:
\begin{equation*}
    \bm I_{\bm x, y, z|\bm \theta} = \begin{pmatrix}
        \mathbb{E}_{\bm X}\left[\bm x \bm x^\top \frac{e^{\bm x^\top\bm w}}{\left(1 +e^{\bm x^\top \bm w}\right)^2}\right] & \bm 0\\
        \bm 0 & \bm I_d
    \end{pmatrix}.
\end{equation*}
By Lemma \ref{lem:MIM bounds}, we see that $\bm I_{\bm x, z, y|\bm \theta}$ can be diagonalized into the following form:
\begin{equation*}
     \bm I_{\bm x, y, z|\bm \theta} = \bm R_{\bm M}\begin{pmatrix}
            \lambda_1 &\bm 0 & \bm 0& \bm 0\\
            \bm 0 & \dots & \bm 0& \bm 0\\
            \bm 0 &\bm 0 &  \lambda_2 & \bm 0\\
            \bm 0& \bm 0& \bm 0&  \bm I_{d}
        \end{pmatrix} \bm R_{\bm M}^\top
\end{equation*}
where $R_{\bm M}$ is an orthonormal rotation matrix and $\lambda_1 = \Theta\left(\frac{1}{\Vert \bm w\Vert_2^3}\right)$ and $\lambda_2 = \Theta\left(\frac{1}{\Vert \bm w\Vert_2}\right)$. Therefore, $\bm I_{\bm x, z, y|\bm \theta}$ has three eigenvalues given as $\left\{ \Theta\left(\frac{1}{\Vert \bm w\Vert_2^3}\right),  \Theta\left(\frac{1}{\Vert \bm w\Vert_2}\right), 1\right\}$. It follows that $\bm I_{\bm x, z, y|\bm \theta}^{-1}$ has the form
\begin{equation*}
     \bm I_{\bm x, y, z|\bm\theta}^{-1} = \bm R_{\bm M}^\top\begin{pmatrix}
            1/\lambda_1 &\bm 0 & \bm 0& \bm 0\\
            \bm 0 & \dots & \bm 0& \bm 0\\
            \bm 0 &\bm 0 & 1/ \lambda_2 & \bm 0\\
            \bm 0& \bm 0& \bm 0&  \bm I_{d}
        \end{pmatrix} \bm R_{\bm M}.
\end{equation*}
Therefore, $\bm I_{\bm x, y, z|\bm \theta}^{-1}$ has three eigenvalues given as $\left\{ \Theta\left(\Vert \bm w\Vert_2^3\right),  \Theta\left(\Vert \bm w\Vert_2\right), 1\right\}$.

Now, for $I_{z|\bm x, y, \bm \theta}$, we first derive a more compact form for the conditional distribution of the latent variable, $p(z|\bm x,y;\bm \theta)$. From simple Bayes rule and algebraic manipulation, we see that
\begin{align*}
    p(z|\bm x,y;\bm \theta) & = \frac{p(y| \bm x, z;\bm \theta) p(z | \bm x; \bm \theta) p(\bm x)}{p(\bm x,y;\bm \theta)}\\
    &= \frac{\frac{1}{\sqrt{2\pi}}\exp\left\{-\frac{(y-z\bm x^\top \bm {\beta} )^2}{2}\right\} \frac{\exp\{\frac{z+1}{2}\bm x^\top {\bm {w}}\}}{1+e^{\bm x^\top {\bm {w}}}}}{\frac{1}{\sqrt{2\pi}}\exp\left\{-\frac{(y-\bm x^\top \bm {\beta} )^2}{2}\right\} \frac{\exp\{\bm x^\top {\bm {w}}\}}{1+e^{\bm x^\top {\bm {w}}}} + \frac{1}{\sqrt{2\pi}}\exp\left\{-\frac{(y+\bm x^\top \bm {\beta} )^2}{2}\right\} \frac{1}{1+e^{\bm x^\top {\bm {w}}}}}\\
    &= \frac{\exp\left\{-\frac{(y-z\bm x^\top \bm {\beta} )^2 }{2}+ \frac{z+1}{2}\bm x^\top {\bm {w}}\right\}}{\exp\left\{-\frac{(y-\bm x^\top \bm {\beta} )^2}{2} + \bm x^\top {\bm {w}}\right\} + \exp\left\{-\frac{(y+\bm x^\top \bm {\beta} )^2}{2}\right\}}\\
    &= \frac{\exp\left\{z y \bm x^\top \bm \beta+ \frac{z+1}{2}\bm x^\top {\bm {w}}\right\}}{\exp\left\{y \bm x ^\top \bm \beta + \bm x^\top {\bm {w}}\right\} + \exp\left\{-y\bm x^\top \bm \beta\right\}}\\
    &= \frac{\exp\left\{ \frac{z+1}{2} (2 y \bm x^\top \bm \beta + \bm x^\top \bm w) \right\}}{\exp\left\{2 y \bm x ^\top \bm\beta + \bm x^\top {\bm {w}}\right\} + 1}\\
    &= \frac{\exp\left\{\frac{z+1}{2}\left\langle \begin{bmatrix}
        \bm x \\
        2 y\bm x
    \end{bmatrix}, \begin{bmatrix}
        \bm w\\
        \bm \beta
    \end{bmatrix} \right\rangle \right\}}{\exp\left\{\left\langle \begin{bmatrix}
        \bm x \\
        2 y\bm x
    \end{bmatrix}, \begin{bmatrix}
        \bm w\\
        \bm \beta
    \end{bmatrix} \right\rangle\right\} + 1}
\end{align*}

Now that we have this simplified form, we are able to derive \eqref{eqn: Ixyz SMOLINE} for $I_{z|\bm x, y, \bm \theta}$:
\begin{align*}
    I_{z|\bm x, y, \bm \theta} &=  -\mathbb{E}_{\bm X,Y}\mathbb{E}_{ Z|\bm x,y,\bm \theta} \left[\frac{\partial^2}{\partial \bm \theta^2}\log p(z|\bm x,y;\bm \theta)\right]\\
    &=-\mathbb{E}_{\bm X,Y}\mathbb{E}_{ Z|\bm x,y,\bm \theta} \left[\frac{\partial^2}{\partial \bm \theta^2}\left( \frac{z + 1}{2}\left\langle \begin{pmatrix} \bm x \\ 2 y \bm x\end{pmatrix}, \bm \theta \right\rangle - \log\left(1+\exp{\left\langle \begin{bmatrix}
        \bm x \\ 2y\bm x
    \end{bmatrix}, \bm \theta \right\rangle}\right)\right)\right]\\
    &=\mathbb{E}_{\bm X,Y}\mathbb{E}_{ Z|\bm x,y,\bm \theta} \left[\frac{\partial^2}{\partial \bm \theta^2}\log\left(1+\exp{\left\langle \begin{bmatrix}
        \bm x \\ 2y\bm x
    \end{bmatrix}, \bm \theta \right\rangle}\right)\right]\\
    &=\mathbb{E}_{\bm X,Y} \left[\frac{\partial^2}{\partial \bm \theta^2}\log\left(1+\exp{\left\langle \begin{bmatrix}
        \bm x \\ 2y\bm x
    \end{bmatrix}, \bm \theta \right\rangle}\right)\right]\\
    &=  \mathbb{E}_{\bm X,Y} \left[
    \frac{\exp{\left\langle \begin{bmatrix}
        \bm x \\ 2y\bm x
    \end{bmatrix}, \bm \theta \right\rangle} \begin{bmatrix}
         \bm x \\ 2y\bm x
    \end{bmatrix} \begin{bmatrix}
        \bm x \\ 2y\bm x
    \end{bmatrix}^\top}{\left(1+\exp{\left\langle \begin{bmatrix}
        \bm x \\ 2y\bm x
    \end{bmatrix}, \bm \theta \right\rangle}\right)^2}\right].
\end{align*}
This expression depends only on the random variable $(\bm X, Y)$ and the parameter iterate $\bm \theta$.

\underline{\textbf{Part b):} SymMoLogE.}\\
Recall that for SymMoLogE, $\bm I_{\bm x, z, y|\bm \theta}$ has the following form:
\begin{equation*}
    \bm I_{\bm x, z, y|\bm \theta} = \begin{pmatrix}
        \mathbb{E}_{\bm X}\left[\bm x \bm x^\top \frac{e^{\bm x^\top\bm w}}{\left(1 +e^{\bm x^\top \bm w}\right)^2}\right] & \bm 0\\
        \bm 0 & \mathbb{E}_{\bm X}\left[\bm x \bm x^\top \frac{e^{\bm x^\top\bm \beta}}{\left(1 +e^{\bm x^\top \bm \beta}\right)^2}\right]
    \end{pmatrix}.
\end{equation*}
By Lemma \ref{lem:MIM bounds}, we see that $\bm I_{\bm x, z, y|\bm \theta}$ can be diagonalized into the following form:
\begin{equation*}
     \bm I_{\bm x, z, y|\bm \theta} = \bm R_{\bm M}\begin{pmatrix}
            \lambda_1 &\bm 0 & \bm 0& \bm 0& \bm 0 \\
            \bm 0 & \lambda_2 & \bm 0& \bm 0 & \bm 0 \\
            \bm 0 &\bm 0 &  \dots & \bm 0 & \bm 0 \\
            \bm 0& \bm 0& \bm 0&  \lambda_3& \bm 0 \\
            \bm 0 &\bm 0 &\bm 0 &\bm 0 & \lambda_4
        \end{pmatrix} \bm R_{\bm M}^\top
\end{equation*}
where $R_{\bm M}$ is an orthonormal rotation matrix and $\lambda_1 = \Theta\left(\frac{1}{\Vert \bm w\Vert_2^3}\right)$, $\lambda_2 = \Theta\left(\frac{1}{\Vert \bm w\Vert_2}\right)$, $\lambda_3 = \Theta\left(\frac{1}{\Vert \bm \beta\Vert_2^3}\right)$, and $\lambda_4 = \Theta\left(\frac{1}{\Vert \bm \beta\Vert_2^3}\right)$. Therefore, $\bm I_{\bm x, z, y|\bm \theta}$ has four eigenvalues given as \\$\left\{ \Theta\left(\frac{1}{\Vert \bm w\Vert_2^3}\right),  \Theta\left(\frac{1}{\Vert \bm w\Vert_2}\right), \Theta\left(\frac{1}{\Vert \bm \beta\Vert_2^3}\right),  \Theta\left(\frac{1}{\Vert \bm \beta\Vert_2}\right)\right\}$. It follows that $\bm I_{\bm x, z, y|\bm \theta}^{-1}$ has the form
\begin{equation*}
     \bm I_{\bm x, z, y|\bm \theta}^{-1} = \bm R_{\bm M}\begin{pmatrix}
            1/\lambda_1 &\bm 0 & \bm 0& \bm 0& \bm 0 \\
            \bm 0 & 1/\lambda_2 & \bm 0& \bm 0 & \bm 0 \\
            \bm 0 &\bm 0 &  \dots & \bm 0 & \bm 0 \\
            \bm 0& \bm 0& \bm 0&  1/\lambda_3& \bm 0 \\
            \bm 0 &\bm 0 &\bm 0 &\bm 0 & 1/\lambda_4
        \end{pmatrix} \bm R_{\bm M}^\top.
\end{equation*}
Therefore, $\bm I_{\bm x, z, y|\bm \theta}^{-1}$ has four eigenvalues given as $\left\{ \Theta\left(\Vert \bm w\Vert_2^3\right),  \Theta\left(\Vert \bm w\Vert_2\right), \Theta\left(\Vert \bm \beta\Vert_2^3\right),  \Theta\left(\Vert \bm \beta\Vert_2\right)\right\}$.

Now, for $I_{z|\bm x, y, \bm \theta}$, we first derive a more compact form for the conditional distribution of the latent variable, $P(z|\bm x,y;\bm \theta)$. From simple Bayes rule and algebraic manipulation, we see that
\begin{align*}
    P(z|\bm x,y;\bm \theta) & = \frac{P(y| \bm x, z;\bm \theta) P(z | \bm x; \bm \theta) p(\bm x)}{p(\bm x,y;\bm \theta)}\\
    &= \frac{\frac{\exp\{\frac{yz+1}{2}\bm x^\top {\bm {\beta}}\}}{1+e^{\bm x^\top {\bm {\beta}}}} \frac{\exp\{\frac{z+1}{2}\bm x^\top {\bm {w}}\}}{1+e^{\bm x^\top {\bm {w}}}}}{\frac{\exp\{\frac{y+1}{2}\bm x^\top {\bm {\beta}}\}}{1+e^{\bm x^\top {\bm {\beta}}}} \frac{\exp\{\bm x^\top {\bm {w}}\}}{1+e^{\bm x^\top {\bm {w}}}} + \frac{\exp\{\frac{-y+1}{2}\bm x^\top {\bm {\beta}}\}}{1+e^{\bm x^\top {\bm {\beta}}}} \frac{1}{1+e^{\bm x^\top {\bm {w}}}}}\\
    &= \frac{\exp\left\{\frac{yz+1}{2}\bm x^\top {\bm {\beta}}+ \frac{z+1}{2}\bm x^\top {\bm {w}}\right\}}{\exp\left\{\frac{y+1}{2}\bm x^\top {\bm {\beta}} + \bm x^\top {\bm {w}}\right\} + \exp\left\{\frac{-y+1}{2}\bm x^\top {\bm {\beta}}\right\}}\\
    &= \frac{\exp\left\{ \frac{z+1}{2} \left(y\bm x^\top {\bm {\beta}}+ \bm x^\top {\bm {w}}\right)\right\}}{\exp\left\{y \bm x ^\top \beta + \bm x^\top {\bm {w}}\right\} + 1}\\
    &= \frac{\exp\left\{\frac{z+1}{2}\left\langle \begin{bmatrix}
        \bm x \\
        y\bm x
    \end{bmatrix}, \begin{bmatrix}
        \bm w\\
        \bm \beta
    \end{bmatrix} \right\rangle \right\}}{\exp\left\{\left\langle \begin{bmatrix}
        \bm x \\
        y\bm x
    \end{bmatrix}, \begin{bmatrix}
        \bm w\\
        \bm \beta
    \end{bmatrix} \right\rangle\right\} + 1}
\end{align*}
Now that we have this simplified form, we are able to derive \eqref{eqn: Ixyz SMOLOGE} for $I_{z|\bm x, y, \bm \theta}$:
\begin{align*}
    I_{z|\bm x, y, \bm \theta} &=  -\mathbb{E}_{\bm X,Y}\mathbb{E}_{ Z|\bm x,y,\bm \theta} \left[\frac{\partial^2}{\partial \bm \theta^2}\log P(z|\bm x,y;\bm \theta)\right]\\
    &=-\mathbb{E}_{\bm X,Y}\mathbb{E}_{ Z|\bm x,y,\bm \theta} \left[\frac{\partial^2}{\partial \bm \theta^2}\left( \frac{z + 1}{2}\left\langle \begin{pmatrix} \bm x \\ y \bm x\end{pmatrix}, \bm \theta \right\rangle - \log\left(1+\exp{\left\langle \begin{bmatrix}
        \bm x \\ y\bm x
    \end{bmatrix}, \bm \theta \right\rangle}\right)\right)\right]\\
    &=\mathbb{E}_{\bm X,Y}\mathbb{E}_{ Z|\bm x,y,\bm \theta} \left[\frac{\partial^2}{\partial \bm \theta^2}\log\left(1+\exp{\left\langle \begin{bmatrix}
        \bm x \\ y\bm x
    \end{bmatrix}, \bm \theta \right\rangle}\right)\right]\\
    &=\mathbb{E}_{\bm X,Y} \left[\frac{\partial^2}{\partial \bm \theta^2}\log\left(1+\exp{\left\langle \begin{bmatrix}
        \bm x \\ y\bm x
    \end{bmatrix}, \bm \theta \right\rangle}\right)\right]\\
    &=  \mathbb{E}_{\bm X,Y} \left[
    \frac{\exp{\left\langle \begin{bmatrix}
        \bm x \\ y\bm x
    \end{bmatrix}, \bm \theta \right\rangle} \begin{bmatrix}
         \bm x \\y\bm x
    \end{bmatrix} \begin{bmatrix}
        \bm x \\ y\bm x
    \end{bmatrix}^\top}{\left(1+\exp{\left\langle \begin{bmatrix}
        \bm x \\ y\bm x
    \end{bmatrix}, \bm \theta \right\rangle}\right)^2}\right]
\end{align*}
This expression depends only on the random variable $(\bm X, Y)$ and the parameter iterate $\bm \theta$.
\end{proof}

\begin{lemma}\label{lem:MIM bounds}
    For $\bm x \sim \mathcal{N}(\bm 0, \bm I_d)$ and $\bm u \in \mathbb{R}^d$ and $\Vert \bm u \Vert_2 \geq \sqrt{2}$, the symmetric positive definite matrix
    \begin{equation}\label{eqn: nabla A}
        \mathbb{E}_{\bm X}\left[\bm x \bm x^\top \frac{e^{\bm x^\top\bm u}}{\left(1 +e^{\bm x^\top \bm u}\right)^2}\right]
    \end{equation} 
    is diagonalizable by an orthonormal matrix $\bm R_{\bm u} \in \mathbb{R}^{d \times d}$ and has two eigenvalues, $\lambda_1, \lambda_2 \geq 0$, that satisfy
    \begin{align}
        \lambda_1 &= \Theta\left(\frac{1}{\Vert \bm u\Vert_2^3}\right)\\
        \lambda_2 &= \Theta\left(\frac{1}{\Vert \bm u\Vert_2}\right).
    \end{align}
\end{lemma}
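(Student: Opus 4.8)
The plan is to reduce the $d$-dimensional matrix in \eqref{eqn: nabla A} to two scalar integrals by exploiting the rotational invariance of the standard Gaussian, exactly as in the proof of Lemma~\ref{lem: MIM Symmetric}, and then to estimate those scalars through the change of variables $t = \Vert\bm u\Vert_2\,s$. First, writing $g(t):=e^{t}/(1+e^{t})^2$ for the derivative of the sigmoid, positive definiteness is immediate: since $g>0$ everywhere, the matrix is a strictly positive weighting of the rank-one PSD matrices $\bm x\bm x^\top$, so $\bm v^\top \mathbb{E}_{\bm X}[g(\bm x^\top\bm u)\bm x\bm x^\top]\bm v = \mathbb{E}_{\bm X}[g(\bm x^\top\bm u)(\bm v^\top\bm x)^2]>0$ for every $\bm v\neq \bm 0$.

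Next I would diagonalize. Choosing an orthonormal $\bm R_{\bm u}$ with $\bm R_{\bm u}\bm u = \Vert\bm u\Vert_2\,\bm e_1$ and substituting $\tilde{\bm x}=\bm R_{\bm u}\bm x\sim\mathcal{N}(\bm 0,\bm I_d)$ (using $\bm x^\top\bm u = \Vert\bm u\Vert_2\,\tilde x_1$), the conjugated matrix becomes $\bm R_{\bm u}\,\mathbb{E}_{\bm X}[g(\bm x^\top\bm u)\bm x\bm x^\top]\,\bm R_{\bm u}^\top = \mathbb{E}_{\tilde{\bm X}}[g(\Vert\bm u\Vert_2\,\tilde x_1)\,\tilde{\bm x}\tilde{\bm x}^\top]$. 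Because the weight depends only on $\tilde x_1$ while the remaining coordinates are independent and mean-zero, every off-diagonal entry and every cross term involving $\tilde x_1$ vanishes, leaving a diagonal matrix with exactly two distinct values: the $(1,1)$ entry $\lambda_1 = \mathbb{E}_{s\sim\mathcal{N}(0,1)}[s^2 g(\Vert\bm u\Vert_2\,s)]$, and the common value $\lambda_2 = \mathbb{E}_{s\sim\mathcal{N}(0,1)}[g(\Vert\bm u\Vert_2\,s)]$ on the remaining $d-1$ diagonal slots. This establishes both the orthonormal diagonalizability and the two-eigenvalue structure.

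It remains to pin down the rates. Setting $c=\Vert\bm u\Vert_2$ and substituting $t=cs$ gives $\lambda_2 = \tfrac{1}{c\sqrt{2\pi}}\int_{\mathbb{R}} g(t)\,e^{-t^2/(2c^2)}\,dt$ and $\lambda_1 = \tfrac{1}{c^3\sqrt{2\pi}}\int_{\mathbb{R}} t^2 g(t)\,e^{-t^2/(2c^2)}\,dt$. The upper bounds follow at once by dropping the Gaussian factor $e^{-t^2/(2c^2)}\le 1$ and using the two elementary logistic-distribution identities $\int_{\mathbb{R}} g(t)\,dt = 1$ and $\int_{\mathbb{R}} t^2 g(t)\,dt = \pi^2/3$, yielding $\lambda_2\le \tfrac{1}{c\sqrt{2\pi}}$ and $\lambda_1\le \tfrac{\pi^2/3}{c^3\sqrt{2\pi}}$. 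For the matching lower bounds I would restrict each integral to $\lvert t\rvert\le c$, where $e^{-t^2/(2c^2)}\ge e^{-1/2}$; since the hypothesis $c\ge\sqrt2$ guarantees $[-\sqrt2,\sqrt2]\subseteq[-c,c]$, this produces $\lambda_2\ge \tfrac{e^{-1/2}}{c\sqrt{2\pi}}\int_{-\sqrt2}^{\sqrt2} g(t)\,dt$ and $\lambda_1\ge \tfrac{e^{-1/2}}{c^3\sqrt{2\pi}}\int_{-\sqrt2}^{\sqrt2} t^2 g(t)\,dt$, each a positive absolute constant times the claimed power of $c$. Hence $\lambda_2=\Theta(1/c)$ and $\lambda_1=\Theta(1/c^3)$.

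The only genuinely delicate point is the two-sided control of the scalar integrals: the upper bounds are free, but the lower bounds require truncating away the Gaussian tail uniformly in $c$, which is precisely where the assumption $\Vert\bm u\Vert_2\ge\sqrt2$ earns its keep by keeping a fixed window $[-\sqrt2,\sqrt2]$ inside the region where $e^{-t^2/(2c^2)}$ is bounded below. Everything else — diagonalizability, the eigenvalue count, and positive definiteness — is structural and follows directly from rotational invariance and the positivity of $g$.
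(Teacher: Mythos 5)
Your proposal is correct, and its structural half (positive definiteness, rotation by $\bm R_{\bm u}$, and the reduction to the two scalar eigenvalues $\lambda_1=\mathbb{E}[s^2 g(\Vert\bm u\Vert_2 s)]$ with multiplicity one and $\lambda_2=\mathbb{E}[g(\Vert\bm u\Vert_2 s)]$ with multiplicity $d-1$) coincides with the paper's proof. Where you genuinely diverge is in how the two scalar integrals are estimated. The paper works directly in the original variable: for the upper bounds it drops the Gaussian density via $p(\tilde{x}_1)\le 1$ and uses the pointwise bound $e^{t}/(1+e^{t})^2\le e^{-|t|}$ to reduce to explicit exponential integrals; for the lower bounds it uses $e^{t}/(1+e^{t})^2\ge \tfrac14 e^{-t}$ on $t\ge 0$ together with the comparison $e^{-(\Vert\bm u\Vert_2 t+t^2/2)}\ge e^{-\Vert\bm u\Vert_2^2t^2/2}$ on $[4/\Vert\bm u\Vert_2,\infty)$ (this is where the paper spends the hypothesis $\Vert\bm u\Vert_2\ge\sqrt2$), and then evaluates Gaussian/erf-type integrals. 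You instead rescale $t=\Vert\bm u\Vert_2 s$ so that the exact powers $1/\Vert\bm u\Vert_2$ and $1/\Vert\bm u\Vert_2^3$ factor out of the integral, bound above by discarding $e^{-t^2/(2c^2)}\le 1$ and invoking the logistic-density identities $\int_{\mathbb{R}}g(t)\,dt=1$ and $\int_{\mathbb{R}}t^2g(t)\,dt=\pi^2/3$, and bound below by truncating to the fixed window $[-\sqrt2,\sqrt2]$ where the Gaussian factor is at least $e^{-1/2}$. Your route is shorter and more transparent: the scaling substitution explains at a glance why the exponents are $1$ and $3$, the constants are explicit and absolute, and the role of $\Vert\bm u\Vert_2\ge\sqrt2$ is reduced to keeping a fixed truncation window valid. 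The paper's route is more elementary in that it needs no knowledge of logistic moments, at the cost of heavier explicit integration. Both arguments are sound and yield the same $\Theta(1/\Vert\bm u\Vert_2^3)$ and $\Theta(1/\Vert\bm u\Vert_2)$ conclusions.
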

\begin{proof}
Recall that Gaussian random variables are rotationally invariant. Specifically, for orthonormal matrix $\bm R \in \mathbb{R}^{d \times d}$ and $\bm X \sim \mathcal{N}(\bm 0, \bm I_d)$, it follows that $\bm R \bm X \sim \mathcal{N}(\bm 0, \bm I_d)$. Moreover, $\bm R^\top \bm R = \bm R \bm R^\top = \bm I_d$. Using this notion, we will 1) diagonalize \eqref{eqn: nabla A}, then 2) evaluate the eigenvalues of \eqref{eqn: nabla A} as the diagonal elements.

Consider the orthonormal rotation matrix $\bm R_{\bm u} \in \mathbb{R}^{d \times d}$ that is such that $\bm R_{\bm u} \bm u = \bm e_1 \Vert \bm u\Vert_2$ where $\bm e_j$ is the $j^{th}$ canonical vector of $\mathbb{R}^d$. Using this change of basis matrix, we can now obtain the diagonal matrix, 

 \begin{align*}
        \bm R_{\bm u}\left(\mathbb{E}_{\bm X}\left[\bm x \bm  x^\top  \frac{e^{\bm x^\top \bm u}}{\left(1 +e^{\bm x^\top \bm u}\right)^2}\right]\right) \bm R_{\bm u}^\top
        &= \mathbb{E}_{\bm X}\left[\bm R_{\bm u}\bm x \bm  x^\top \bm R_{\bm u}^\top \frac{e^{\bm x^\top R_{\bm u}^\top R_{\bm u}\bm u}}{\left(1 +e^{\bm x^\top R_{\bm u}^\top R_{\bm u} \bm u}\right)^2}\right]\\
        &=  \mathbb{E}_{\bm R_{\bm u} \bm x}\left[\bm \tilde{x} \bm  \tilde{x}^\top \frac{e^{\bm \tilde{x}^\top \bm e_1 \Vert\bm u\Vert_2}}{\left(1 +e^{\bm \tilde{x}^\top \bm e_1 \Vert\bm u\Vert_2}\right)^2}\right]\\
        &= \begin{pmatrix}
            \mathbb{E}_{\bm R_{\bm u} \bm x} \left[ \frac{\tilde{x}_1^2e^{\bm \tilde{x}_1 \Vert\bm u\Vert_2}}{\left(1 +e^{\bm \tilde{x}_1 \Vert\bm u\Vert_2}\right)^2}\right] &\bm 0 & \bm 0\\
            \bm 0 & \dots & \bm 0\\
            \bm 0 &\bm 0 &  \mathbb{E}_{\bm R_{\bm u} \bm x} \left[ \frac{e^{\bm \tilde{x}_1 \Vert\bm u\Vert_2}}{\left(1 +e^{\bm \tilde{x}_1 \Vert\bm u\Vert_2}\right)^2}\right]
        \end{pmatrix}.
    \end{align*}
It has only two eigenvalues given in closed form as
\begin{align*}
    \lambda_1 &= \mathbb{E}_{\tilde{X}_1} \left[ \frac{\tilde{x}_1^2e^{\bm \tilde{x}_1 \Vert\bm u\Vert_2}}{\left(1 +e^{\bm \tilde{x}_1 \Vert\bm u\Vert_2}\right)^2}\right] = \int_{-\infty}^{\infty} \frac{\tilde{x}_1^2e^{\bm \tilde{x}_1 \Vert\bm u\Vert_2}}{\left(1 +e^{\bm \tilde{x}_1 \Vert\bm u\Vert_2}\right)^2} p(\tilde{\bm x}_1) d\tilde{\bm x}_1\\
    \lambda_2 &= \mathbb{E}_{\tilde{X}_1} \left[ \frac{e^{\bm \tilde{x}_1 \Vert\bm u\Vert_2}}{\left(1 +e^{\bm \tilde{x}_1 \Vert\bm u\Vert_2}\right)^2}\right] = \int_{-\infty}^{\infty} \frac{e^{\bm \tilde{x}_1 \Vert\bm u\Vert_2}}{\left(1 +e^{\bm \tilde{x}_1 \Vert\bm u\Vert_2}\right)^2} p(\tilde{\bm x}_1) d\tilde{\bm x}_1.
\end{align*}
The rest of the proof is spent evaluating tight lower and upper bounds on $\lambda_1,\lambda_2$ in terms of $\Vert \bm u\Vert_2$.

\underline{\textbf{Part a):} Bounds for $\lambda_1$.}\\
For $\bm \tilde{x}_1 \sim \mathcal{N}(0,1)$ the probability density function is bounded above by $1$: $p(\bm \tilde{x}_1) \leq 1$. Then we can upper bound $\lambda_1$ as follows:
\begin{align*}
    \lambda_1 &= \int_{-\infty}^{\infty} \frac{\tilde{x}_1^2e^{\bm \tilde{x}_1 \Vert\bm u\Vert_2}}{\left(1 +e^{\bm \tilde{x}_1 \Vert\bm u\Vert_2}\right)^2} p(\tilde{\bm x}_1) d\tilde{\bm x}_1\\
    &\leq \int_{-\infty}^{\infty} \frac{\tilde{x}_1^2e^{\bm \tilde{x}_1 \Vert\bm u\Vert_2}}{\left(1 +e^{\bm \tilde{x}_1 \Vert\bm u\Vert_2}\right)^2} d\tilde{\bm x}_1\\
    &= 2 \int_0^{\infty} \tilde{x}_1^2e^{-\bm \tilde{x}_1 \Vert\bm u\Vert_2} d\tilde{\bm x}_1\\
    &= \frac{4}{\Vert \bm u\Vert_2^3}\\
    &= \mathcal{O}\left( \frac{1}{\Vert \bm u\Vert_2^3}\right)
\end{align*}

For the lower bounds, we will use the fact that $e^{-(\Vert \bm u\Vert_2 \tilde{\bm x}+\tilde{\bm x}_1^2/2)} \geq e^{-\Vert \bm u\Vert_2^2 \tilde{\bm x}^2/2}$ for $x \in \left[\frac{4}{\Vert \bm u\Vert_2},\infty\right]$ and $\Vert \bm u\Vert_2 \geq \sqrt{2}$. Then, we can lower bound $\lambda_1$ as follows:
\begin{align*}
    \lambda_1 &= \int_{-\infty}^{\infty} \frac{\tilde{x}_1^2e^{\bm \tilde{x}_1 \Vert\bm u\Vert_2}}{\left(1 +e^{\bm \tilde{x}_1 \Vert\bm u\Vert_2}\right)^2} p(\tilde{\bm x}_1) d\tilde{\bm x}_1\\
    &\geq 2 \int_{0}^{\infty} \frac{\tilde{x}_1^2e^{\bm \tilde{x}_1 \Vert\bm u\Vert_2}}{\left(2 e^{\bm \tilde{x}_1 \Vert\bm u\Vert_2}\right)^2} \left(\frac{e^{-\frac{\tilde{\bm x}_1^2}{2}}}{\sqrt{2\pi}}\right) d\tilde{\bm x}_1\\
    &= \frac{1}{2\sqrt{2\pi}} \int_0^{\infty} \tilde{x}_1^2e^{-\bm \tilde{x}_1 \Vert\bm u\Vert_2-\frac{\tilde{\bm x}_1^2}{2}}  d\tilde{\bm x}_1\\
    &\geq  \frac{1}{2\sqrt{2\pi}} \int_{\frac{4}{\Vert \bm u\Vert_2}}^{\infty} \tilde{x}_1^2 e^{-\Vert \bm u\Vert_2^2 \tilde{\bm x}^2/2}  d\tilde{\bm x}_1\\
    &\geq \frac{1}{4\sqrt{2\pi}\Vert \bm u\Vert_2^3} \left(\sqrt{\pi}\text{erf}(\tilde{\bm x}_1 \Vert \bm u\Vert_2) - 2 \Vert \bm u\Vert_2 \tilde{\bm x}_1 e^{-\Vert \bm u\Vert_2^2 \tilde{\bm x}_1^2}\right)_{ \frac{4}{\Vert \bm u\Vert_2}}^{\infty}\\
    &\geq \Omega\left(\frac{1}{\Vert \bm u\Vert_2^3}\right).
\end{align*}
Therefore, it holds that $\lambda_1 = \Theta\left(\frac{1}{\Vert \bm u\Vert_2^3}\right)$.

\underline{\textbf{Part b)}: Bounds for $\lambda_2$.}\\
For $\bm \tilde{x}_1 \sim \mathcal{N}(0,1)$ the probability density function is bounded above by $1$: $p(\bm \tilde{x}_1) \leq 1$. Then we can upper bound $\lambda_2$ as follows:
\begin{align*}
    \lambda_2 &= \int_{-\infty}^{\infty} \frac{e^{\bm \tilde{x}_1 \Vert\bm u\Vert_2}}{\left(1 +e^{\bm \tilde{x}_1 \Vert\bm u\Vert_2}\right)^2} p(\tilde{\bm x}_1) d\tilde{\bm x}_1\\
    &\leq \int_{-\infty}^{\infty} \frac{e^{\bm \tilde{x}_1 \Vert\bm u\Vert_2}}{\left(1 +e^{\bm \tilde{x}_1 \Vert\bm u\Vert_2}\right)^2} d\tilde{\bm x}_1\\
    &= 2 \int_0^{\infty} e^{-\bm \tilde{x}_1 \Vert\bm u\Vert_2} d\tilde{\bm x}_1\\
    &= \frac{1}{\Vert \bm u\Vert_2}\\
    &= \mathcal{O}\left( \frac{1}{\Vert \bm u\Vert_2}\right)
\end{align*}

For the lower bounds, we will use the fact that $e^{-(\Vert \bm u\Vert_2 \tilde{\bm x}_1+\tilde{\bm x}_1^2/2)} \geq e^{-\Vert \bm u\Vert_2^2 \tilde{\bm x}_1^2/2}$ for $x \in \left[\frac{4}{\Vert \bm u\Vert_2},\infty\right]$ and $\Vert \bm u\Vert_2 \geq \sqrt{2}$. Then, we can lower bound $\lambda_2$ as follows:
\begin{align*}
    \lambda_2 &= \int_{-\infty}^{\infty} \frac{e^{\bm \tilde{x}_1 \Vert\bm u\Vert_2}}{\left(1 +e^{\bm \tilde{x}_1 \Vert\bm u\Vert_2}\right)^2} p(\tilde{\bm x}_1) d\tilde{\bm x}_1\\
    &\geq 2 \int_{0}^{\infty} \frac{e^{\bm \tilde{x}_1 \Vert\bm u\Vert_2}}{\left(2 e^{\bm \tilde{x}_1 \Vert\bm u\Vert_2}\right)^2} \left(\frac{e^{-\frac{\tilde{\bm x}_1^2}{2}}}{\sqrt{2\pi}}\right) d\tilde{\bm x}_1\\
    &= \frac{1}{2\sqrt{2\pi}} \int_0^{\infty} e^{-\bm \tilde{x}_1 \Vert\bm u\Vert_2-\frac{\tilde{\bm x}_1^2}{2}}  d\tilde{\bm x}_1\\
    &\geq  \frac{1}{2\sqrt{2\pi}} \int_{\frac{4}{\Vert \bm u\Vert_2}}^{\infty} e^{-\Vert \bm u\Vert_2^2 \tilde{\bm x}^2/2}  d\tilde{\bm x}_1\\
    &\geq \frac{1}{4\sqrt{2\pi}\Vert \bm u\Vert_2} \left(\sqrt{\pi}\text{erf}(\tilde{\bm x}_1 \Vert \bm u\Vert_2) \right)_{ \frac{4}{\Vert \bm u\Vert_2}}^{\infty}\\
    &\geq \Omega\left(\frac{1}{\Vert \bm u\Vert_2}\right).
\end{align*}
Therefore, it holds that $\lambda_2 = \Theta\left(\frac{1}{\Vert \bm u\Vert_2}\right)$.
\end{proof}

\subsection{Existence of Locally Convex Region}
\label{app: discussion Lin Log}
In this section, we further discuss the consequences of Corollary \ref{cor:Convergence Prop}, Theorem \ref{thm:MIM}, and Theorem \ref{thm: Characterization of MIM}.

To begin, we will discuss the sufficient condition for Gradient Descent to converge linearly to the true parameters $\bm \theta^*$ and compare it to that of EM. For Gradient Descent, it is well understood that if $\bm \theta^1$ is initialized in a convex set $\Theta$ that contains $\bm \theta^*$ and where $\mathcal{L}(\bm \theta)$ is strongly convex, i.e.
\begin{equation}\label{eqn:Grad suff cond}
    \nabla^2 \mathcal{L}(\bm \theta) \succeq \alpha \bm I_{2d} \qquad\text{for all } \bm \theta \in \Theta,
\end{equation}
then the parameter iterates converge linearly to $\bm \theta^*$. However, as we have shown for SymMoLinE and SymMoLogE, the sufficient condition for EM to converge linearly to $\bm \theta^*$ is slightly different. Instead, we require $\Theta$ to satisfy that $\mathcal{L}(\bm \theta)$ is strongly convex relative to $\bm A(\bm \theta)$, i.e.,
\begin{equation}\label{eqn:EM suff cond}
    \nabla^2 \mathcal{L}(\bm \theta) \succeq \alpha \nabla^2 A(\bm \theta) \qquad\text{for all } \bm \theta \in \Theta.
\end{equation}

Interestingly, if it holds that $A(\bm \theta)$ is $1$-smooth, we see that \eqref{eqn:EM suff cond} is weaker than \eqref{eqn:Grad suff cond}, i.e.,
\begin{equation}
    \nabla^2 A(\bm\theta) \preceq I_{2d} \text{ and \eqref{eqn:Grad suff cond} holds} \implies \eqref{eqn:EM suff cond}.
\end{equation}
But more interestingly, as long as $A(\bm \theta)$ is $\mu$-smooth for some $\mu >0$, it will hold that any set $\Theta$ satisfying \eqref{eqn:Grad suff cond} for some $\alpha > 0$ will also satisfy \eqref{eqn:EM suff cond} with $\tilde{\alpha} = \frac{\alpha}{\mu} > 0$. We note that the converse holds for $A(\bm \theta)$ strongly convex. In summary, it then holds that, for SymMoLinE and SymMoLogE, EM's sufficient conditions for a linear rate is strictly weaker than that of Gradient Descent when the mirror map $A(\bm \theta)$ is convex, but not strongly convex.

Next, we will further discuss the implications of Theorem \ref{thm: Characterization of MIM}. In the theorem, we obtain clear lower and upper bounds for the eigenvalues of $\bm I_{\bm x, y, z|\bm \theta}$. However, it is not clear how to do the same for $\bm I_{z|\bm x, y,\bm \theta}$ as the trick to rotate the axis with an orthonormal matrix $R$ to simplify the expression will not work here because the distribution of the vector $(\bm x, y\bm x)^\top$ is not invariant to rotation. Still, there are some things that we can say for special cases. For this discussion, we will constrain ourselves to SymMoLogE. However, the same lines of reasoning also apply to SymMoLinE. First we recall from Theorem \ref{thm: Characterization of MIM} that 
\begin{equation*}
    \bm I_{z |\bm x, y, \bm \theta} = \mathbb{E}_{\bm X,Y} \left[
    \frac{\exp{\left\langle \begin{bmatrix}
        \bm x \\ y\bm x
    \end{bmatrix}, \bm \theta \right\rangle} \begin{bmatrix}
         \bm x \\y\bm x
    \end{bmatrix} \begin{bmatrix}
        \bm x \\ y\bm x
    \end{bmatrix}^\top}{\left(1+\exp{\left\langle \begin{bmatrix}
        \bm x \\ y\bm x
    \end{bmatrix}, \bm \theta \right\rangle}\right)^2}\right].
\end{equation*}
From here, one easy way to approach bounding the above is to 1) recall that any outer product of the form $\bm u \bm u^\top$ has a single eigenvalue given as $\Vert \bm u\Vert_2^2$, and 2) the inner product between two vectors is equal to the product of their norms and the cosine of the angle between them, i.e. $\bm s^\top \bm u = \Vert \bm s\Vert_2 \Vert \bm u\Vert_2 \cos(\phi_{\bm s,\bm u})$. Thus, denoting $\phi$ to be the angle between $\bm \theta$ and the vector $(\bm x, y \bm x)^\top$, we obtain
\begin{align*}
    \bm I_{z |\bm x, y, \bm \theta} &\preceq \mathbb{E}_{\bm X,Y, \Phi}\left[ 
    \frac{e^{\sqrt{(1+y^2)\Vert \bm x\Vert_2^2} \Vert \bm \theta \Vert_2 \cos(\phi)} (1+y^2)\Vert \bm x\Vert_2^2}{\left(1+e^{\sqrt{(1+y^2)\Vert \bm x\Vert_2^2} \Vert \bm \theta \Vert_2 \cos(\phi)}\right)^2}\right] \bm I_{2d}\\
    &\preceq \mathbb{E}_{\bm X,Y, \Phi}\left[ 
    e^{-\left\vert\sqrt{(1+y^2)\Vert \bm x\Vert_2^2} \Vert \bm \theta \Vert_2 \cos(\phi)\right\vert} (1+y^2)\Vert \bm x\Vert_2^2\right] \bm I_{2d}.
\end{align*}
Denoting $s := \left\vert \sqrt{(1+y^2)\Vert \bm x\Vert_2^2}\right\vert$, we can write the above expectation as 
\begin{align*}
    8 \int_{0}^{\pi/2}\int_{0}^{\infty} s^2e^{-s \Vert \bm\theta\Vert_2\cos(\phi)} p(s,\phi)ds d\phi.
\end{align*}
Subsequently, the idea is bound $p(s,\phi) = p(s) p(\phi|s)$ in a way that makes integration easy.

The case where $x, w, \beta \in \mathbb{R}$ is fairly easy. Under this scenario, $\bm x \sim \mathcal{N}(0,1)$ and $\bm I_{\bm x, y, z|\bm \theta}$ can be upper-bounded as 
\begin{align*}
    \bm I_{z |\bm x, y, \bm \theta} &= \mathbb{E}_{X, Y}\left[\begin{pmatrix}
        x^2 & yx^2\\
        yx^2 & y^2 x^2
    \end{pmatrix}\frac{e^{x(w + y \beta)}}{\left(1+e^{x(w + y \beta)}\right)^2}\right]\\
    & \leq \mathbb{E}_{X,Y}\left[x^2(1 + y^2) e^{-\left\vert x (w + y \beta)\right\vert}\right] \bm I_2.
\end{align*}
Now, recall that $y \in \{-1,1\}$, $P(y|x) \leq 1$, $p(x) = \frac{e^{-x^2/2}}{\sqrt{2\pi}} \leq \frac{2e^{-\vert x\vert}}{\sqrt{2\pi}}$, and see that
\begin{align*}
    \bm I_{z |\bm x, y, \bm \theta} &=\mathbb{E}_X \left[2x^2\left( e^{-\left\vert x(w - \beta)\right\vert} P(y = 1 | x) + e^{-\left\vert x(w + \beta)\right\vert}P(y = -1 | x)\right)\right]\\
    &\leq \mathbb{E}_X \left[2x^2\left( e^{-\left\vert x(w - \beta)\right\vert} + e^{-\left\vert x(w + \beta)\right\vert}\right)\right]\\
    &= 4 \int_0^{\infty} x^2\left( e^{-\left\vert x(w - \beta)\right\vert} + e^{-\left\vert x(w + \beta)\right\vert}\right) p(x) dx\\
    &\leq 4 \int_0^{\infty} x^2\left( e^{- x(\vert w - \beta\vert+1)} + e^{- x(\vert w + \beta\vert+1)}\right) dx\\
    &\leq 8 \left(\frac{1}{(1+\vert w - \beta\vert)^3} + \frac{1}{(1+\vert w + \beta\vert)^3}\right)\\
    &\leq \mathcal{O}\left(\frac{1}{(1+\vert w - \beta\vert)^3} + \frac{1}{(1+\vert w + \beta\vert)^3}\right).
\end{align*}
Subsequently, together with the fact that $\bm I_{\bm x, y, z, \bm \theta}^{-1} \leq \max\left\{\mathcal{O}\left(\Vert \bm w \Vert^3_2\right), \left(\Vert \bm \beta \Vert^3_2\right)\right\}$, it holds that the eigenvalues of the MIM are upper-bounded by 
\begin{align*}
     \max\left\{\mathcal{O}\left(\left(\frac{w}{1+\vert w - \beta\vert}\right)^3 + \left(\frac{w}{1+\vert w + \beta\vert}\right)^3\right), \mathcal{O}\left(\left(\frac{\beta}{1+\vert w - \beta\vert}\right)^3 + \left(\frac{\beta}{1+\vert w + \beta\vert}\right)^3\right)\right\}.
\end{align*}
This special case is closely related to the case where $\bm \beta$ is parallel to $\bm w$; a similar approach will work.

\newpage
\newpage
\section{Additional Experiments}
\label{app:Experiments}

\subsection{Experiment on Grayscale CIFAR-10}
\label{app:CIFAR exp}

In this section, we consider an additional mini-batch experiment on grayscale CIFAR-10 with a $5$-component MoE. The MoE to be trained consists of individual experts that each consist of a single hidden layer MLP with hidden dimension $100$ and ReLU activation. The gating function also consists of a single hidden layer MLP with hidden dimension $100$ and ReLU activation. We randomly initialize each linear layer to have rows that are unit-norm and execute the algorithms on the same datasets and with the same initializations. For Gradient EM, the only additional code needed over GD is to define the EM Loss function appropriately, and then perform a Gradient Step on the Gating parameters and the Expert parameters separately as describe in Algorithm \ref{alg: Gradient EM MOE}. For EM, for each iteration, we perform several gradient steps in an inner loop to approximately recover the solutions to the sub-problems described in \eqref{eqn:EM Iteration}. We report our findings for the mini-batch iteration of the respective algorithms in Figure \ref{fig: Accuracy, CIFAR}.

We report the respective final test accuracy and cross-entropy loss values after $50$ epochs of EM, Gradient EM and GD for fitting a $5$-component MoE on the CIFAR-10. We see that EM boasts a much improved final test accuracy of $41.6\%$. Meanwhile, Gradient EM also registers an improvement over GD at $37.4\%$. Finally, GD obtained a final accuracy of $34.5\%$. While these accuracies are themselves not good, the challenge was to find an optimal partitioning of the data and utilize very weak experts. In Figure \ref{fig: Accuracy, CIFAR}, we report the progress made on the accuracy for the test set over the $50$ epochs, averaged over $25$ instances. As was observed in our synthetic experiment and Fashion MNIST experiments, EM takes considerably less iterations to fit the mixture than both Gradient EM and GD, where the former also takes considerably less time to fit the mixture than GD. To validate our results further, we perform a paired t-test. For EM and Gradient EM compared to GD, we obtain a T-statistic $\geq 22$ indicating that the difference in final accuracy is statistically significant (p-value $\sim 0.000$).

\begin{figure}[h!]
        \centering
      \includegraphics[width=0.9\linewidth]{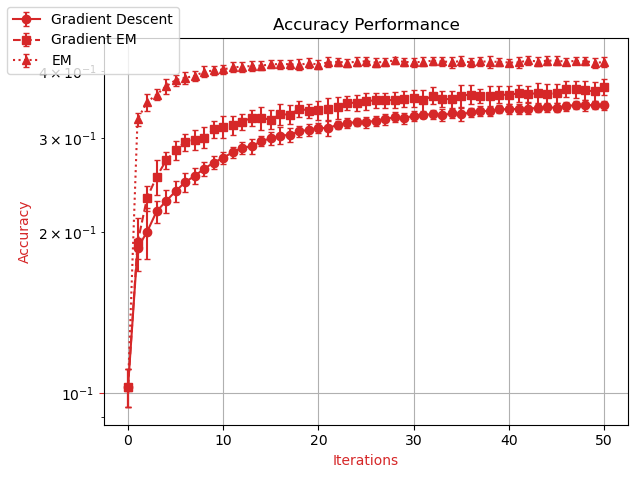}
    \vspace{-6mm}
    \caption{Convergence in predicted label Accuracy for different optimization methods.}
    \label{fig: Accuracy, CIFAR}
\end{figure}

\section{Algorithms}
\label{app:Algorithms}
In this section, we provide explicit formulations of EM, Gradient EM and Gradient Descent for the context of MoE optimization and EM for deep and sparse MoE.

\subsection{EM for MoE}
\label{app:EM for MoE}
\textbf{Expectation-Maximization (EM):} EM takes a structured approach to minimizing the objective $\mathcal{L}(\bm \theta)$ in~\eqref{eqn: log-lik}. Each iteration of EM is decomposed into two steps as follows. The first step is called ``expectation": For current parameter estimate $\bm \theta^t$, we compute the expectation of the complete-data log-likelihood with respect to the latent variables, using the current parameter estimates $\bm \theta^t$ and denote it by $Q(\bm\theta|\bm\theta^t)$, i.e., 
\begin{equation}\label{eqn:EM objective2}
    Q(\bm\theta|\bm\theta^t) = - \mathbb{E}_{X,Y} \left[\mathbb{E}_{Z|\bm x,y;\bm \theta^t}[\log p(\bm x, y,z;\bm\theta)]\right].
\end{equation}
Then, in the second step called ``maximization", we simply minimize the objective $Q(\bm\theta|\bm\theta^t)$ (or maximize $-Q(\bm\theta|\bm\theta^t)$) with respect to $\bm \theta \in \Omega$ and obtain our new parameter as
\begin{equation}\label{eqn:EM Iteration2}
    \bm\theta^{t+1} := \argmin_{\bm\theta \in \Omega} Q(\bm\theta|\bm\theta^t).
\end{equation}
For MoE described in Section \ref{sec: MOE}, $\log p(y,z|\bm x;\bm\theta) = \log p(y|z,\bm x;\bm \beta) + \log p(z|\bm x;\bm w)$. It follows that the EM objective \eqref{eqn:EM objective} is linearly separable in the parameters $\bm\beta$ and $\bm w$. Thus, we can rewrite $ Q(\bm\theta|\bm\phi)$ as the sum of two functions that depend only on $\bm \beta$ and $\bm w$, respectively. Subsequently, the EM update \eqref{eqn:EM Iteration} is obtained as the concatenation $\bm\theta^{t+1} = (\bm w^{t+1}, \bm \beta^{t+1})^\top$, where
\begin{align*}
    &\bm w^{t+1}=\argmin_{\bm w \in \mathbb{R}^d} -\mathbb{E}_{\bm X, Y}\left[\mathbb{E}_{Z|\bm x,y;\bm\theta^t}\left[\log p(z|\bm x;\bm w)\right]\right],\\
    &\bm \beta^{t+1}=\argmin_{\bm\beta \in \mathbb{R}^d} - \mathbb{E}_{\bm X, Y}\left[\mathbb{E}_{Z|\bm x,y;\bm\theta^t}\left[\log p(y|z,\bm x;\bm\beta)\right]\right].
\end{align*}
\begin{algorithm}
\caption{EM for Mixture of Experts}
\label{alg: EM MOE}
\begin{algorithmic}[1] 
\State \textbf{Input:} Initial $\bm{\theta}^{1} \in \Omega$, data: $(\bm{X}, Y) \sim p(\bm{x}, y; \bm{\theta}^*)$
\For{$t = 1$ to $T$}
    \State \textbf{$\bm{\theta}$-Update:} Obtain $\bm{\theta}^{t+1}$ as
    \State \hspace{1em} $\bm{\theta}^{t+1} \gets \arg\min_{\bm{\theta} \in \Omega} Q(\bm{\theta} \mid \bm{\theta}^t)$
\EndFor
\State \textbf{Output:} $\bm{\theta}^{T} = (\bm{w}^{T}, \bm{\beta}^{T})$
\end{algorithmic}
\end{algorithm}

\subsection{Gradient EM for MoE}
\label{app:GradEM for MoE}
\textbf{Gradient EM.} Whereas EM performs the global minimization of the EM objective given in \eqref{eqn:EM objective2}, Gradient EM obtains its next parameter iterate as the concatenation of two gradient updates on the sub objectives,
\begin{align}
    & -\mathbb{E}_{\bm X, Y}\left[\mathbb{E}_{Z|\bm x,y;\bm\theta^t}\left[\log P(z|\bm x;\bm w)\right]\right]\label{eqn: GradEM1}\\
    & - \mathbb{E}_{\bm X, Y}\left[\mathbb{E}_{Z|\bm x,y;\bm\theta^t}\left[\log p(y|z,\bm x;\bm\beta)\right]\right]\label{eqn: GradEM2}
\end{align}

where the EM objective is given as the summation of \eqref{eqn: GradEM1} and \eqref{eqn: GradEM2}.

\begin{algorithm}[H]
\caption{Gradient EM for MoE}
\label{alg: Gradient EM MOE}
\begin{algorithmic}[1]
\State \textbf{Input:} Initial $\bm{\theta}^{1} \in \Omega$, data: $(\bm{X}, Y) \sim p(\bm{x}, y; \bm{\theta}^*)$, step-sizes: $\gamma_1, \gamma_2 \in (0, \infty)$
\For{$t = 1$ to $T$}
    \State \textbf{$\bm{\beta}$-Update:} 
    \State \hspace{1em} $\bm{\beta}^{t+1} \gets \bm{\beta}^t + \gamma_1 \, \mathbb{E}_{\bm{X}, Y} \, \mathbb{E}_{Z|\bm{x}, y; \bm{\theta}^t} \left[ \frac{\partial}{\partial \bm{\beta}} \log p(y \mid z, \bm{x}; \bm{\beta}) \right]$
    \State \textbf{$\bm{w}$-Update:} 
    \State \hspace{1em} $\bm{w}^{t+1} \gets \bm{w}^t + \gamma_2 \, \mathbb{E}_{\bm{X}, Y} \, \mathbb{E}_{Z|\bm{x}, y; \bm{\theta}^t} \left[ \frac{\partial}{\partial \bm{w}} \log P(z \mid \bm{x}; \bm{w}) \right]$
\EndFor
\State \textbf{Output:} $\bm{\theta}^{T} = (\bm{w}^{T}, \bm{\beta}^{T})$
\end{algorithmic}
\end{algorithm}

\subsection{Gradient Descent for MoE}
\label{app:GD for MoE}
\textbf{Gradient Descent.} Gradient descent is given as the global minimizer of the first order approximation of $\mathcal{L}(\bm \theta)$ at $\bm \theta$ plus a quadratic regularizer, i.e.,
\begin{equation*}
    \mathcal{L}(\bm \theta^t) + \langle \nabla \mathcal{L}(\bm\theta^t), \bm\theta - \bm\theta^t \rangle + \frac{1}{2\eta}\Vert \bm\theta - \bm\theta^t \Vert_2^2.
\end{equation*}
Differentiating, and solving for equality at $0$ yields the well known gradient update.

\begin{algorithm}[H]
\caption{Gradient Descent for MoE}
\label{alg: Gradient Descent MOE}
\begin{algorithmic}[1]
\State \textbf{Input:} Initial $\bm{\theta}^{1} \in \Omega$, data: $(\bm{X}, Y) \sim p(\bm{x}, y; \bm{\theta}^*)$, step-size: $\gamma \in \mathbb{R}^{+}$
\For{$t = 1$ to $T$}
    \State \textbf{$\bm{\theta}$-Update:}
    \State \hspace{1em} $\bm{\theta}^{t+1} \gets \bm{\theta}^t - \gamma \nabla \mathcal{L}(\bm{\theta}^t)$
\EndFor
\State \textbf{Output:} $\bm{\theta}^{T} = (\bm{w}^{T}, \bm{\beta}^{T})$
\end{algorithmic}
\end{algorithm}

\subsection{EM for Deep and Sparse MoE}
\label{app:large-scale and sparse MoE}
We begin by formalizing the concepts of Deep and Sparse Mixtures of Experts (MoE), extending the classical MoE framework.

\textbf{Deep MoE:}
A \emph{deep MoE} is a composition of $l \geq 2$ MoE blocks, denoted as $\text{MoE}_1, \text{MoE}_2, \dots, \text{MoE}_l$, stacked sequentially. Each block $\text{MoE}_i$ consists of a gating function $g_i(\bm x; \bm w_i)$ and a set of $k$ experts $\{f_{i,j}(\bm x; \bm \beta_{i,j})\}_{j=1}^k$. The input $\bm x$ is processed through each MoE block in sequence, producing intermediate representations $\bm h_1, \bm h_2, \dots, \bm h_l$, where:
\begin{align*}
    \bm h_1 &= \sum_{j=1}^k g_1(\bm x; \bm w_1)_j f_{1,j}(\bm x; \bm \beta_{1,j}), \\
    \bm h_i &= \sum_{j=1}^k g_i(\bm h_{i-1}; \bm w_i)_j f_{i,j}(\bm h_{i-1}; \bm \beta_{i,j}) \quad \text{for } i = 2, \dots, l.
\end{align*}
The final output is $\bm y = \bm h_l$.

\textbf{Sparse MoE:}
The \emph{Sparse MoE} is a variant of the MoE that is popular in deep MoE applications where, at each MoE block, only a small subset of experts (typically one or a few) are activated per input both during training and inference. This is achieved via a deterministic or stochastic selection mechanism (e.g., top-$k$ gating), resulting in a sparse latent variable $z = (z_1, z_2, \dots, z_l) \in [k]^l$ that encodes the sequence of selected experts across the $l$ blocks.

\textbf{EM for Deep and Sparse MoE:}
We now describe an EM-like algorithm for training deep and sparse MoE models. The key idea is to treat the expert selection sequence $z = (z_1, \dots, z_l)$ as a latent variable and optimize the expected complete-data log-likelihood. We let $\bm \theta = (\bm w_1, \dots, \bm w_l, \bm \beta_{1,1}, \dots, \bm \beta_{l,k})$ denote all model parameters. 

In the classical \textit{non-sparse MoE setting} where we do not choose a subset of experts to go through at each layer, the latent variables can only be resolved at the last layer. The EM surrogate is then given as follows:
\begin{equation}
    Q(\bm\theta|\bm\theta^t) = - \mathbb{E}_{X,Y} \left[\mathbb{E}_{Z_l|\bm x,y;\bm \theta^t}[\log p( y |\bm x,z_l;\bm\theta)P(z_l|\bm x;\bm\theta)]\right].
\end{equation}
where the given probability functions is given at the last layer as 
\begin{align*}
    p( y |\bm x,z_1,...,z_l;\bm\theta) &= p\left(y| \bm h_{l-1}\right) \\
    P(z_1,...,z_l|\bm x;\bm\theta) &=  p(z_l|\bm h_{l-1}; \theta).
\end{align*}
In the \textit{sparse MoE setting} where $p$ experts are chosen at each layer, we re-define the latent variables $Z_i$ to be defined over the set of all possible ordered combinations of $p$ experts that could have been chosen out of the $k$ available experts. The latent space embeddings is now given to be $\bm \hat{h}_i := \sum_{j \in z_i} g_i(\bm h_{i-1}; \bm w_i)_j f_{i,j}(\bm h_{i-1}; \bm \beta_{i,j})$. The EM surrogate is given by 
\begin{equation}
    Q(\bm\theta|\bm\theta^t) = - \mathbb{E}_{X,Y} \left[\mathbb{E}_{Z_1,..,Z_l|\bm x,y;\bm \theta^t}[\log p( y |\bm x,z_1,...,z_l;\bm\theta)P(z_1,...,z_l|\bm x;\bm\theta)]\right].
\end{equation}
where the given probability functions are decomposed per layer as
\begin{align*}
    p( y |\bm x,z_1,...,z_l;\bm\theta) &= p\left(y|\hat{\bm h}_{l-1}\right) \\
    P(z_1,...,z_l|\bm x;\bm\theta) &= P(z_1|x;\theta)\cdot\cdot\cdot p(z_l|z_1,...,z_{l-1}; \theta).
\end{align*}

In practice, constructing the above EM surrogate this would require to make a combinatorial number of forward passes through the model to evaluate the surrogate loss function. To remediate this, we will use the strategy to only evaluate the loss on the greedily chosen sequence through the model as is the case for GD-type solutions for Sparse MoE.

\end{document}